\numberwithin{equation}{section}
\newtheorem{theorem}{Theorem}[section]
\newtheorem{lemma}[theorem]{Lemma}
\newtheorem{assumption}[theorem]{Assumption}
\newtheorem{definition}[theorem]{Definition}
\newtheorem{corollary}[theorem]{Corollary}
\newcommand{\bR}{\mathbb{R}}
\newcommand{\bZ}{\mathbb{Z}}
\newcommand{\bP}{\mathbb{P}}
\newcommand{\calC}{\mathcal{C}}
\newcommand{\calF}{\mathcal{F}}
\newcommand{\calI}{\mathcal{I}}
\newcommand{\calJ}{\mathcal{J}}
\newcommand{\calG}{\mathcal{G}}
\newcommand{\calN}{\mathcal{N}}
\title[Rethinking GNNs' Capacity for Branching Strategy]{Rethinking the Capacity of Graph Neural Networks for Branching Strategy}
\author{Ziang Chen}
\address{(ZC) Department of Mathematics, Massachusetts Institute of Technology, Cambridge, MA 02139.}
\email{ziang@mit.edu}
\author{Jialin Liu}
\address{(JL) Department of Statistics and Data Science,
University of Central Florida, Orlando, FL 32816.}
\email{jialin.liu@ucf.edu}
\author{Xiaohan Chen}
\address{(XC) Decision Intelligence Lab, Damo Academy, Alibaba US, Bellevue, WA 98004.}
\email{xiaohan.chen@alibaba-inc.com}
\author{Xinshang Wang}
\address{(XW) Decision Intelligence Lab, Damo Academy, Alibaba US, Bellevue, WA 98004.}
\email{xinshang.w@alibaba-inc.com}
\author{Wotao Yin}
\address{(WY) Decision Intelligence Lab, Damo Academy, Alibaba US, Bellevue, WA 98004.}
\email{wotao.yin@alibaba-inc.com}
\date{\today}
\thanks{A major part of the work of ZC was completed during his internship at Alibaba US DAMO Academy.}
\begin{document}
\begin{abstract}
    Graph neural networks (GNNs) have been widely used to predict properties and heuristics of mixed-integer linear programs (MILPs) and hence accelerate MILP solvers. This paper investigates the capacity of GNNs to represent strong branching (SB), the most effective yet computationally expensive heuristic employed in the branch-and-bound algorithm. In the literature, message-passing GNN (MP-GNN), as the simplest GNN structure, is frequently used as a fast approximation of SB and we find that not all MILPs's SB can be represented with MP-GNN. We precisely define a class of ``MP-tractable'' MILPs for which MP-GNNs can accurately approximate SB scores. Particularly, we establish a universal approximation theorem: for any data distribution over the MP-tractable class, there always exists an MP-GNN that can approximate the SB score with arbitrarily high accuracy and arbitrarily high probability, which lays a theoretical foundation of the existing works on imitating SB with MP-GNN. For MILPs without the MP-tractability, unfortunately, a similar result is impossible, which can be illustrated by two MILP instances with different SB scores that cannot be distinguished by any MP-GNN, regardless of the number of parameters. Recognizing this, we explore another GNN structure called the second-order folklore GNN (2-FGNN) that overcomes this limitation, and the aforementioned universal approximation theorem can be extended to the entire MILP space using 2-FGNN, regardless of the MP-tractability. A small-scale numerical experiment is conducted to directly validate our theoretical findings.
\end{abstract}

\maketitle

\section{Introduction} 
\label{sec:intro}

Mixed-integer linear programming (MILP) involves optimization problems with linear objectives and constraints, where some variables must be integers. These problems appear in various fields, from logistics and supply chain management to planning and scheduling, and are in general NP-hard. The branch-and-bound (BnB) algorithm \cite{land1960automatic} is the core of a MILP solver. It works by repeatedly solving relaxed versions of the problem, called linear relaxations, which allow the integer variables to take on fractional values. If a relaxation's solution satisfies the integer requirements, it is a valid solution to the original problem. Otherwise, the algorithm divides the problem into two subproblems and solves their relaxations. This process continues until it finds the best solution that meets all the constraints.

\textbf{Branching} is the process of dividing a linear relaxation into two subproblems. When branching, the solver selects a variable with a fractional value in the relaxation's solution and create two new subproblems. In one subproblem, the variable is forced to be less than or equal to the nearest integer below the fractional value. In the other, it is bounded above the fractional value. The branching variable choice is critical because it can impact the solver's efficiency by orders of magnitude.

A well-chosen branching variable can lead to a significant improvement in the lower bound, which is a quantity that can quickly prove that a subproblem and its further subdivisions are infeasible or not promising, thus reducing the total number of subproblems to explore. This means fewer linear relaxations to solve and faster convergence to the optimal solution. On the contrary, a poor choice may result in branches that do little to improve the bounds or reduce the solution space, thus leading to a large number of subproblems to be solved, significantly increasing the total solution time.
The choice of which variable to branch on is a pivotal decision. This is where \textbf{branching strategies}, such as strong branching and learning to branch, come into play, evaluating the impact of different branching choices before making a decision.

\textbf{Strong branching (SB)} \cite{applegate1995finding} is a sophisticated strategy to select the \emph{most promising branches} to explore.  In SB, before actually performing a branch, the solver tentatively branches on several variables and calculates the potential impact of each branch on the objective function. This ``look-ahead'' strategy evaluates the quality of branching choices by solving linear relaxations of the subproblems created by the branching. The variable that leads to the most significant improvement in the objective function is selected for the actual branching. Usually recognized as the most effective branching strategy, \textit{SB often results in a significantly lower number of subproblems to resolve during the branch-and-bound (BnB)} process compared to other methods \cite{gamrath2018measuring}. As such, SB is frequently utilized directly or as a fundamental component in cutting-edge solvers.

While SB can significantly reduce the size of the BnB search space, it comes with \textit{high computational cost}: evaluating multiple potential branches at each decision point requires solving many LPs. This leads to a \textit{trade-off} between the time spent on SB and the overall time saved due to a smaller search space. In practice, MILP solvers use heuristics to limit the use of SB to where it is most beneficial.

\textbf{Learning to branch (L2B)} introduces a new approach by incorporating machine learning (ML) to develop branching strategies, offering new solutions to address this trade-off.  This line of research begins with imitation learning~\cite{khalil2016learning,alvarez2017machine,balcan2018learning,gasse2019exact,gupta2020hybrid,zarpellon2021parameterizing,gupta2022lookback,lin2022learning,yang2022learning}, where models, including SVM, decision tree, and neural networks, are trained to mimic SB outcomes based on the features of the underlying MILP. They aim to \emph{create a computationally efficient strategy that achieves the effectiveness of SB on specific datasets}. 
Furthermore, in recent reinforcement learning approaches, mimicking SB continues to take crucial roles in initialization or regularization~\cite{qu2022improved,zhang2022deep}.

While using a heuristic (an ML model) to approximate another heuristic (the SB procedure) may seem counterintuitive, it is important to recognize the potential benefits. The former can significantly reduce the time required to make branching decisions as effectively as the latter. As MILPs become larger and more complex, the computational cost of SB grows at least cubically, but some ML models grow quadratically, even just linearly after training on a set of similar MILPs. Although SB can theoretically solve LP relaxations in parallel, the time required for different LPs may vary greatly, and there is a lack of GPU-friendly methods that can effectively utilize starting bases for warm starts. In contrast, ML models, particularly GNNs, are more amenable to efficient implementation on GPUs, making them a more practical choice for accelerating the branching variable selection process. Furthermore, additional problem-specific characteristics can be incorporated into the ML model, allowing it to make more informed branching decisions tailored to each problem instance.

\textbf{Graph neural network (GNN)} stands out as an effective class of ML models for L2B, surpassing other models like SVM and MLP, due to the excellent scalability and the permutation-invariant/equivariant property. To utilize a GNN on a MILP, one first conceptualizes the MILP as a graph and the GNN is then applied to that graph and returns a branching decision. This approach~\cite{gasse2019exact,ding2020accelerating} has gained prominence in not only L2B but various other MILP-related learning tasks~\cite{nair2020solving,wu2021learning,scavuzzo2022learning,paulus2022learning,chi2022deep,liu2022learning,khalil2022mip,labassi2022learning,falkner2022learning,song2022learning,hosny2023automatic,wang2023learning,turner2023adaptive,ye2023gnn,marty2023learning}. 
More details are provided in Section \ref{sec:preliminary}.

Despite the widespread use of GNNs on MILPs, a theoretical understanding remains largely elusive. A vital concept for any ML model, including GNNs, is its \textbf{capacity} or \textbf{expressive power}~\cite{sato2020survey,Li2022,jegelka2022theory}, which in our context is their ability to accurately approximate the mapping from MILPs to their SB results.
Specifically, this paper aims to answer the following question:
\begin{equation}
\label{core-problem}
\begin{aligned}
    & \textit{Given a distribution of MILPs, is there a GNN model capable of mapping each} \\
    & \textit{MILP problem to its strong branching result with a specified level of precision?}
\end{aligned}
\end{equation}

\medskip

\paragraph{\textbf{Related works and our contributions}}
While the capacity of GNNs for general graph tasks, such as node and link prediction or function approximation on graphs, has been extensively studied \cite{xu2019powerful,maron2019universality,chen2019equivalence,keriven2019universal,sato2019approximation,Loukas2020What,azizian2020expressive,geerts2022expressiveness,zhang2023rethinking},  their capacities in approximating SB remains largely unexplored. The closest studies~\cite{chen2022representing-lp,chen2022representing-milp} 
have explored GNNs' ability to represent properties of linear programs (LPs) and MILPs, such as feasibility, boundedness, or optimal solutions, but have not specifically focused on branching strategies. Recognizing this gap, our paper makes the following contributions:
\begin{itemize}
    \item In the context of L2B using GNNs, we first focus on the most widely used type: message-passing GNNs (MP-GNNs). Our study reveals that MP-GNNs can reliably predict SB results, but only for a specific class of MILPs that we introduce as \emph{message-passing-tractable} (MP-tractable). We prove that for any distribution of MP-tractable MILPs, there exists an MP-GNN capable of accurately predicting their SB results. This finding establishes a theoretical basis for the widespread use of MP-GNNs to approximate SB results in current research.
    \item Through a counter-example, we demonstrate that MP-GNNs are \emph{incapable} of predicting SB results beyond the class of MP-tractable MILPs. The counter-example consists of two MILPs with distinct SB results to which all MP-GNNs, however, yield identical branching predictions.
    \item For general MILPs, we explore the capabilities of \textit{second-order folklore GNNs (2-FGNNs)}, a type of higher-order GNN with enhanced expressive power. 
    Our results show that 2-FGNNs can reliably answer question \eqref{core-problem} positively, effectively replicating SB results across any distribution of MILP problems, surpassing the capabilities of standard MP-GNNs.
\end{itemize}
Overall, as a series of works have empirically shown that learning an MP-GNN as a fast approximation of SB significantly benefits the performance of an MILP solver on specific data sets~\cite{khalil2016learning,alvarez2017machine,balcan2018learning,gasse2019exact,gupta2020hybrid,zarpellon2021parameterizing,gupta2022lookback,lin2022learning,yang2022learning}, our goal is to determine whether there is room, in theory, to further understand and improve the GNNs' performance on this task.

\section{Preliminaries and problem setup}
\label{sec:preliminary}

We consider the MILP defined in its general form as follows:
\begin{equation}\label{eq:MILP}
    \min_{x\in\bR^n} ~~ c^\top x,\quad \textup{s.t.} ~~ Ax\circ b,~~ \ell \leq x\leq u,~~ x_j\in \bZ,\ \forall~j \in I,
\end{equation}
where $A\in\bR^{m\times n}$, $b\in\bR^m$, $c\in\bR^n$, $\circ\in\{\le,=,\ge\}^m$ is the type of constraints, $\ell\in(\{-\infty\}\cup \bR)^n $ and $u\in(\bR\cup\{\infty\})^n$ are the lower bounds and upper bounds of the variable $x$, and $I\subset\{1,2,\dots,n\}$ identifies which variables are constrained to be integers.

\medskip

\paragraph{\textbf{Graph Representation of MILP}}
Here we present an approach to represent MILP as a bipartite graph, termed the \textit{MILP-graph}. This conceptualization was initially proposed by \cite{gasse2019exact} and has quickly become a prevalent model in ML for MILP-related tasks. The MILP-graph is defined as a tuple $G = (V, W, A, F_V, F_W)$, where the components are specified as follows: $V = \{1,2,\dots,m\}$ and $W = \{1,2,\dots,n\}$ are sets of nodes representing the constraints and variables, respectively. An edge $(i,j)$ connects node $i\in V$ to node $j\in W$ if the corresponding entry $A_{ij}$ in the coefficient matrix of \eqref{eq:MILP} is non-zero, with $A_{ij}$ serving as the edge weight. $F_V$ are features/attributes of constraints, with features $v_i= (b_i,\circ_i)$ attached to node $i\in V$. $F_W$ are features/attributes of variables, with features $w_j = (c_j,\ell_j,u_j,\delta_{I}(j))$ attached to node $j\in W$, where $\delta_I(j)\in\{0,1\}$  indicates whether the variable $x_j$ is integer-constrained.

We define $\calN_W(i) =: \{j\in W : A_{ij}\neq 0\}\subset W$ as the neighbors of $i\in V$ and similarly define $\calN_V(j) =: \{i\in V : A_{ij} \neq 0\} \subset V$.
This graphical representation \textit{completely} describes a MILP's information, allowing us to interchangeably refer to a MILP and its graph throughout this paper. An illustrative example is presented in Figure \ref{fig:milp-graph}. We also introduce a space of MILP-graphs:

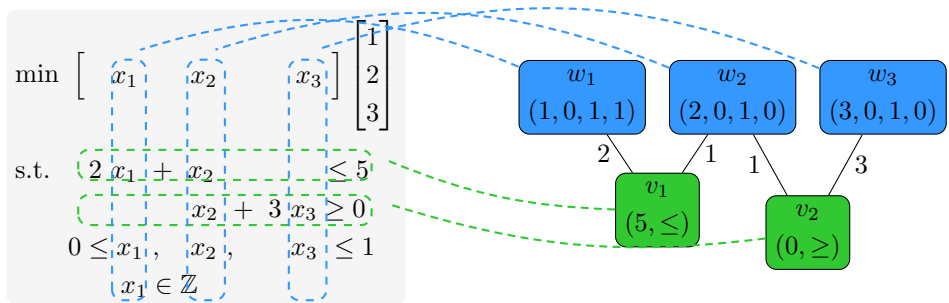
\begin{figure}
    \centering
    \begin{tikzpicture}

\definecolor{modernblue}{RGB}{52, 152, 255}   
\definecolor{modernred}{RGB}{255, 76, 60}     
\definecolor{moderngreen}{RGB}{50, 200, 50}  
\definecolor{moderngray}{RGB}{245,245,245} 

\newcommand{\xone}{{\color{black}x_1}}
\newcommand{\xtwo}{{\color{black}x_2}}
\newcommand{\xthr}{{\color{black}x_3}}

    \node[draw, rounded corners, align=center, fill=modernblue] (x1) at (4, 1.8) {$w_1$\\$(1,0,1,1)$};
    \node[draw, rounded corners, align=center, fill=modernblue] (x2) at (6, 1.8) {$w_2$\\$(2,0,1,0)$};
    \node[draw, rounded corners, align=center, fill=modernblue] (x3) at (8, 1.8) {$w_3$\\$(3,0,1,0)$};

    \node[draw, rounded corners, align=center, fill=moderngreen] (c1) at (5, 0.3) {$v_1$\\$(5, \leq)$};
    \node[draw, rounded corners, align=center, fill=moderngreen] (c2) at (7, 0) {$v_2$\\$(0, \geq)$};

    \draw (c1) -- (x1) node[midway, left] {2};
    \draw (c1) -- (x2) node[midway, right] {1};
    \draw (c2) -- (x2) node[midway, left] {1};
    \draw (c2) -- (x3) node[midway, right] {3};

    \begin{scope}[on background layer]
        \node[align=left, rounded corners, fill=moderngray] (obj) at (-1,1) {
            $\setlength{\arraycolsep}{1.5pt} 
            \min ~~ \begin{bmatrix}
               \ \ \ \xone & \quad\ \ \xtwo & \quad\quad\ \  \xthr ~~~
            \end{bmatrix} \begin{bmatrix}
                1 \\ 2 \\ 3
            \end{bmatrix}  $\\ [10pt]
            s.t. \ \ \  $2 ~ \xone ~ + ~ \xtwo \quad\quad\quad\quad \leq 5$ \\
            \qquad\qquad\qquad\ \  $\xtwo ~~ + ~ 3~ \xthr \geq 0$ \\
            \qquad $0 \leq \xone ~~~ ,\ \ \ ~~~ \xtwo ~~~~~ ,\ \ \ \ \ \  ~~~~ \xthr ~ \leq 1$\\
            ~~~~~~~~~~~~~ \qquad\quad\ \ $x_1 \in \bZ$
        };
    \end{scope}

        \draw[dashed, rounded corners, moderngreen, thick] (-2.7,0.7) rectangle (1.2,1.1);
        \draw[dashed, rounded corners, moderngreen, thick] (-2.7,0.1) rectangle (1.2,0.5);
        \node at (1.3, 1) (cc1) {};
        \node at (1.35, 0.4) (cc2) {};
        \draw[densely dashed, moderngreen, thick] (c1) to[bend left=10] (cc1);
        \draw[densely dashed, moderngreen, thick] (c2) to[bend left=12] (cc2);

        \draw[dashed, rounded corners, thick, modernblue] (-2.25,-0.9) rectangle (-1.8,2.3);
        \draw[dashed, rounded corners, thick, modernblue] (-1.25,-0.9) rectangle (-0.75,2.3);
        \draw[dashed, rounded corners, thick, modernblue] (0.1,-0.9) rectangle (0.6,2.3);
        \node at (-2, 2.4) (xx1) {};
        \draw[densely dashed, thick, modernblue] (x1) to[bend right=20] (xx1);
        \node at (-1, 2.4) (xx2) {};
        \draw[densely dashed, thick, modernblue] (x2) to[bend right=20] (xx2);
        \node at (0.35, 2.4) (xx3) {};
        \draw[densely dashed, thick, modernblue] (x3) to[bend right=20] (xx3);

\end{tikzpicture}
    \caption{An illustrative example of MILP and its graph representation.}
    \label{fig:milp-graph}
\end{figure}

\begin{definition}[Space of MILP-graphs]
    We use $\calG_{m,n}$ to denote the collection of all MILP-graphs induced from MILPs of the form \eqref{eq:MILP} with $n$ variables and $m$ constraints.\footnote{Rigorously, the space $\calG_{m,n}\cong \bR^{m\times n}\times \bR^n\times\bR^m\times (\bR\cup\{-\infty\})^n\times (\bR\cup\{+\infty\})^n\times\{\leq,=,\geq\}^m\times\{0,1\}^n$ is equipped with product topology, where all Euclidean spaces have standard Eudlidean topologies, discrete spaces $\{-\infty\}$, $\{+\infty\}$, $\{\leq, =, \geq\}$, and $\{0,1\}$ have the discrete topologies, and all unions are disjoint unions.}
\end{definition}

\textbf{Message-passing graph neural networks (MP-GNNs)} 
are a class of GNNs that operate on graph-structured data, by passing messages between nodes in a graph to aggregate information from their local neighborhoods. In our context, the input is an aforementioned MILP-graph $G = (V, W, A, F_V, F_W)$, and each node in $W$ is associated with a real-number output. We use the standard MP-GNNs for MILPs in the literature \cite{gasse2019exact,chen2022representing-milp}. 

Specifically, the initial layer assigns features $s_i^0,t_j^0$ for each node as
\begin{itemize}
    \item $s_i^0 = p^0(v_i)$ for each constraint $i\in V$, and $t_j^0 = q^0(w_j)$ for each variable $j\in W$.
\end{itemize}
Then message-passing layers $l = 1,2,\dots, L$ update the features via
\begin{itemize}
    \item $s_i^l = p^l\big(s_i^{l-1},\sum_{j\in \calN_W(i)} f^l(t_j^{l-1}, A_{ij})\big)$ for each constraint $i\in V$, and
    \item $t_j^l = q^l\big(t_j^{l-1},\sum_{i\in \calN_V(j)} g^l(s_i^{l-1},A_{ij})\big)$ for each variable $j\in W$.
\end{itemize}
Finally, the output layer produces a read-number output $y_j$ for each node $j\in W$:
\begin{itemize}
    \item $y_j = r\big(\sum_{i\in V}s_{i}^L, \sum_{j\in W} t_{j}^L, t_j^L\big)$.
\end{itemize}
In practice, functions $\{p^l,q^l,f^l,g^l\}_{l=1}^L,r,p^0,q^0$ are learnable and usually parameterized with multi-linear perceptrons (MLPs). In our theoretical analysis, we assume for simplicity that those functions are continuous on given domains. The space of MP-GNNs is introduced as follows.
\begin{definition}[Space of MP-GNNs]
    We use $\calF_{\textup{MP-GNN}}$ to denote the collection of all MP-GNNs constructed as above with $p^l,q^l,f^l,g^l,r$ being continuous with $f^l(\cdot,0)\equiv 0$ and $g^l(\cdot,0)\equiv 0$.\footnote{We require $f^l,g^l$ yield $0$ when the edge weight is $0$ to avoid the discontinuity of functions in $\calF_{\textup{MP-GNN}}$.}
\end{definition}
Overall, any MP-GNN $F \in \calF_{\textup{MP-GNN}}$ maps a MILP-graph $G$ to a $n$-dim vector: $y = F(G)\in\bR^n$.
	
\textbf{Second-order folklore graph neural networks (2-FGNNs)} are an extension of MP-GNNs designed to overcome some of the capacity limitations. 
It is proved in \cite{xu2019powerful} the expressive power of MP-GNNs can be measured by the Weisfeiler-Lehman test (WL test~\cite{weisfeiler1968reduction}). To enhance the ability to identify more complex graph patterns, \cite{morris2019weisfeiler} developed high-order GNNs, inspired by high-order WL tests~\cite{cai1992optimal}. Since then, there has been growing literature about high-order GNNs and other variants including high-order folklore GNNs \cite{maron2019provably,geerts2020expressive,geerts2020walk,azizian2020expressive,zhao2022practical,geerts2022expressiveness}. 
Instead of operating on individual nodes of the given graph, 2-FGNNs operate on \textit{pairs of nodes} (regardless of whether two nodes in the pair are neighbors or not) and the neighbors of those pairs. We say two node pairs are neighbors if they share a common node.
Let $G = (V, W, A, F_V, F_W)$ be the input graph. The initial layer performs:
\begin{itemize}
    \item $s_{ij}^0 = p^0(v_i,w_j, A_{ij})$ for each constraint $i\in V$ and each variable $j\in W$, and
    \item $t_{j_1 j_2}^0 = q^0(w_{j_1},w_{j_2},\delta_{j_1 j_2})$ for variables $j_1,j_2\in W$, 
\end{itemize}
where $\delta_{j_1 j_2} = 1$ if $j_1=j_2$ and $\delta_{j_1 j_2} = 0$ otherwise. For internal layers $l = 1,2,\dots, L$, compute
\begin{itemize}
    \item $s_{ij}^l = p^l\big(s_{ij}^{l-1},\sum_{j_1\in W} f^l(t_{j_1j}^{l-1}, s_{i j_1}^{l-1})\big)$ for all $i\in V,j\in W$, and
    \item $t_{j_1 j_2}^l = q^l\big(t_{j_1 j_2}^{l-1},\sum_{i\in V} g^l(s_{i j_2}^{l-1}, s_{i j_1}^{l-1})\big)$ for all $j_1, j_2\in W$.
\end{itemize}
The final layer produces the output $y_j$ for each node $j\in W$:
\begin{itemize}
    \item $y_j = r\big(\sum_{i\in V}s_{ij}^L, \sum_{j_1\in W} t_{j_1 j}^L\big)$.
\end{itemize}
Similar to MP-GNNs, the functions within 2-FGNNs, including $\{p^l,q^l,f^l,g^l\}_{l=1}^L,r,p^0,q^0$, are also learnable and typically parameterized with MLPs. The space of 2-FGNNs is defined with:
\begin{definition}
    We use $\calF_{\textup{2-FGNN}}$ to denote the set of all $2$-FGNNs with continuous $p^l,q^l,f^l,g^l,r$.
\end{definition}
Any $2$-FGNN,
$F \in \calF_{\textup{2-FGNN}}$, maps a MILP-graph $G$ to a $n$-dim vector: $y = F(G)$. While MP-GNNs and 2-FGNNs share the same input-output structure, their internal structures differ, leading to distinct expressive powers. 

\section{Imitating strong branching by GNNs} 
\label{sec:imitation}

In this section, we present some observations and mathematical concepts underlying the imitation of strong branching by GNNs. 
This line of research, which aims to replicate SB strategies through GNNs, has shown promising empirical results across a spectrum of studies \cite{gasse2019exact,gupta2020hybrid,zarpellon2021parameterizing,gupta2022lookback,lin2022learning,yang2022learning,scavuzzo2022learning}, yet it still lacks theoretical foundations. 
Its motivation stems from two key observations introduced earlier in Section \ref{sec:intro}, which we elaborate on here in detail.

\medskip

\paragraph{\textbf{Observation I}} SB is notably effective in reducing the size of the BnB search space. This size is measured by the size of the BnB tree. Here, a ``tree'' refers to a hierarchical structure of ``nodes'', each representing a decision point or a subdivision of the problem. The tree's size corresponds to the number of these nodes. For instance, consider the instance ``neos-3761878-oglio'' from MIPLIB \cite{gleixner2021miplib}. When solved using SCIP \cite{BolusaniEtal2024OO,BolusaniEtal2024ZR} under standard configurations, the BnB tree size is $851$, and it takes $61.04$ seconds to attain optimality. However, disabling SB, along with all branching rules dependent on SB, results in an increased BnB tree size to $35548$ and an increased runtime to $531.0$ seconds.

\medskip

\paragraph{\textbf{Observation II}} SB itself is computationally expensive. In the above experiment under standard settings, SB consumes an average of $70.40\%$ of the total runtime, $42.97$ out of $61.04$ seconds in total. 

Therefore, there is a clear need of approximating SB with efficient ML models. Ideally, if we can substantially reduce the SB calculation time from $42.97$ seconds to a negligible duration while maintaining its effectiveness, the remaining runtime of $61.04 - 42.97 = 18.07$ seconds would become significantly more efficient.

To move forward, we introduce some basic concepts related to SB.

\medskip

\paragraph{\textbf{Concepts for SB}} SB begins by identifying candidate variables for branching, typically those with non-integer values in the solution to the linear relaxation but which are required to be integers. Each candidate is then assigned a \textit{SB score}, a non-negative real number determined by creating two linear relaxations and calculating the objective improvement. A higher SB score indicates the variable has a higher priority to be chosen for branching. Variables that do not qualify as branching candidates are assigned a zero score. Compiling these scores for each variable results in an $n$-dimensional SB score vector, denoted as $\textup{SB}(G)=(\textup{SB}(G)_1,\textup{SB}(G)_2,\dots,\textup{SB}(G)_n)$. 

Consequently, the task of approximating SB with GNNs can be described with a mathematical language: \textit{finding an $F \in \calF_{\textup{MP-GNN}}$ or $F \in \calF_{\textup{2-FGNN}}$ such that $F(G) \approx \textup{SB}(G)$.} Formally, it is:

\noindent\textbf{Formal statement of Problem \eqref{core-problem}:}
\textit{Given a distribution of $G$, is there $F \in \calF_{\textup{MP-GNN}}$ or $F \in \calF_{\textup{2-FGNN}}$ such that $\|F(G) - \textup{SB}(G)\|$ is smaller than some error tolerance with high probability?}

To provide clarity, we present a formal definition of SB scores: 

\begin{definition}[LP relaxation with a single bound change]
	Pick a $G\in\calG_{m,n}$. For any $j\in \{1,2,\dots,n\}$, $\hat{l}_j\in \{-\infty\}\cup\bR$, and $\hat{u}_j\in\bR\cup\{+\infty\}$, we denote by $\textup{LP}(G,j,\hat{l}_j,\hat{u}_j)$ the following LP problem obtained by changing the lower/upper bound of $x_j$ in the LP relaxation of \eqref{eq:MILP}:
	\begin{equation*}
		\min_{x\in\bR^n} ~~ c^\top x,\quad \textup{s.t.} ~~ Ax\circ b,~~ \hat{l}_j\leq x_j\leq \hat{u}_j,~~ l_{j'}\leq x_{j'}\leq u_{j'}\textup{ for }j'\in\{1,2,\dots,n\}\backslash \{j\}.
	\end{equation*}
\end{definition}

\begin{definition}[Strong branching scores] \label{def:SB}
    Let $G\in\calG_{m,n}$ be a MILP-graph associated with the problem \eqref{eq:MILP} whose LP relaxation is feasible and bounded. Denote $f^*_{\textup{LP}}(G)\in\bR$ as the optimal objective value of the LP relaxation of $G$ and denote $x^*_{\textup{LP}}(G)\in\bR^n $ as the optimal solution with the smallest $\ell_2$-norm. The SB score $\textup{SB}(G)_j$ for variable $x_j$ is defined via
    \begin{equation*}
		\textup{SB}(G)_j = \begin{cases}
			0,& \textup{if } j \notin I,\\
			(f^*_{\textup{LP}}(G,j,l_j,\hat{u}_j) - f^*_{\textup{LP}}(G)) \cdot (f^*_{\textup{LP}}(G,j,\hat{l}_j,u_j) - f^*_{\textup{LP}}(G)),& \textup{otherwise},
		\end{cases}
	\end{equation*}
	where $f^*_{\textup{LP}}(G,j,l_j,\hat{u}_j)$ and $f^*_{\textup{LP}}(G,j,\hat{l}_j,u_j)$ are the optimal objective values of $\textup{LP}(G,j,l_j,\hat{u}_j)$ and $\textup{LP}(G,j,\hat{l}_j,u_j)$ respectively, with $\hat{u}_j = \lfloor x^*_{\textup{LP}}(G)_j \rfloor$ being the largest integer no greater than $x^*_{\textup{LP}}(G)_j$ and $\hat{l}_j = \lceil x^*_{\textup{LP}}(G)_j \rceil$ being the smallest integer no less than $x^*_{\textup{LP}}(G)_j$, for $j=1,2,\dots,n$.
\end{definition}

\medskip

\paragraph{\textbf{Remark: LP solution with the smallest $\ell_2$-norm}} We only define the SB score for MILP problems with feasible and bounded LP relaxations; otherwise the optimal solution $x^*_{\textup{LP}}(G)$ does not exist. If the LP relaxation of $G$ admits multiple optimal solutions, then the strong branching score $\textup{SB}(G)$ depends on the choice of the particular optimal solution. To guarantee that the SB score is uniquely defined, in Definition~\ref{def:SB}, we use the optimal solution with the smallest $\ell_2$-norm, which is unique.

\medskip

\paragraph{\textbf{Remark: SB at leaf nodes}} While the strong branching score discussed here primarily pertains to root SB, it is equally relevant to SB at leaf nodes within the BnB framework. By interpreting the MILP-graph $G$ in Definition \ref{def:SB} as representing the subproblems encountered during the BnB process, we can extend our findings to strong branching decisions at any point in the BnB tree. Here, \textit{root SB} refers to the initial branching decisions made at the root of the BnB tree, while \textit{leaf nodes} represent subsequent branching points deeper in the tree, where similar SB strategies can be applied.

\medskip

\paragraph{\textbf{Remark: Other types of SB scores}} Although this paper primarily focuses on the product SB scores (where the SB score is defined as the product of objective value changes when branching up and down), our analysis can extend to other forms of SB scores in \cite{dey2024theoretical}. (Refer to Appendix \ref{sec:other-sb})

\section{Main results}
\label{sec:main-result}

\subsection{MP-GNNs can represent SB for MP-tractable MILPs}
\label{sec:universal-approx-MPGNN}

In this subsection, we define a class of MILPs, named message-passing-tractable (MP-tractable) MILPs, and then show that MP-GNNs can represent SB within this class.

To define MP-tractability, we first present the Weisfeiler-Lehman (WL) test \cite{weisfeiler1968reduction}, a well-known criterion for assessing the expressive power of MP-GNNs~\cite{xu2019powerful}. 
The WL test in the context of MILP-graphs is stated in Algorithm~\ref{alg:WL}. It follows exactly the same updating rule as the MP-GNN, differing only in the local updates performed via hash functions.

\begin{algorithm}[htb!]
	\caption[The WL test for MILP-Graphs]{The WL test for MILP-Graphs}\label{alg:WL}
	\begin{algorithmic}[1]
		\Require A graph instance $G\in \calG_{m,n}$ and iteration limit $L>0$.
		\State Initialize with $C_0^V(i) = \text{HASH}_0^V(v_i)$,  $C_0^W(j) = \text{HASH}_0^W(w_j)$.
		\For{$l=1,2,\cdots,L$}
		\State $C_l^V(i) = \text{HASH}_{l}^V \left(C_{l-1}^V(i),  \left\{\left\{\left(C_{l-1}^W(j),A_{ij}\right) : j\in\calN_W(i) \right\}\right\}\right)$.
		\State $C_l^W(j) = \text{HASH}_{l}^W \left(C_{l-1}^W(j), \left\{\left\{ \left(C_{l-1}^W(i), A_{ij} \right): i\in\calN_V(j)\right\}\right\}\right)$.
		\EndFor
		\State \textbf{Output:} Final colors $C_L^V(i)$ for all $i\in V$ and $C_L^W(j)$ for all $j\in V$.
	\end{algorithmic}
\end{algorithm}

The WL test can be understood as a \textbf{color refinement algorithm}. In particular, each vertex in $G$ is initially assigned a color $C_0^V(i)$ or $C_0^W(j)$ according to its initial feature $v_i$ or $w_j$. Then the vertex colors $C_l^V(i)$ and $C_l^W(j)$ are iteratively refined via aggregation of neighbors' information and corresponding edge weights. If there is no collision of hash functions\footnote{Here, ``no collision of a hash function'' indicates that the hash function doesn't map two distinct inputs to the same output during the WL test on a specific instance. Another stronger assumption, commonly used in WL test analysis~\cite{jegelka2022theory}, assumes that all hash functions are injective.}, then two vertices are of the same color at some iteration if and only if at the previous iteration, they have the same color and the same multiset of neighbors' information and corresponding edge weights. Such a color refinement process is illustrated by an example shown in Figure \ref{fig:wl-test}.

\begin{figure}
    \centering
    \begin{tikzpicture}[every node/.style={circle, draw, minimum size=8mm}]

\definecolor{modernblue}{RGB}{52, 152, 255}   
\definecolor{modernred}{RGB}{255, 76, 60}     
\definecolor{moderngreen}{RGB}{50, 200, 50}  
\definecolor{moderngray}{RGB}{245,245,245} 

\def\skip{2.8}

    \node[fill=moderngreen, minimum size=0mm] (C1) at (0,1) {$v_1$};
    \node[fill=moderngreen, minimum size=0mm] (C2) at (0,0) {$v_2$};
    
    \node[fill=modernblue, minimum size=0mm] (V1) at (1.2,1.5) {$w_1$};
    \node[fill=modernblue, minimum size=0mm] (V2) at (1.2,0.5) {$w_2$};
    \node[fill=modernblue, minimum size=0mm] (V3) at (1.2,-0.5) {$w_3$};
    
    \draw (C1) -- (V1);
    \draw (C1) -- (V2);
    \draw (C1) -- (V3);
    \draw (C2) -- (V1);
    \draw (C2) -- (V2);


    \node[fill=moderngreen, minimum size=5mm] (0C1) at (\skip + 0,1) {};
    \node[fill=moderngreen, minimum size=5mm] (0C2) at (\skip + 0,0) {};
    
    \node[fill=modernblue, minimum size=5mm] (0V1) at (\skip + 1,1.5) {};
    \node[fill=modernblue, minimum size=5mm] (0V2) at (\skip + 1,0.7) {};
    \node[fill=modernblue, minimum size=5mm] (0V3) at (\skip + 1,-0.1) {};
    
    \draw (0C1) -- (0V1);
    \draw (0C1) -- (0V2);
    \draw (0C1) -- (0V3);
    \draw (0C2) -- (0V1);
    \draw (0C2) -- (0V2);

    \node[draw=none, rectangle, rounded corners, minimum size=0mm] (l0) at (\skip + 0.5, -0.5) {Initialization};


    \node[fill=yellow, minimum size=5mm] (1C1) at (1.7*\skip + 0,1) {};
    \node[fill=moderngreen, minimum size=5mm] (1C2) at (1.7*\skip + 0,0) {};
    
    \node[fill=modernblue, minimum size=5mm] (1V1) at (1.7*\skip + 1,1.5) {};
    \node[fill=modernblue, minimum size=5mm] (1V2) at (1.7*\skip + 1,0.7) {};
    \node[fill=magenta, minimum size=5mm] (1V3) at (1.7*\skip + 1,-0.1) {};
    
    \draw (1C1) -- (1V1);
    \draw (1C1) -- (1V2);
    \draw (1C1) -- (1V3);
    \draw (1C2) -- (1V1);
    \draw (1C2) -- (1V2);

    \node[draw=none, rectangle, rounded corners, minimum size=0mm] at (1.7*\skip + 0.5, -0.5) {$l=1$};


    \node[fill=yellow, minimum size=5mm] (2C1) at (2.4*\skip + 0,1) {};
    \node[fill=moderngreen, minimum size=5mm] (2C2) at (2.4*\skip + 0,0) {};
    
    \node[fill=modernblue, minimum size=5mm] (2V1) at (2.4*\skip + 1,1.5) {};
    \node[fill=modernblue, minimum size=5mm] (2V2) at (2.4*\skip + 1,0.7) {};
    \node[fill=magenta, minimum size=5mm] (2V3) at (2.4*\skip + 1,-0.1) {};
    
    \draw (2C1) -- (2V1);
    \draw (2C1) -- (2V2);
    \draw (2C1) -- (2V3);
    \draw (2C2) -- (2V1);
    \draw (2C2) -- (2V2);

    \node[draw=none, rectangle, rounded corners] (l2) at (2.4*\skip + 0.5, -0.5) {$l=2$};


    \node[fit=(0C1) (0V1) (l0) (l2) (2V3), dashed, draw, rectangle, rounded corners] (algo1) {};
    \node[below, draw=none, rectangle, rounded corners, minimum size=0mm] at (algo1.south) {The WL test (Algorithm \ref{alg:WL})};

    \node[draw=none, rectangle, rounded corners] (graphlabel) at (0.6, -1.3) {MILP-graph $G$};

    \node[align=left, rounded corners, fill=moderngray, dashed, rectangle, rounded corners] (obj) at (-3,0.5) {
            \begin{tabular}{ll}
           $\min$ & $ x_1 + x_2 + x_3$,  \\[3pt] 
          s.t. &  $x_1 + x_2 + x_3 \leq 1$, \\[3pt] 
           &  $x_1 + x_2\leq 1 $, \\[3pt] 
           &  $0 \leq x_1,x_2,x_3\leq 1$,\\[3pt] 
           &  $x_1,x_2,x_3 \in \bZ$.
            \end{tabular}
        };

    \node[draw=none, rectangle, rounded corners] (milp) at (-3, -1.3) {MILP formula};
\end{tikzpicture}
    \caption{An illustrative example of color refinement and partitions. Initially, all variables share a common color due to their identical node attributes, as do the constraint nodes. After a round of the WL test,  $x_1$ and $x_2$ retain their shared color, while $x_3$ is assigned a distinct color, as it connects solely to the first constraint, unlike $x_1$ and $x_2$. Similarly, the colors of the two constraints can also be differentiated. Finally, this partition stabilizes, resulting in  $\mathcal{I} = \{\{1\},\{2\}\}$, $\mathcal{J} = \{\{1,2\},\{3\}\}$.}
    \label{fig:wl-test}
\end{figure}
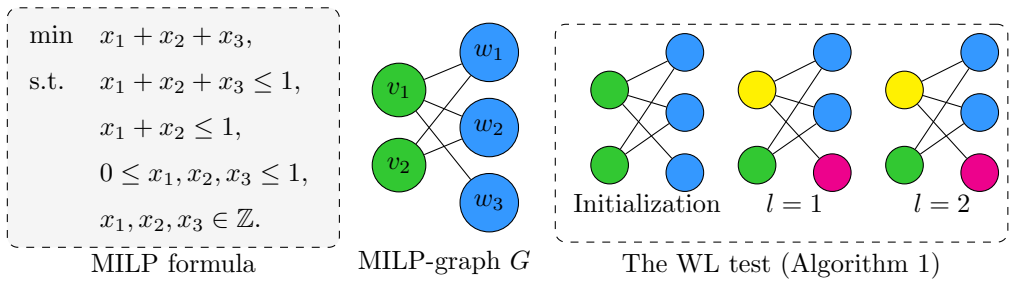

One can also view a vertex coloring as a \textbf{partition}, i.e., all vertices are partitioned into several classes such that two vertices are in the same class if and only if they are of the same color. After each round of Algorithm~\ref{alg:WL}, the partition always becomes finer if no collision happens, though it may not be strictly finer. The following theorem suggests that this partition eventually stabilizes or converges, with the final limit uniquely determined by the graph $G$, independent of the hash functions selected.
\begin{theorem}[{\cite{chen2022representing-lp}*{Theorem A.2}}]\label{thm:WL-partition-easy}
    For any $G\in\calG_{m,n}$, the vertex partition induced by Algorithm~\ref{alg:WL} (if no collision) will converge within $\mathcal{O}(m+n)$ iterations to a partition $(\calI,\calJ)$, where $\calI=\{I_1,I_2,\dots,I_s\}$ is a partition of $\{1,2,\dots,m\}$ and $\calJ = \{J_1,J_2,\dots,J_t\}$ is a partition of $\{1,2,\dots,n\}$, and that partition $(\calI,\calJ)$ is uniquely determined by the input graph $G$.
\end{theorem}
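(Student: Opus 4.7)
The plan is to prove two separate claims: that the sequence of partitions induced by Algorithm~\ref{alg:WL} stabilizes in at most $m+n-2$ iterations, and that the stable partition is independent of the choice of collision-free hash functions. Throughout, for each $l\ge 0$, let $\pi_l^V$ denote the partition of $\{1,\dots,m\}$ whose equivalence relation is $i \sim_l^V i'$ if and only if $C_l^V(i)=C_l^V(i')$, and define $\pi_l^W$ on $\{1,\dots,n\}$ analogously.

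\textbf{Step 1 (monotone refinement).} First I would observe that the update rule on lines~3--4 of the algorithm makes the color at iteration $l$ a deterministic, injective function (thanks to the no-collision assumption) of the pair consisting of the old color and the multiset of neighbor data. In particular, $C_l^V(i)=C_l^V(i')$ forces $C_{l-1}^V(i)=C_{l-1}^V(i')$, so $\pi_l^V$ refines $\pi_{l-1}^V$; likewise for $\pi^W$. Moreover, once $(\pi_l^V,\pi_l^W)=(\pi_{l-1}^V,\pi_{l-1}^W)$, a short verification (matching the neighbor multisets appearing in two successive updates via a bijection that respects the stable partition) shows $(\pi_{l+1}^V,\pi_{l+1}^W)=(\pi_l^V,\pi_l^W)$, so the partition is fixed from that point on.

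\textbf{Step 2 (iteration count).} Each iteration either leaves both partitions unchanged or strictly refines at least one of them, and a strict refinement increases the number of classes on that side by at least one. Since $|\pi_l^V|\le m$ and $|\pi_l^W|\le n$ for all $l$, only finitely many strictly refining iterations are possible, and the algorithm must stabilize within $(m-1)+(n-1)=m+n-2=\mathcal{O}(m+n)$ iterations.

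\textbf{Step 3 (hash-independence).} I expect this to require the most care. The plan is induction on $l$, comparing two runs of Algorithm~\ref{alg:WL} on the same $G$ with different collision-free hash systems. The base case holds because no collision makes $\mathrm{HASH}_0^V(v_i)=\mathrm{HASH}_0^V(v_{i'})$ equivalent to $v_i=v_{i'}$, a condition intrinsic to $G$; analogously for $\mathrm{HASH}_0^W$. For the inductive step, assume $\pi_{l-1}^V$ and $\pi_{l-1}^W$ are hash-independent. Writing out the update rule and using the no-collision property, $i\sim_l^V i'$ if and only if $i\sim_{l-1}^V i'$ and there exists a bijection $\sigma:\calN_W(i)\to\calN_W(i')$ with $j\sim_{l-1}^W \sigma(j)$ and $A_{ij}=A_{i',\sigma(j)}$ for every $j$. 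This condition refers only to $\pi_{l-1}^V$, $\pi_{l-1}^W$, and the edge weights, all hash-independent by the inductive hypothesis. The same argument on the $W$ side gives hash-independence of $\pi_l^W$.

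Combining the three steps, the stable partition reached within $\mathcal{O}(m+n)$ iterations depends only on $G$, yielding the pair $(\calI,\calJ)$ in the statement. The main conceptual obstacle is Step~3: raw colors obviously depend on the hash, so one must recast the recursive color-update as an equivalent statement about equivalence relations and bijective matchings of neighbor multisets that no longer mentions any particular color value.
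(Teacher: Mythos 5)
Your proposal is correct, and it supplies a self-contained argument for a statement that the paper itself does not prove: Theorem~\ref{thm:WL-partition-easy} is imported verbatim by citation to Theorem~A.2 of \cite{chen2022representing-lp}, so there is no in-paper proof to compare against. Your three steps are the standard route to WL stabilization, and each is sound. In Step~1 the no-collision hypothesis makes each hash injective on the inputs actually encountered, so equal new colors force equal old colors and equal neighbor multisets, giving monotone refinement; your observation that stability of the partition at one iteration propagates forward (because the level-$l$ colors are obtained from the level-$(l-1)$ colors by an injection once the partitions coincide, so the neighbor multisets match after relabeling) is exactly the point that needs the ``short verification'' you flag. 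Step~2 is the usual potential argument: the total number of classes is bounded by $m+n$, so at most $m+n-2$ strictly refining rounds can occur, and by Step~1 the first non-refining round is terminal, giving the $\mathcal{O}(m+n)$ bound. Step~3 correctly recasts color equality as an equivalence relation characterized by a bijective matching of neighbors respecting the previous partition and the edge weights, which is manifestly intrinsic to $G$ and hence independent of the hash system; the induction closes. The only cosmetic caveat is that line~4 of Algorithm~\ref{alg:WL} as printed writes $C_{l-1}^W(i)$ where $C_{l-1}^V(i)$ is clearly intended (since $i\in\calN_V(j)\subset V$); your argument implicitly uses the corrected reading, which is the right one.
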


With the concepts of color refinement and partition, we can introduce the core concept of this paper: 

\begin{definition}[Message-passing-tractability]\label{def:MP-tractable}
    For $G\in\calG_{m,n}$, let $(\calI,\calJ)$ be the partition as in Theorem~\ref{thm:WL-partition-easy}. We say that $G$ is message-passing-tractable (MP-tractable) if for any $p\in\{1,2,\dots,s\}$ and $q\in\{1,2,\dots,t\}$, all entries of the submatrix $(A_{ij})_{i\in I_p,j\in J_q}$ are the same. We use $\calG_{m,n}^{\textup{MP}} \subset \calG_{m,n}$ to denote the subset of all MILP-graphs in $\calG_{m,n}$ that are MP-tractable. 
\end{definition}

In order to help readers better understand the concept of ``MP-tractable'', let's examine the MILP instance shown in Figure \ref{fig:wl-test}. After numerous rounds of WL tests, the partition stabilizes to $\mathcal{I} = \{\{1\},\{2\}\}$ and $\mathcal{J} = \{\{1,2\},\{3\}\}$. According to Definition \ref{def:MP-tractable}, one must examine the following submatrices to determine whether the MILP is MP-tractable:
\begin{equation*}
    A[1,1:2] = [1,1], \quad A[2,1:2] = [1,1], \quad A[1,3] = [1], \quad A[2,3] = [0].
\end{equation*}
All elements within each submatrix are identical. Hence, this MILP is indeed MP-tractable.
To rigorously state our result, we require the following assumption of the MILP data distribution.

\begin{assumption}\label{asp:prob}
    $\bP$ is a Borel regular probability measure on $\calG_{m,n}$ and $\bP[\textup{SB}(G)\in\bR^n]=1$.
\end{assumption}

Borel regularity is a ``minimal'' assumption that is actually satisfied by almost all practically used data distributions such as normal distributions, discrete distributions, etc. Let us also comment on the other assumption $\bP[\textup{SB}(G)\in\bR^n]=1$. In Definition~\ref{def:SB}, the linear relaxation of $G$ is feasible and bounded, which implies $f^*_{\textup{LP}}(G)\in\bR$. However, it is possible for a linear program that is initially bounded and feasible to become infeasible upon adjusting a single variable's bounds, potentially resulting in $f^*_{\textup{LP}}(G,j,l_j,\hat{u}_j) = +\infty$ or $f^*_{\textup{LP}}(G,j,\hat{l}_j,u_j) = +\infty$ and leading to an infinite SB score: $\textup{SB}(G)_j = +\infty$. Although we ignore such a case by assuming  $\bP[\textup{SB}(G)\in\bR^n]=1$, it is straightforward to extend all our results by simply representing $+\infty$ as $-1$ considering $\textup{SB}(G)_j$ as a non-negative real number, thus avoiding any collisions in definitions.

Based on the above assumptions, as well as an extra assumption: $G$ is message-passing tractable with probability one, we can show the existence of an MP-GNN capable of accurately mapping a MILP-graph $G$ to its corresponding SB score, with an arbitrarily high degree of precision and probability. The formal theorem is stated as follows.

\begin{theorem}\label{thm:exist_MPGNN}
	Let $\bP$ be any probability distribution over $\calG_{m,n}$ that satisfies Assumption~\ref{asp:prob} and $\bP[G\in\calG_{m,n}^{\textup{MP}}] = 1$. Then for any $\varepsilon,\delta>0$, there exists a GNN $F\in\calF_{\textup{MP-GNN}}$ such that
	\begin{equation*}
		\bP[\|F(G) -  \textup{SB}(G)\| \leq \delta] \geq 1-\epsilon.
	\end{equation*}
\end{theorem}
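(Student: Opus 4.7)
The plan is to decompose the argument into (i) a structural lemma that $\textup{SB}$ is \emph{WL-stable} on the MP-tractable class, and (ii) a standard universal-approximation wrapper using Borel regularity. Concretely, I want to reduce the problem to approximating a continuous function that factors through the WL coloring of Algorithm~\ref{alg:WL}, and then invoke an MP-GNN universality statement of the type established in \cite{chen2022representing-lp,chen2022representing-milp}.

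The core technical step is the lemma: if $G_1, G_2 \in \calG_{m,n}^{\textup{MP}}$ produce identical multisets of final WL colors, then $\textup{SB}(G_1)$ and $\textup{SB}(G_2)$ agree after matching WL classes. To prove this I would exploit the block-constant shape of $A$ that MP-tractability forces: within any block $(I_p, J_q)$ all entries coincide and, together with the matching node features, this means the LP relaxation of $G$ is invariant under every permutation that fixes each $I_p$ setwise and each $J_q$ setwise. Uniqueness of $x^*_{\textup{LP}}(G)$ (the $\ell_2$-smallest optimum) then forces that solution to be constant on each $J_q$; applying the same symmetry argument to the perturbed LPs $\textup{LP}(G, j, l_j, \hat{u}_j)$ and $\textup{LP}(G, j, \hat{l}_j, u_j)$ makes their optimal values depend only on the WL class of $j$. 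For two graphs $G_1, G_2$ with matching WL outputs, the color-preserving bijection between vertex sets carries one LP onto the other, so all three optima transform coherently and the SB vectors agree block-by-block.

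Once WL-stability is in place, the approximation wrapper is routine. Since $\bP$ is Borel regular with $\bP[G\in\calG_{m,n}^{\textup{MP}}]=1$ and $\bP[\textup{SB}(G)\in\bR^n]=1$, Lusin's theorem yields a compact $K \subset \calG_{m,n}^{\textup{MP}}$ with $\bP[K]\geq 1-\varepsilon$ on which $\textup{SB}$ is continuous and finite. The lemma makes $\textup{SB}|_K$ WL-stable, so I can invoke (after a Tietze-type continuous extension if needed) an MP-GNN universality theorem for WL-stable continuous targets, whose proof relies on Theorem~\ref{thm:WL-partition-easy} to separate WL classes with MP-GNN embeddings and then on a Stone--Weierstrass argument on the WL-quotient. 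This produces $F \in \calF_{\textup{MP-GNN}}$ with $\|F(G)-\textup{SB}(G)\|\leq \delta$ for every $G\in K$, which yields exactly the desired probability bound.

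The hard part will be the lemma: the $\ell_2$-smallest optimum must be used to rule out nonuniqueness, and one must verify that all three LPs entering the definition of SB share a common symmetry group compatible with $(\calI,\calJ)$. Degeneracies (optimal bases changing as $\hat{l}_j,\hat{u}_j$ are varied, and the convention of representing an infinite optimum by $-1$ noted in the remark on SB) must be handled uniformly so that the WL-stable structure is preserved throughout. The Borel-regular/approximation packaging, by contrast, is standard.
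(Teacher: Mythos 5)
Your proposal is correct and follows the same two-step architecture as the paper: a lemma asserting that $\textup{SB}$ is determined by the WL output on $\calG_{m,n}^{\textup{MP}}$, followed by Lusin's theorem and the Stone--Weierstrass-type MP-GNN universality theorem (Theorem~\ref{thm:universal-approx-MPGNN}, imported from \cite{chen2022representing-lp}). Where you diverge is in the proof of the key lemma. You argue via the symmetry group of the LP: block-constancy of $A$ on the WL partition makes the relaxation invariant under all permutations preserving each $I_p$ and $J_q$, uniqueness of the $\ell_2$-minimal optimizer forces it to be constant on each $J_q$, and equivariance of the perturbed LPs under within-class swaps gives class-constant SB scores, with a color-preserving bijection transferring everything between two WL-equivalent graphs. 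The paper's Theorem~\ref{thm:sameWL2sameSB} is far more direct: combining the multiset conditions of Theorem~\ref{thm:WL-partition} with MP-tractability, each block $(A_{ij})_{i\in I_p, j\in J_q}$ is a constant matrix whose constant is pinned down by the shared multiset of entries, so two WL-equivalent MP-tractable graphs are \emph{literally equal} after relabeling $V$, and equality of SB scores is immediate. Both arguments are valid; yours does more work than the MP-tractable case requires (and your cross-graph step still tacitly needs the ``constant block $+$ matching multiset $\Rightarrow$ equal entries'' observation to see that the bijection preserves $A$), but the LP-invariance machinery you deploy is essentially what the paper does need later, in Theorems~\ref{thm:2FWL_SB} and~\ref{thm:same_LP_properties}, to handle the 2-FGNN case where WL-equivalent graphs need not be isomorphic. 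One small packaging note: rather than a Tietze extension, the paper symmetrizes the Lusin compact set under $S_m\times S_n$ so that Theorem~\ref{thm:universal-approx-MPGNN} applies directly; you should do the same, since that theorem requires the domain to be closed under the group action.
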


The proof of Theorem~\ref{thm:exist_MPGNN} is deferred to Appendix~\ref{sec:pf-exist-MPGNN}, with key ideas outlined here. First, we show that if Algorithm \ref{alg:WL} produces identical results for two MP-tractable MILPs, they must share the same SB score. That is, if two MP-tractable MILPs have different SB scores, the WL test (or equivalently MP-GNNs) can capture this distinction. Building on this result, along with a generalized version of the Stone-Weierstrass theorem and Luzin’s theorem, we reach the final conclusion.

Let us compare our findings with \cite{chen2022representing-milp} that establishes the existence of an MP-GNN capable of directly mapping $G$ to one of its optimal solutions, under the assumption that $G$ must be \textbf{unfoldable}. Unfoldability means that, after enough rounds of the WL test, each node receives a distinct color assignment. Essentially, it assumes that the WL test can differentiate between all nodes in $G$, and the elements within the corresponding partition $(\calI,\calJ)$ have cardinality one: $|I_p|=1$ and $|J_q|=1$ for all $p\in\{1,2,\dots,s\}$ and $q\in\{1,2,\dots,t\}$. Consequently, any unfoldable MILP must be MP-tractable because the submatrices under the partition of an unfoldable MILP $(A_{ij})_{i\in I_p,j\in J_q}$ must be $1\times 1$ and obviously satisfy the condition in Definition \ref{def:MP-tractable}. However, the reverse assertion is not true: The example in Figure \ref{fig:wl-test} serves as a case in point—it is MP-tractable but not unfoldable. Therefore, unfoldability is a stronger assumption than MP-tractability. Our Theorem \ref{thm:exist_MPGNN} demonstrates that, \textit{to illustrate the expressive power of MP-GNNs in approximating SB, MP-tractability suffices; we do not need to make assumptions as strong as those required when considering MP-GNN for approximating the optimal solution.}

\subsection{MP-GNNs cannot universally represent SB beyond MP-tractability}
\label{sec:counter-example}

Our next main result is that MP-GNNs do not have sufficient capacity to represent SB scores on the entire MILP space without the assumption of MP-tractability, stated as follows.
\begin{theorem}\label{thm:counter-example}
    There exist two MILP problems with different SB scores, such that any MP-GNN has the same output on them, regardless of the number of parameters. 
\end{theorem}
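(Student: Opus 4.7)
The plan is to exhibit two explicit MILP-graphs $G_1,G_2\in\calG_{6,6}$ that are indistinguishable by the WL test of Algorithm~\ref{alg:WL} (hence by every $F\in\calF_{\textup{MP-GNN}}$) yet have different SB scores. Both instances have six binary variables with identical node features and six $\le$-constraints of the form $x_i+x_j\le 1$ with unit edge weights; they share the objective $\min -\sum_{j=1}^6 x_j$ and bounds $0\le x_j\le 1$. The only difference is topological: in $G_1$ the constraints encode the edges of a single $6$-cycle on the variables (so $G$ is the adjacency between the six constraints and the six variables of a $C_6$), while in $G_2$ they encode the edges of two disjoint triangles on $\{x_1,x_2,x_3\}$ and $\{x_4,x_5,x_6\}$.

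\emph{WL-indistinguishability.} In both graphs every variable node and every constraint node has degree $2$, all edges have weight $1$, and all nodes of the same type carry identical initial features. A one-line induction on the iteration index $l$ shows that $C_l^V(i)$ takes a single common color across every constraint node in both $G_1$ and $G_2$, and $C_l^W(j)$ takes a single common color across every variable node; the WL partition never refines past the trivial two-block partition. Applying the same induction to the MP-GNN updates $s_i^l,t_j^l$ yields that they are each constant across nodes and identical between the two graphs, so the per-variable output $y_j$ is the same constant in both, giving $F(G_1)=F(G_2)$ as vectors in $\bR^6$ for every $F\in\calF_{\textup{MP-GNN}}$.

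\emph{SB scores differ.} For $G_1$ the LP relaxation is the fractional stable-set LP on the $6$-cycle with optimum $-3$; its optimal face is the segment between $(1,0,1,0,1,0)$ and $(0,1,0,1,0,1)$, on which $\|x(t)\|^2=3t^2+3(1-t)^2$ is minimized at $t=1/2$, so $x^\star_{\textup{LP}}(G_1)=\tfrac12\mathbf 1$ and $\hat u_1=0,\hat l_1=1$. Fixing $x_1=0$ leaves a stable-set LP on a $5$-path with optimum $3$; fixing $x_1=1$ forces $x_2=x_6=0$ and leaves a stable-set LP on a $3$-path with optimum $2$, for total $3$. Both branches retain LP value $-3$, so $\textup{SB}(G_1)_j=0$ for every $j$. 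For $G_2$, summing the three constraints of each triangle gives $x_i+x_{i+1}+x_{i+2}\le 3/2$, uniquely attained at $(\tfrac12,\tfrac12,\tfrac12)$; thus $x^\star_{\textup{LP}}(G_2)=\tfrac12\mathbf 1$ with value $-3$. Fixing $x_1=0$ reduces triangle~$1$ to sum-maximum $1$, and fixing $x_1=1$ forces $x_2=x_3=0$ with triangle sum $1$; the second triangle still contributes $3/2$, so both branch LPs have value $-5/2$ and $\textup{SB}(G_2)_j=(1/2)^2=1/4$ for every $j$.

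\emph{Main obstacle.} The one subtle point is verifying that $\tfrac12\mathbf 1$ is the \emph{minimum}-$\ell_2$-norm LP optimum for $G_1$: this matters because the $6$-cycle LP has a nontrivial optimal face and Definition~\ref{def:SB} breaks ties using the minimum-norm solution, which in turn pins down the branching thresholds $\hat u_1,\hat l_1$. Once this is settled, the rest of the argument is mechanical: a WL induction combined with the arithmetic of four small modified LPs yields $F(G_1)=F(G_2)$ for all $F\in\calF_{\textup{MP-GNN}}$ while $\textup{SB}(G_1)\neq\textup{SB}(G_2)$, proving the theorem.
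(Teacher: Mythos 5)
Your proof is correct and follows essentially the same strategy as the paper's: exhibit two $2$-regular bipartite MILP-graphs with identical node features (a single long cycle versus disjoint triangles) so that the WL/MP-GNN induction collapses to constant features on both, while the different global cycle structure forces different SB scores. The paper uses an $8$-variable covering pair ($C_8$ vs.\ $2C_3$ plus a doubled constraint) where you use a $6$-variable packing pair ($C_6$ vs.\ $2C_3$), and your verification of the minimum-$\ell_2$-norm tie-break on the $6$-cycle's optimal face, together with the four branch LP values, is accurate.
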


There are infinitely many pairs of examples proving Theorem~\ref{thm:counter-example}, and we show two simple examples: 
 \begin{equation}\label{ex1}
		\begin{split}
			\min & \quad x_1+x_2+x_3+x_4+x_5+x_6+x_7+x_8, \\
			\textup{s.t.} &\quad x_1+x_2\geq 1,\ x_2+x_3\geq 1,\ x_3+x_4\geq 1,\ x_4+x_5\geq 1,\ x_5+x_6\geq 1,\\
			&\quad x_6+x_7\geq 1,\ x_7+x_8\geq 1,\ x_8+x_1\geq 1,\ 0\leq x_j\leq 1,\ x_j\in \bZ,\ 1\leq j\leq 8,\\
		\end{split}
	\end{equation}
	\begin{equation}\label{ex2}
		\begin{split}
			\min & \quad x_1+x_2+x_3+x_4+x_5+x_6+x_7+x_8, \\
			\textup{s.t.} &\quad x_1+x_2\geq 1,\ x_2+x_3\geq 1,\ x_3+x_1\geq 1,\ x_4+x_5\geq 1,\ x_5+x_6\geq 1,\\
			&\quad x_6+x_4\geq 1,\ x_7+x_8\geq 1,\ x_8+x_7\geq 1,\ 0\leq x_j\leq 1,\ x_j\in \bZ,\ 1\leq j\leq 8.
		\end{split}
	\end{equation}
We will prove in Appendix~\ref{sec:pf-counter-example} that these two MILP instances have different SB scores, but they cannot be distinguished by any MP-GNN in the sense that for any $F\in\calF_{\textup{MP-GNN}}$, inputs \eqref{ex1} and \eqref{ex2} lead to the same output. Therefore, it is impossible to train an MP-GNN to approximate the SB score meeting a required level of accuracy with high probability, independent of the complexity of the MP-GNN. Any MP-GNN that accurately predicts one MILP's SB score will necessarily fail on the other. We also remark that our analysis for \eqref{ex1} and \eqref{ex2} can be generalized easily to any aggregation mechanism of neighbors' information when constructing the MP-GNNs, not limited to the sum aggregation as in Section~\ref{sec:preliminary}.

The MILP instances on which MP-GNNs fail to approximate SB scores, \eqref{ex1} and \eqref{ex2}, are not MP-tractable. It can be verified that for both \eqref{ex1} and \eqref{ex2}, the partition as in Theorem~\ref{thm:WL-partition-easy} is given by $\calI = \{I_1\}$ with $I_1 = \{1,2,\dots,8\}$ and $\calJ = \{J_1\}$ with $J_1 = \{1,2,\dots,8\}$, i.e., all vertices in $V$ form a class and all vertices in $W$ form the other class. Then the matrices $(A_{ij})_{i\in I_1,j\in J_1}$ and $(\Bar{A}_{ij})_{i\in I_1,j\in J_1}$ are just $A$ and $\Bar{A}$, the coefficient matrices in \eqref{ex1} and \eqref{ex2}, and have both $0$ and $1$ as entries, which does not satisfies Definition~\ref{def:MP-tractable}. 

Based on Theorem~\ref{thm:counter-example}, we can directly derive the following corollary by considering a simple discrete uniform distribution $\bP$ on only two instances \eqref{ex1} and \eqref{ex2}.

\begin{corollary}\label{cor:nonMPtractable}
    There exists a probability distribution $\bP$ over $\calG_{m,n}$ satisfying Assumption~\ref{asp:prob} and constants $\epsilon,\delta>0$, such that for any MP-GNN $F\in\calF_{\textup{MP-GNN}}$, it holds that
    \begin{equation*}
        \bP[\|F(G) -  \textup{SB}(G)\| \geq \delta] \geq \epsilon.
    \end{equation*}
\end{corollary}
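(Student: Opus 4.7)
The plan is to deduce this corollary almost immediately from Theorem~\ref{thm:counter-example} by taking $\bP$ to be the uniform distribution concentrated on the two MILP-graphs $G_1,G_2$ associated with \eqref{ex1} and \eqref{ex2}. First I would check that this $\bP$ satisfies Assumption~\ref{asp:prob}: since $\bP$ is a probability measure supported on a finite set of points in $\calG_{m,n}$, it is automatically Borel regular, and a direct inspection of \eqref{ex1} and \eqref{ex2} shows that both LP relaxations are feasible and bounded (the all-$\nicefrac{1}{2}$ solution is LP-feasible and the objective is bounded below by $0$), and moreover each single-bound tightening remains feasible, so $\textup{SB}(G_k)\in\bR^n$ for $k=1,2$ and hence $\bP[\textup{SB}(G)\in\bR^n]=1$.

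Next, let $\Delta := \|\textup{SB}(G_1)-\textup{SB}(G_2)\|$; Theorem~\ref{thm:counter-example} asserts that the SB scores differ, so $\Delta>0$. Set $\delta := \Delta/3$ (any constant strictly less than $\Delta/2$ would work) and $\epsilon := 1/2$. For any $F\in\calF_{\textup{MP-GNN}}$, Theorem~\ref{thm:counter-example} gives $F(G_1)=F(G_2)=:y$. By the triangle inequality,
\begin{equation*}
\|y-\textup{SB}(G_1)\|+\|y-\textup{SB}(G_2)\|\ \geq\ \|\textup{SB}(G_1)-\textup{SB}(G_2)\|\ =\ \Delta,
\end{equation*}
so at least one of the two summands is at least $\Delta/2 > \delta$. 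Consequently, under $\bP$, at least one of the atoms $\{G_1\},\{G_2\}$ (each of mass $1/2$) satisfies $\|F(G)-\textup{SB}(G)\|\geq \delta$, which yields $\bP[\|F(G)-\textup{SB}(G)\|\geq\delta]\geq 1/2=\epsilon$, as desired.

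The only nontrivial input beyond Theorem~\ref{thm:counter-example} is the verification that $\textup{SB}(G_1)\neq\textup{SB}(G_2)$ and that both SB-score vectors are finite; these are exactly the computations that will be carried out in the proof of Theorem~\ref{thm:counter-example} in Appendix~\ref{sec:pf-counter-example}, so I would simply cite them here. I do not foresee any real obstacle: the corollary is a direct pigeonhole-style consequence of the indistinguishability result, and the only subtlety is to ensure that the distribution used is covered by Assumption~\ref{asp:prob}, which the finite-support construction handles trivially.
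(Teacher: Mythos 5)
Your proposal is correct and is exactly the argument the paper intends: the paper derives the corollary in one sentence from Theorem~\ref{thm:counter-example} by taking the uniform distribution on the two instances \eqref{ex1} and \eqref{ex2}, and your triangle-inequality/pigeonhole step (with $\delta<\Delta/2$ and $\epsilon=1/2$) together with the verification of Assumption~\ref{asp:prob} for a finitely supported measure is the natural way to fill in the details.
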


This corollary indicates that the assumption of MP-tractability in Theorem~\ref{thm:exist_MPGNN} is not removable. 

\subsection{2-FGNNs are capable of universally representing SB}
\label{sec:main-result-2FGNN}

Although the universal approximation of MP-GNNs for SB scores is conditioned on the MP-tractability, we find an unconditional positive result stating that when we increase the order of GNNs a bit, it is possible to represent SB scores of MILPs, regardless of the MP-tractability.

\begin{theorem}\label{thm:exist_2FGNN}
	Let $\bP$ be any probability distribution over $\calG_{m,n}$ that satisfies Assumption~\ref{asp:prob}. Then for any $\varepsilon,\delta>0$, there exists a GNN $F\in\calF_{\textup{2-FGNN}}$ such that
	\begin{equation*}
		\bP[\|F(G) -  \textup{SB}(G)\| \leq \delta] \geq 1-\epsilon.
	\end{equation*}
\end{theorem}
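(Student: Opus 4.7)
The plan is to mirror the three-step architecture used to prove Theorem~\ref{thm:exist_MPGNN}, replacing each piece with its 2-dimensional folklore analog. First I would introduce a 2-FWL color refinement algorithm on MILP-graphs that parallels Algorithm~\ref{alg:WL} but operates on node pairs: the initial color of a pair $(i,j)$ with $i\in V, j\in W$ is $\text{HASH}(v_i, w_j, A_{ij})$, and the initial color of $(j_1,j_2)$ with $j_1, j_2 \in W$ is $\text{HASH}(w_{j_1}, w_{j_2}, \delta_{j_1 j_2})$. The update rules copy those of the 2-FGNN in Section~\ref{sec:preliminary}. Prove, analogously to Theorem~\ref{thm:WL-partition-easy}, that this refinement converges in $O((m+n)^2)$ iterations to a stable pair-partition uniquely determined by $G$, and that every $F\in\calF_{\textup{2-FGNN}}$ respects this pair-partition, so any two MILP-graphs whose stable 2-FWL pair-coloring multisets coincide yield identical 2-FGNN outputs.

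The core step, and the main obstacle, is the following key lemma: if two instances $G_1,G_2\in\calG_{m,n}$ carry the same stable 2-FWL pair-coloring, then $\textup{SB}(G_1) = \textup{SB}(G_2)$ up to the natural permutation induced by the coloring. Unlike the plain WL test, the initial 2-FWL pair-coloring already records $A_{ij}$ for \emph{every} constraint-variable pair, together with both endpoint features. I would exploit this by arguing that the stable pair-coloring combinatorially pins down the entire data of the MILP up to a simultaneous permutation $(\sigma,\tau)$ of variables and constraints: matching pair-colors imply matching edge weights and matching node features, and iterating the refinement propagates this matching globally because the bipartite structure plus edge weights make the refinement information-preserving on pairs. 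Once $G_1$ and $G_2$ are exhibited as isomorphic MILPs via $(\sigma,\tau)$, the LP relaxation value $f^*_{\textup{LP}}$, the minimum-$\ell_2$-norm optimal solution, and every bound-perturbed LP value in Definition~\ref{def:SB} must agree after applying the permutation, hence so must the SB scores. This step replaces the MP-tractability assumption used for Theorem~\ref{thm:exist_MPGNN}, since there the WL coloring left the block structure of $A$ ambiguous and only MP-tractability closed the gap; here the 2-FWL coloring sees entries directly.

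With those ingredients, the concluding step is a generalized Stone-Weierstrass argument on the compact quotient $\calG_{m,n}^K / \sim_{\textup{2-FWL}}$ obtained by restricting to a compact sublevel set and quotienting by 2-FWL equivalence. The family of functions realized by $\calF_{\textup{2-FGNN}}$ descends to this quotient (by Paragraph 1), as does $\textup{SB}$ (by Paragraph 2), so we may work entirely on the quotient. I would check that the descended 2-FGNN family separates equivalence classes (two distinct classes have different stable colorings, which a suitably chosen MLP can read off), contains constants, and is closed under pointwise sums and products, hence is dense in the continuous functions on the quotient. A Borel-regularity argument combined with Luzin's theorem then yields, for any $\varepsilon,\delta>0$, a 2-FGNN whose output approximates $\textup{SB}$ within $\delta$ on an event of probability at least $1-\varepsilon$, exactly as in the MP-GNN case.

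The hard part is the key lemma in the middle paragraph: even though 2-FWL is known to fail at detecting graph isomorphism for general unlabeled graphs (Cai-F\"urer-Immerman type constructions), the extra structure of an MILP-graph — bipartiteness, real edge weights $A_{ij}$ loaded into every pair's initial color, and the heterogeneous constraint/variable node features — should make the pair-refinement rich enough to force genuine MILP-isomorphism. I anticipate this requires a careful induction tracking which entries of $A$ and which features are forced to coincide after each refinement round, together with a matrix-reconstruction argument showing that two coloring-matched instances differ only by a variable-permutation and a constraint-permutation.
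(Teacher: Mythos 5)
There is a genuine gap at exactly the place you flag as the ``hard part.'' Your key lemma asserts that two MILP-graphs with the same stable 2-FWL pair-coloring must be isomorphic as MILPs (equal data up to a permutation $(\sigma,\tau)$), and your plan is to prove this by a matrix-reconstruction induction. This is too strong and will fail: the 2-FWL test does not decide isomorphism even for the structured graphs arising here. MILP-graphs contain arbitrary real-weighted bipartite graphs with (possibly constant) node features, and Cai--F\"urer--Immerman-type constructions yield non-isomorphic instances that 2-FWL cannot separate; bipartiteness and having $A_{ij}$ in the initial pair color do not rescue the reconstruction. So the step ``exhibit $G_1$ and $G_2$ as isomorphic MILPs via $(\sigma,\tau)$'' cannot be carried out in general, and your argument that the SB scores agree collapses with it.

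The paper's proof of Theorem~\ref{thm:exist_2FGNN} deliberately avoids isomorphism. It only needs that 2-FWL equivalence forces equality of the \emph{optimal objective values} of the LP relaxation and of each bound-perturbed LP $\textup{LP}(G,j,\hat{l}_j,\hat{u}_j)$, and these LP quantities are invariant under a much coarser relation: it suffices (Theorem~\ref{thm:same_LP_properties}, imported from \cite{chen2022representing-lp}) that there exist partitions of $V$ and $W$ on which the features are block-constant and the row/column \emph{sums} of $A$ over blocks agree --- no entrywise matching of $A$ is required. The mechanism is to fix the branching variable $j$: because $\delta_{jj}=1$ in the initial pair color, the pair $(j,j)$ is colored differently from every $(j_1,j)$ with $j_1\neq j$, so $j$ sits in a singleton block of the induced partition, the bound change on $x_j$ is consistent across the two instances, and the 2-FWL colors of the pairs $(i,j)$ and $(j_1,j)$ play the role of a WL coloring of the remaining nodes relative to $j$. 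Verifying the four block-sum conditions from the 2-FWL update rule is then a direct computation. If you want to repair your proposal, replace the isomorphism lemma with this ``fix $j$, reduce to partition conditions, invoke LP invariance under block-sum-preserving partitions'' argument; your first and third paragraphs (the 2-FGNN/2-FWL equivalence of separation powers and the Lusin plus generalized Stone--Weierstrass conclusion) are consistent with the paper and can stand essentially as written.
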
  

The proof of Theorem~\ref{thm:exist_2FGNN} leverages the second-order folklore Weisfeiler-Lehman (2-FWL) test. We show that for any two MILPs, whether MP-tractable or not, identical 2-FWL results imply they share the same SB score, thus removing the need for MP-tractability. Details are provided in Appendix~\ref{sec:pf_2FGNN}.

Theorem~\ref{thm:exist_2FGNN} establishes the existence of a 2-FGNN that can approximate the SB scores of MILPs well with high probability. This is a fundamental result illustrating the possibility of training a GNN to predict branching strategies for MILPs that are not MP-tractable. In particular, for any probability distribution $\bP$ as in Corollary~\ref{cor:nonMPtractable} on which MP-GNNs fail to predict the SB scores well, Theorem \ref{thm:exist_2FGNN} confirms the capability of 2-FGNNs to work on it.

However, it's worth noting that 2-FGNNs typically have higher computational costs, both during training and inference stages, compared to MP-GNNs. This computational burden comes from the fact that calculations of 2-FGNNs reply on pairs of nodes instead of nodes, as we discussed in Section~\ref{sec:preliminary}. To mitigate such computational challenges, one could explore the use of sparse or local variants of high-order GNNs that enjoy cheaper information aggregation with strictly stronger separation power than GNNs associated with the original high-order WL test \cite{morris2020weisfeiler}. 

\subsection{Practical insights of our theoretical results}

Theorem \ref{thm:exist_MPGNN} and Corollary \ref{cor:nonMPtractable} indicate the significance of MP-tractability in practice. Before attempting to train a MP-GNN to imitate SB, practitioners can first verify if the MILPs in their dataset satisfy MP-tractability. If the dataset contains a substantial number of MP-intractable instances, careful consideration of this approach is necessary, and 2-FGNNs may be more suitable according to Theorem \ref{thm:exist_2FGNN}. Notably, assessing MP-tractability relies solely on conducting the WL test (Algorithm \ref{alg:WL}). This algorithm is well-established in graph theory and benefits from abundant resources and repositories for implementation. Moreover, it operates with polynomial complexity (detailed below), which is reasonable compared to solving MILPs.

\medskip

\paragraph{\textbf{Complexity of verifying MP-tractability}} To verify MP-tractability of a MILP, one requires at most $\mathcal{O}(m+n)$ color refinement iterations according to Theorem \ref{thm:WL-partition-easy}. The complexity of each iteration is bounded by the number of edges in the graph \cite{shervashidze2011weisfeiler}. In our context, it is bounded by the number of nonzeros in matrix $A$: $\text{nnz}(A)$. Therefore, the overall complexity is $\mathcal{O}((m+n)\cdot \text{nnz}(A))$, which is linear in terms of $(m+n)$ and $\text{nnz}(A)$. 
In contrast, solving an MILP or even calculating its all the SB scores requires significantly higher complexity. To calculate the SB score of each MILP, one needs to solve at most $n$ LPs. We denote the complexity of solving each LP as $\text{CompLP}(m,n)$. Therefore, the overall complexity of calculating SB scores is $\mathcal{O}( n \cdot \text{CompLP}(m,n) )$. Note that, currently, there is still no strongly polynomial-time algorithm for LP, thus this complexity is significantly higher than that of verifying MP-tractability.

While verifying MP-tractability is polynomial in complexity, the complexity of GNNs is still not guaranteed. Theorems \ref{thm:exist_MPGNN} and \ref{thm:exist_2FGNN} address existence, not complexity. In other words, this paper answers the question of \textit{whether GNNs can represent the SB score}. To explore \textit{how well GNNs can represent SB}, further investigation is needed.

\medskip

\paragraph{\textbf{Frequency of MP-tractability}} In practice, the occurrence of MP-tractable instances is highly dependent on the dataset. In both Examples \ref{ex1} and \ref{ex2} (both MP-intractable), all variables exhibit symmetry, as they are assigned the same color by the WL test, which fails to distinguish them. Conversely, in the 3-variable example in Figure \ref{fig:wl-test} (MP-tractable), only two of the three variables, $x_1$ and $x_2$, are symmetric. Generally, the frequency of MP-tractability depends on \textbf{\textit{the level of symmetry}} in the data --- higher levels of symmetry increase the risk of MP-intractability. This phenomenon is commonly seen in practical MILP datasets, such as MIPLIB 2017 \cite{gleixner2021miplib}. According to \cite{chen2022representing-milp}, approximately one-quarter of examples show significant symmetry in over half of the variables.

\section{Numerical results}
\label{sec:numerics}

We implement numerical experiments to validate our theoretical findings in Section~\ref{sec:main-result}.

\medskip

\paragraph{\textbf{Experimental settings}} We train an MP-GNN and a 2-FGNN with $L=2$, where we replace the functions $f^l(t_j^{l-1}, A_{ij})$ and $g^l(s_i^{l-1}, A_{ij})$ in the MP-GNN by $A_{ij} f^l(t_j^{l-1})$ and $A_{ij} g^l(s_i^{l-1})$ to guarantee that they take the value $0$ whenever $A_{ij} = 0$. For both GNNs, $p^0, q^0$ are parameterized as linear transformations followed by a non-linear activation function; $\{p^l,q^l,f^l,g^l\}_{l=1}^L$ are parameterized as 3-layer multi-layer perceptrons (MLPs) with respective learnable parameters; and the output mapping $r$ is parameterized as a 2-layer MLP. All layers map their input to a 1024-dimensional vector and use the ReLU activation function.
With $\theta$ denoting the set of all learnable parameters of a network, we train both MP-GNN and 2-FGNN to fit the SB scores of the MILP dataset $\calG$, by minimizing $\frac{1}{2}\sum_{G\in\calG}\|F_\theta(G) - \textup{SB}(G)\|^2$ with respect to $\theta$, using Adam~\cite{kingma2014adam}. The networks and training scheme is implemented with Python and TensorFlow~\cite{abadi2016tensorflow}. The numerical experiments are conducted on a single NVIDIA Tesla V100 GPU for two datasets:
\begin{itemize}[leftmargin=2em]
    \item We randomly generate $100$ MILP instances, with 6 constraints and 20 variables, that are \textit{MP-tractable} with probability $1$. SB scores are collected using SCIP~\cite{scip}. More details about instance generation are provided in Appendix~\ref{sec:exp-details}.
    \item We train the MP-GNN and 2-FGNN to fit the SB scores of \eqref{ex1} and \eqref{ex2}, i.e., the dataset only consists of two instances that are \textit{not MP-tractable}. 
\end{itemize}

\medskip

\paragraph{\textbf{Experimental results}}
    The numerical results are displayed in Figure~\ref{fig:exp}. One can see from Figure~\ref{fig:exp-random} that both MP-GNN and 2-FGNN can approximate the SB scores over the dataset of random MILP instances very well, which validates Theorem~\ref{thm:exist_MPGNN} and Theorem~\ref{thm:exist_2FGNN}. As illustrated in Figure~\ref{fig:exp-counter-example}, 2-FGNN can perfectly fit the SB scores of \eqref{ex1} and \eqref{ex2} simultaneously while MP-GNN can not, which is consistent with Theorem~\ref{thm:counter-example} and Theorem~\ref{thm:exist_2FGNN} and serves as a numerical verification of the capacity differences between MP-GNN and 2-FGNN for SB prediction. The detailed exploration of training and performance evaluations of GNNs is deferred to future work to maintain a focused investigation on the theoretical capabilities of GNNs in this paper. 

    \begin{figure}
        \centering
        \begin{subfigure}[b]{0.48\textwidth}
            \centering
            \includegraphics[width=\textwidth]{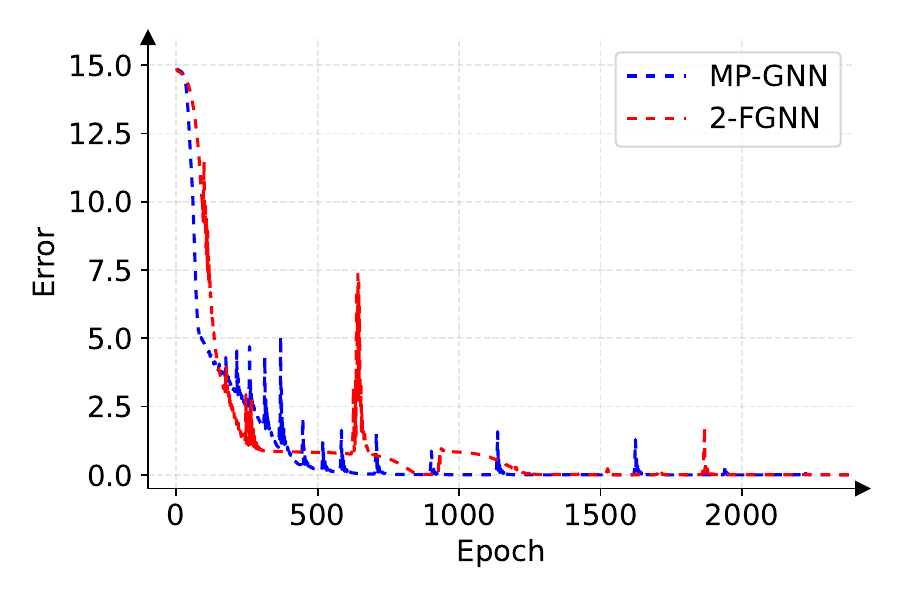}
            \caption{MP-tractable MILPs: Both MP-GNN and 2-FGNN can fit the SB scores.}
            \label{fig:exp-random}
        \end{subfigure}
        \hfill
        \begin{subfigure}[b]{0.48\textwidth}
            \centering
            \includegraphics[width=\textwidth]{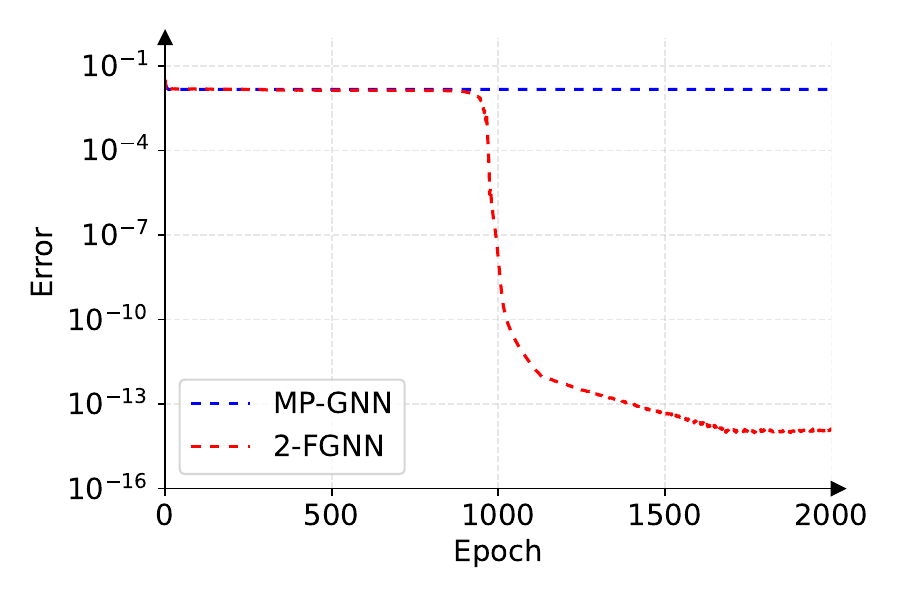}
            \caption{MP-intractable MILPs~\eqref{ex1} and~\eqref{ex2}: 2-FGNN can fit SB scores while MP-GNN can not.}
            \label{fig:exp-counter-example}
        \end{subfigure}
        \caption{Numerical results of MP-GNN and 2-FGNN for SB score fitting. {In the right figure, the training error of MP-GNN on MP-intractable examples does not decrease after however many epochs.}}
        \label{fig:exp}
    \end{figure}

\medskip
\paragraph{\textbf{Number of parameters}} In Figure \ref{fig:exp-counter-example}, the behavior of MP-GNN remains unchanged regardless of the number of parameters used, as guaranteed by Theorem \ref{thm:counter-example}. This error is intrinsically due to the structure of MP-intractable MILPs and cannot be reduced by adding parameters. Conversely, 2-FGNN can achieve near-zero loss with sufficient parameters, as guaranteed by Theorem \ref{thm:exist_2FGNN} and confirmed by our numerical experiments.
To further verify this, we tested 2-FGNN with embedding sizes from 64 to 2,048. All models reached near-zero errors, though epoch counts varied, as shown in Table \ref{tab:epochs_to_error}. The results suggest that larger embeddings improve model capacity to fit counterexamples. The gains level off beyond an embedding size of 1,024 due to increased training complexity.

\begin{table}[h!]
    \centering
        \caption{Epochs required to reach specified errors with varying embedding sizes for 2-FGNN.}
    \label{tab:epochs_to_error}
    \begin{tabular}{c|cccccc}
    \hline
        \textbf{Embedding size} & 64 & 128 & 256 & 512 & 1,024 & 2,048 \\
        \hline
        \textbf{Epochs to reach} \(10^{-6}\) \textbf{error} & 16,570 & 5,414 & 2,736 & 1,442 & 980 & 1,126 \\
        \textbf{Epochs to reach} \(10^{-12}\) \textbf{error} & 18,762 & 7,474 & 4,412 & 2,484 & 1,128 & 1,174 \\\hline
    \end{tabular}
\end{table}

\medskip

\paragraph{\textbf{Larger instances}} While our study primarily focuses on theory and numerous empirical studies have shown the effectiveness of GNNs in branching strategies (as noted in Section \ref{sec:intro}), we conducted experiments on {larger instances} to further assess the scalability of this approach. We trained an MP-GNN on 100 large-scale set covering problems, each with 1,000 variables and 2,000 constraints, generated following the methodology in \cite{gasse2019exact}. The MP-GNN achieved a training loss of $1.94 \times 10^{-4}$, calculated as the average $\ell_2$ norm of errors across all training instances.

    \section{Conclusion}
    \label{sec:conclude}

    In this work, we study the expressive power of two types of GNNs for representing SB scores. We find that MP-GNNs can accurately predict SB results for MILPs within a specific class termed ``message-passing-tractable'' (MP-tractable). However, their performance is limited outside this class. In contrast, 2-FGNNs, which update node-pair features instead of node features as in MP-GNNs, can universally approximate the SB scores on every MILP dataset or for every MILP distribution. These findings offer insights into the suitability of different GNN architectures for varying MILP datasets, particularly considering the ease of assessing MP-tractability. 
    We also comment on limitations and future research topics. Although the universal approximation result is established for MP-GNNs and 2-FGNNs to represent SB scores, it is still unclear what is the required complexity/number of parameters to achieve a given precision. It would thus be interesting and more practically useful to derive some quantitative results. In addition, exploring efficient training strategies or alternatives of higher order GNNs for MILP tasks is an interesting and significant future direction.

    \section*{Acknowledgements}
    We would like to express our deepest gratitude to Prof. Pan Li from the School of Electrical and Computer Engineering at Georgia Institute of Technology (GaTech ECE), for insightful discussions on second-order folklore GNNs and their capacities for general graph tasks. We would also like to thank Haoyu Wang from GaTech ECE for helpful discussions during his internship at Alibaba US DAMO Academy.

\bibliographystyle{amsxport}
\bibliography{references}

\appendix

\section{Proof of Theorem~\ref{thm:exist_MPGNN}}
\label{sec:pf-exist-MPGNN}
This section presents the proof of Theorem~\ref{thm:exist_MPGNN}. We define the separation power of WL test in Definition~\ref{def:equiv-class-WL} and prove that two MP-tractable MILP-graphs, or two vertices in a single MP-tractable graph, indistinguishable by WL test must share the same SB score in Theorem~\ref{thm:sameWL2sameSB}. In other words, WL test has sufficient separation power to distinguish MP-tractable MILP graphs, or vertices in a single MP-tractable graph, with different SB scores.

Before stating the major result, we first introduce some definitions and useful theorems.

\begin{definition}\label{def:equiv-class-WL}
    Let $G,\Bar{G}\in\calG_{m,n}$ and let $C_l^V(i),C_l^W(j)$ and $\Bar{C}_l^V(i),\Bar{C}_l^W(j)$ be the colors generated by the WL test (Algorithm \ref{alg:WL}) for $G$ and $\Bar{G}$. We say $G\stackrel{W}\sim\Bar{G}$ if $\left\{\left\{C_L^V(i) : i\in V\right\}\right\} = \left\{\left\{\Bar{C}_L^V(i) : i\in V\right\}\right\}$ and $C_L^W(j) = \Bar{C}_L^W(j),\ \forall~j\in W$ holds for any $L$ and any hash functions.
\end{definition}

\begin{theorem}[{\cite[Theorem A.2]{chen2022representing-lp}}]\label{thm:WL-partition}
    The partition defined in Theorem \ref{thm:WL-partition-easy} satisfies:
    \begin{enumerate}[(a)]
        \item $v_i = v_{i'}$, $\forall~i,i'\in I_p,\ p\in\{1,2,\dots,s\}$,
		\item $w_j = w_{j'}$, $\forall~j,j'\in J_q,\ q\in\{1,2,\dots,t\}$,
		\item $\{\{ A_{i j} : j\in J_q\}\}=\{\{ A_{i' j} : j\in J_q\}\}$, $\forall~i,i'\in I_p,\ p\in\{1,2,\dots,s\},\ q\in\{1,2,\dots,t\}$,
		\item $\{\{ A_{i j} : i\in I_p\}\}=\{\{ A_{i j'} : i\in I_p\}\}$, $\forall~j,j'\in J_q,\ p\in\{1,2,\dots,s\}$, $q\in\{1,2,\dots,t\}$,
    \end{enumerate}
    where $\{\{\}\}$ denotes the multiset considering both the elements and the multiplicities.
\end{theorem}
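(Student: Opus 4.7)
The plan is to read off all four properties directly from the color-equality at the stable iteration of Algorithm~\ref{alg:WL}, exploiting the ``no collision'' assumption so that each hash function behaves as an injection on the inputs actually encountered during the test.

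First, I would fix $p \in \{1,\dots,s\}$, $q \in \{1,\dots,t\}$, $i,i' \in I_p$, and $j,j' \in J_q$. By definition of $(\calI,\calJ)$ as the stable partition, $C_L^V(i) = C_L^V(i')$ and $C_L^W(j) = C_L^W(j')$. A standard monotonicity observation about color refinement is the next ingredient: since the color of a vertex at iteration $l$ is computed, via a collision-free hash, from its color at iteration $l-1$ together with the multiset of its neighbors' information at $l-1$, two vertices sharing a color at iteration $l$ must already have shared it at iteration $l-1$; inducting downward, they agree at iteration $0$ as well. Applying no-collision of $\text{HASH}_0^V$ and $\text{HASH}_0^W$ then gives $v_i = v_{i'}$ and $w_j = w_{j'}$, establishing (a) and (b).

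For parts (c) and (d), stability of the partition means the update at iteration $L$ does not separate $i$ from $i'$, nor $j$ from $j'$. Unwinding the WL update rule and using no-collision of $\text{HASH}_L^V$ yields the multiset equality
\begin{equation*}
\{\{(C_{L-1}^W(k), A_{ik}) : k\in\calN_W(i)\}\} = \{\{(C_{L-1}^W(k), A_{i'k}) : k\in\calN_W(i')\}\}.
\end{equation*}
Since $C_{L-1}^W$ is constant on each $J_q$ by stability, I can restrict this equality to the sub-multiset whose first coordinate is the common color of $J_q$ and then project onto the second coordinate; this gives equality of the multisets of nonzero entries of rows $i$ and $i'$ restricted to columns in $J_q$. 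Matching the counts of zero entries via $|J_q|$ yields the full-block equality in (c). The argument for (d) is symmetric, starting from the $W$-side update and using constancy of $C_{L-1}^V$ on $I_p$.

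The only genuinely delicate step is the last one: the WL aggregation in Algorithm~\ref{alg:WL} ranges only over neighbors, i.e.\ nonzero edges, whereas (c) and (d) assert equality of multisets across \emph{entire} rows or columns within a block. Reconciling this requires the small but crucial observation that same $I_p$ (resp.\ same $J_q$) forces rows (resp.\ columns) to contain the same number of nonzero entries in each block, so the absent zeros can be appended symmetrically on both sides of the multiset equality. Everything else is a routine unwinding of the refinement rule under the standing no-collision assumption.
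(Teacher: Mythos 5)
Your argument is correct. Note that the paper itself does not prove this statement --- it is imported verbatim as Theorem~A.2 of \cite{chen2022representing-lp} --- so there is no internal proof to compare against; your proposal supplies a self-contained argument of the standard kind. The two points that need care are both handled properly: (i) the downward induction showing that color-equality at iteration $l$ forces color-equality at iteration $l-1$ (valid precisely because of the no-collision assumption on each $\text{HASH}_l$), which reduces (a) and (b) to injectivity of $\text{HASH}_0^V$ and $\text{HASH}_0^W$; and (ii) the passage from the neighbor-only aggregation in Algorithm~\ref{alg:WL} to the full-block multiset equalities in (c) and (d). For (ii), your observation that the restricted nonzero-entry multisets already force $|\calN_W(i)\cap J_q| = |\calN_W(i')\cap J_q|$, so that the missing zeros can be appended in equal number $|J_q| - |\calN_W(i)\cap J_q|$ on both sides, is exactly the right fix. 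One implicit step worth making explicit if you write this up: when you restrict the multiset $\{\{(C_{L-1}^W(k), A_{ik})\}\}$ to first coordinate equal to the common color of $J_q$, you are using that at the stable iteration distinct classes carry distinct colors, which holds by definition of the partition induced by the coloring.
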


In Theorem~\ref{thm:WL-partition}, conditions (a) and (b) mean vertices in the same class share the same features, while conditions (c) and (d) state that vertices in the same class interact with another class with the same multiset of weights. In other words, for any $p\in\{1,2,\dots,s\}$ and $q\in\{1,2,\dots,t\}$, different rows/columns of the submatrix $(A_{ij})_{i\in I_p,j\in J_q}$ provide the same multiset of entries. 

With the above preparations, we can state and prove the main result now.

\begin{theorem}\label{thm:sameWL2sameSB}
    For any $G,\Bar{G}\in\calG_{m,n}^{\textup{MP}}$ with $\textup{SB}(G)\in\bR^n$ and $\textup{SB}(\Bar{G})\in\bR^n$, the followings are true:
    \begin{enumerate}[(a)]
        \item If $G\stackrel{W}\sim\Bar{G}$, then $\textup{SB}(G) = \textup{SB}(\Bar{G})$.
        \item If $C_L^W(j_1) = C_L^W(j_2)$ holds for any $L$ and any hash functions, then $\textup{SB}(G)_{j_1} = \textup{SB}(G)_{j_2}$.
    \end{enumerate}
\end{theorem}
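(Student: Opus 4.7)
The plan is to reduce both assertions to symmetry arguments: WL-equivalence, combined with MP-tractability, will force the two graphs (respectively, the two variable nodes) to be related by a natural MILP automorphism, after which every quantity entering Definition~\ref{def:SB} is invariant under such automorphisms.

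For part (a), I would first apply Theorem~\ref{thm:WL-partition} to both $G$ and $\bar{G}$ with sufficiently refined (collision-free) hash functions. Since $G\stackrel{W}{\sim}\bar{G}$, the variable colors agree index by index, so the variable partitions $\calJ,\bar{\calJ}$ of $\{1,\dots,n\}$ coincide, while equality of the multisets of constraint colors forces $|I_p|=|\bar{I}_p|$ for every $p$. Theorem~\ref{thm:WL-partition}(a)--(d) then yields the equality of constraint features, variable features, and, for every $(p,q)$, the multiset of edge weights from any constraint in $I_p$ to $J_q$. At this point MP-tractability is decisive: since each submatrix $(A_{ij})_{i\in I_p,j\in J_q}$ is constant, all rows of $A$ lying in a common class $I_p$ are identical, and the same holds for $\bar{A}$; hence there is a single scalar $\alpha_{pq}=\bar{\alpha}_{pq}$ for each $(p,q)$. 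Consequently there exists a permutation $\pi:\{1,\dots,m\}\to\{1,\dots,m\}$ sending $\bar{I}_p$ to $I_p$ for every $p$ that maps $\bar{G}$ onto $G$ as MILP data. A constraint permutation affects neither the feasible region, the objective, $f^*_{\textup{LP}}$, the argmin set, nor the smallest-$\ell_2$-norm optimum, and the same reasoning applies to each of the auxiliary branching LPs in Definition~\ref{def:SB}, yielding $\textup{SB}(G)=\textup{SB}(\bar{G})$.

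For part (b), if $C_L^W(j_1)=C_L^W(j_2)$ holds for every $L$ and every hash family, then $j_1,j_2$ belong to the same class $J_q$ of the stabilized partition of $G$. Theorem~\ref{thm:WL-partition}(b) gives $w_{j_1}=w_{j_2}$, matching objective coefficient, bounds, and integrality flag; MP-tractability further gives $A_{i,j_1}=A_{i,j_2}$ for every $i$, since columns within a common class become identical when each submatrix is constant. Hence the involution $\sigma$ on $W$ exchanging $j_1\leftrightarrow j_2$ (and identity on $V$) is an automorphism of the MILP data. Applying $\sigma$ to the LP relaxation permutes its argmin set; by strict convexity of $\|\cdot\|_2$ on this convex set, the unique smallest-$\ell_2$-norm optimum is fixed by $\sigma$, which forces $x^*_{\textup{LP}}(G)_{j_1}=x^*_{\textup{LP}}(G)_{j_2}$ and in turn $\hat{l}_{j_1}=\hat{l}_{j_2}$, $\hat{u}_{j_1}=\hat{u}_{j_2}$. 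Applying $\sigma$ also interchanges the two pairs of branching LPs in Definition~\ref{def:SB}, so their optimal values coincide and $\textup{SB}(G)_{j_1}=\textup{SB}(G)_{j_2}$. (The integrality case $j_1,j_2\notin I$ is trivial since both SB scores are then zero.)

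The main obstacle is the transition in part (a) from the abstract WL-equivalence, phrased in terms of outputs of arbitrary hash functions, to the concrete structural claim that $\bar{G}$ is a constraint relabeling of $G$. Bridging this requires combining injective hash choices with the convergence of Theorem~\ref{thm:WL-partition-easy}, and then crucially invoking MP-tractability to guarantee that the stabilized coloring encodes the entire matrix $A$ rather than merely its weighted degree profiles---a step that manifestly fails for non-MP-tractable graphs, as shown by the counterexample in Section~\ref{sec:counter-example}. Once this structural equivalence is in hand, the remaining invariance checks are routine.
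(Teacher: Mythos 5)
Your proposal is correct and follows essentially the same route as the paper: combine the stabilized WL partition (Theorem~\ref{thm:WL-partition}) with MP-tractability to force each block of $A$ to be a constant matrix with the same constant in both graphs, conclude that $\bar{G}$ is just $G$ with constraints relabelled, and use invariance of every LP in Definition~\ref{def:SB} under constraint permutations. The only cosmetic difference is in part (b), where you argue directly that the transposition $j_1\leftrightarrow j_2$ is an automorphism of the MILP (and that the minimal-$\ell_2$-norm optimum is fixed by it), whereas the paper obtains (b) as an immediate corollary of (a) applied to $G$ and its $j_1\leftrightarrow j_2$ relabelling; the two arguments rest on the same symmetry.
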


\begin{proof}
    (a) Since $G\stackrel{W}\sim\Bar{G}$, after applying some permutation on $V$ (relabelling vertices in $V$) in the graph $\Bar{G}$, the two $G$ and $\Bar{G}$ share the same partition $\calI = \{I_1,I_2,\dots,I_s\}$ and $\calJ = \{J_1,J_2,\dots,J_t\}$ as in Theorem~\ref{thm:WL-partition} and we have
    \begin{itemize}
        \item For any $p\in\{1,2,\dots,s\}$, $v_i = \Bar{v}_i$ is constant over all $i\in I_p$,
        \item For any $q\in\{1,2,\dots,t\}$, $w_j = \Bar{w}_j$ is constant over all $j\in J_q$,
        \item For any $p\in\{1,2,\dots,s\}$ and $q\in\{1,2,\dots,t\}$, $\{\{A_{ij}: j\in J_q\}\} = \{\{\Bar{A}_{ij}: j\in J_q\}\}$ is constant over all $i\in I_p$,
        \item For any $p\in\{1,2,\dots,s\}$ and $q\in\{1,2,\dots,t\}$, $\{\{A_{ij}: i\in I_p\}\} = \{\{\Bar{A}_{ij}: i\in I_p\}\}$ is constant over all $j\in J_q$.
    \end{itemize}
    Here, we slightly abuse the notation not to distinguish $\Bar{G}$ and the MILP-graph obtained from $\Bar{G}$ by relabelling vertice in $V$, and these two graphs have exactly the same SB scores since the vertices in $W$ are not relabelled.

    Note that both $G$ and $\Bar{G}$ are MP-tractable, i.e., for any $p\in\{1,2,\dots,s\}$ and $q\in\{1,2,\dots,t\}$, $(A_{ij})_{i\in I_p,j\in J_q}$ and $(\Bar{A}_{ij})_{i\in I_p,j\in J_q}$ are both matrices with identical entries, which combined with the third and the fourth conditions above implies that $A_{ij} = \Bar{A}_{ij}$ for all $i\in I_p$ and $j\in J_q$. Therefore, we have $G = \Bar{G}$ and hence $\textup{SB}(G) = \textup{SB}(\Bar{G})$.

    (b) The result is a directly corollary of (a) by considering $G$ and the MILP-graph obtained from $G$ by relabeling $j_1$ as $j_2$ and relabeling $j_2$ as $j_1$.
\end{proof}

In addition to Theorem~\ref{thm:sameWL2sameSB}, we also need the following two theorem to prove Theorem~\ref{thm:exist_MPGNN}. 

\begin{theorem}[Lusin's theorem {\cite[Theorem 1.14]{evans2018measure}}] \label{thm:lusin}
	Suppose that $\mu$ is a Borel regular measure on $\bR^n$ and that $f:\bR^n\rightarrow\bR^m$ is $\mu$-measurable, i.e., for any open subset $U\subset \bR^m$, $f^{-1}(U)$ is $\mu$-measurable. Then for any $\mu$-measurable $X\subset \bR^n$ with $\mu(X)<\infty$ and any $\epsilon>0$, there exists a compact set $E\subset X$ with $\mu(X\backslash E)<\epsilon$, such that $f|_E$ is continuous.
\end{theorem}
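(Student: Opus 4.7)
The plan is to reduce Lusin's theorem to the scalar case, handle simple functions directly by a compact inner approximation, and then pass to general measurable $f$ via pointwise simple-function approximation combined with Egorov's theorem. First I would reduce to $m=1$: writing $f=(f_1,\dots,f_m)$, each component $f_i\colon\bR^n\to\bR$ is $\mu$-measurable, so if the scalar statement produces compact sets $E_i\subset X$ with $\mu(X\setminus E_i)<\epsilon/m$ and $f_i|_{E_i}$ continuous, then $E:=\bigcap_{i=1}^m E_i$ is compact, satisfies $\mu(X\setminus E)<\epsilon$, and $f|_E$ is continuous componentwise, hence continuous.

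Next I would handle the case where $f$ is a simple function $f=\sum_{i=1}^k a_i\chi_{A_i}$ with pairwise disjoint measurable $A_i$ whose union contains $X$. Using Borel regularity together with the $\sigma$-compactness of $\bR^n$ (intersect with a large closed ball, approximate the complement from outside by an open set, then take closed complements intersected with the ball), I obtain compact $K_i\subset A_i\cap X$ with $\mu((A_i\cap X)\setminus K_i)<\epsilon/k$. Then $E:=\bigcup_{i=1}^k K_i$ is a finite union of pairwise disjoint compact sets; since disjoint compact subsets of $\bR^n$ have positive pairwise distance, each $K_i$ is both open and closed in the subspace topology of $E$, so $f|_E$ is locally constant and therefore continuous, with $\mu(X\setminus E)<\epsilon$.

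For the general scalar case, I would approximate $f$ pointwise by simple functions $f_k\to f$ via a standard dyadic truncation. Applying the simple-function step to each $f_k$ yields compact $E_k\subset X$ with $\mu(X\setminus E_k)<\epsilon\cdot 2^{-k-2}$ and $f_k|_{E_k}$ continuous. Setting $E^*:=\bigcap_k E_k$ gives a measurable set with $\mu(X\setminus E^*)<\epsilon/4$ on which every $f_k$ remains continuous. Because $\mu(E^*)\leq\mu(X)<\infty$ and $f_k\to f$ pointwise on $E^*$, Egorov's theorem delivers a measurable $F\subset E^*$ with $\mu(E^*\setminus F)<\epsilon/4$ on which $f_k\to f$ uniformly, so $f|_F$ is continuous as a uniform limit of continuous functions. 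A final application of the inner compact approximation then extracts a compact $E\subset F$ with $\mu(F\setminus E)<\epsilon/4$, and combining the three error budgets gives $\mu(X\setminus E)<\epsilon$ while $f|_E$ stays continuous.

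The hard part will be the passage from the stated hypothesis of Borel regularity (which in Evans's convention means all Borel sets are $\mu$-measurable and every set has a Borel hull of equal measure) to genuine inner approximation of finite-measure sets by compact sets, which is what Steps 2 and 3 actually need. This is not automatic for abstract measures, but it is standard in $\bR^n$: given a measurable $A$ with $\mu(A)<\infty$, localize by intersecting with an expanding sequence of closed balls, apply Borel regularity to cover the complement by open sets of small excess measure, and take closed complements restricted to the ball to produce compact subsets of $A$ whose complements in $A$ have arbitrarily small measure. This device has to be invoked both when refining the $A_i$'s to $K_i$'s and when extracting $E\subset F$ at the end, and it is the only place where the ambient structure of $\bR^n$ (as opposed to a bare measure space) is used.
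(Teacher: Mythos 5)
Your proof is correct: the reduction to the scalar case, the clopen-decomposition argument for simple functions, the dyadic approximation plus Egorov step, and the use of Borel regularity together with the finiteness of $\mu(X)$ to get inner approximation by compact sets (via restriction to a Radon measure) is precisely the standard argument. The paper itself offers no proof of this statement --- it is quoted verbatim from the cited reference \cite{evans2018measure} --- and your argument follows essentially the same route as the proof given there, so there is nothing substantive to contrast.
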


\begin{theorem}[{\cite[Theorem E.1]{chen2022representing-lp}}]\label{thm:universal-approx-MPGNN}
    Let $X\subset \calG_{m,n}$ be a compact subset that is closed under the action of $S_m\times S_n$. Suppose that $\Phi\in\calC(X,\bR^n)$ satisfies the followings:
    \begin{enumerate}[(a)]
        \item For any $\sigma_V\in S_m,\sigma_W\in S_n$, and $G\in X$, it holds that $\Phi((\sigma_V,\sigma_W)\ast G) = \sigma_W(\Phi(G))$, where $(\sigma_V,\sigma_W)\ast G$ represents the MILP-graph obtained from $G$ by reordering vertices with permutations $\sigma_V$ and $\sigma_W$.
        \item $\Phi(G) = \Phi(\Bar{G})$ holds for all $G,\hat{G}\in X$ with $G\stackrel{W}\sim \Bar{G}$.
        \item Given any $G\in X$ and any $j_1,j_2\in\{1,2,\dots,n\}$, if $C_L^W(j_1) = C_L^W(j_2)$ holds for any $L$ and any hash functions, then $\Phi(G)_{j_1} = \Phi(G)_{j_2}$.
    \end{enumerate}
    Then for any $\epsilon>0$, there exists $F\in\calF_{\textup{MP-GNN}}$ such that
    \begin{equation*}
        \sup_{G\in X} \|\Phi(G) - F(G)\| < \epsilon.
    \end{equation*}
\end{theorem}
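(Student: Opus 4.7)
The plan is to reduce the theorem to the universal approximation result of Theorem~\ref{thm:universal-approx-MPGNN} by restricting the strong-branching mapping to a sufficiently large compact, permutation-closed subset of $\calG_{m,n}^{\textup{MP}}$ on which $\textup{SB}$ is continuous. The three main ingredients are the WL-separation property for MP-tractable instances (Theorem~\ref{thm:sameWL2sameSB}), Borel regularity of $\bP$, and Lusin's theorem (Theorem~\ref{thm:lusin}).

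First, I would construct a good compact set. Because $\bP(\calG_{m,n}^{\textup{MP}}) = 1$ and $\bP[\textup{SB}(G)\in\bR^n] = 1$ by Assumption~\ref{asp:prob}, the set $X_0 = \{G \in \calG_{m,n}^{\textup{MP}} : \textup{SB}(G) \in \bR^n\}$ has full $\bP$-measure. Since $\calG_{m,n}$ decomposes as a finite disjoint union, indexed by the combinatorial type (which bounds are finite, which constraints are $\le/=/\ge$, and which variables are integer), of standard Euclidean pieces, Borel regularity of $\bP$ yields a compact $K \subset X_0$ with $\bP(K) \ge 1-\varepsilon/2$. Applying Theorem~\ref{thm:lusin} piece by piece over the finitely many combinatorial types produces a compact $E' \subset K$ with $\bP(K\setminus E') < \varepsilon/2$ on which $\textup{SB}|_{E'}$ is continuous. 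Setting $E = \bigcup_{(\sigma_V,\sigma_W) \in S_m\times S_n}(\sigma_V,\sigma_W)\cdot E'$, a finite union of compact sets, I obtain a compact subset that is closed under the action of $S_m\times S_n$, still contained in $\calG_{m,n}^{\textup{MP}}$ (MP-tractability is permutation invariant), and still satisfies $\bP(E)\geq 1-\varepsilon$ because $\bP$ need not be symmetric but $E\supset E'$.

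Next, I would verify that $\Phi := \textup{SB}|_E$ satisfies the three hypotheses of Theorem~\ref{thm:universal-approx-MPGNN}. Condition (a), equivariance under $S_m\times S_n$, is immediate from Definition~\ref{def:SB}: relabelling variables permutes the SB vector accordingly while relabelling constraints leaves SB unchanged. Conditions (b) and (c), namely invariance under the graph-level WL equivalence $\stackrel{W}\sim$ and invariance under WL-equivalence of individual variable nodes, are exactly parts (a) and (b) of Theorem~\ref{thm:sameWL2sameSB}, whose hypotheses are satisfied because every element of $E$ lies in $\calG_{m,n}^{\textup{MP}}$ with $\textup{SB}(G)\in\bR^n$. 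Applying Theorem~\ref{thm:universal-approx-MPGNN} to $\Phi$ on $E$ with tolerance $\delta$ yields an MP-GNN $F\in\calF_{\textup{MP-GNN}}$ with $\sup_{G\in E}\|F(G)-\textup{SB}(G)\|\le \delta$. Combined with $\bP(E)\ge 1-\varepsilon$, this is the desired conclusion.

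The main obstacle I expect is twofold and purely technical. First, one must check measurability of $\textup{SB}$ viewed as a map on $\calG_{m,n}$: the LP optimal value is continuous on the open set where the LP is feasible and bounded but can jump to $+\infty$ on the boundary where a bound-change renders the LP infeasible; I would resolve this by writing $f^*_{\textup{LP}}$ as a Borel function on each combinatorial piece (e.g.\ via the max--min formulation together with the fact that $x^*_{\textup{LP}}(G)$ of smallest $\ell_2$-norm is selected uniquely) and using Assumption~\ref{asp:prob} to discard the zero-probability event $\{\textup{SB}(G)\not\in\bR^n\}$ before invoking Lusin. Second, one must execute the compact-set extraction and symmetrisation across the disjoint-union structure of $\calG_{m,n}$ without losing permutation invariance or probability mass; this is bookkeeping but requires care because the piecewise Lusin application must be aggregated into a single continuous restriction. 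Once these steps are cleared, the remainder of the proof is an essentially one-line invocation of Theorems~\ref{thm:sameWL2sameSB} and~\ref{thm:universal-approx-MPGNN}.
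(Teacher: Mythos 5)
There is a fundamental mismatch here: the statement you were asked to prove is Theorem~\ref{thm:universal-approx-MPGNN} itself --- the equivariant universal approximation theorem for $\calF_{\textup{MP-GNN}}$ on a compact permutation-invariant $X$ --- but your argument \emph{invokes} Theorem~\ref{thm:universal-approx-MPGNN} as its main tool. What you have actually written is a proof of Theorem~\ref{thm:exist_MPGNN} (and, as such, it tracks the paper's Appendix~\ref{sec:pf-exist-MPGNN} almost line by line: Lusin's theorem on a full-measure subset of $\calG_{m,n}^{\textup{MP}}$, symmetrization over $S_m\times S_n$, continuity by the pasting lemma, then verification of conditions (a)--(c) via Theorem~\ref{thm:sameWL2sameSB}). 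As a proof of the stated theorem it is circular and proves nothing; no amount of care with the measurability and disjoint-union bookkeeping you flag as the ``main obstacle'' repairs this, because those issues belong to the other theorem.

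A genuine proof of Theorem~\ref{thm:universal-approx-MPGNN} (which this paper does not reproduce --- it cites Theorem~E.1 of \cite{chen2022representing-lp}) has entirely different content. One must (i) show that $\calF_{\textup{MP-GNN}}$ restricted to $X$ generates a subalgebra of $\calC(X,\bR^n)$ containing the constant function $\mathbf{1}$, by explicitly constructing networks realizing sums and pointwise products of given networks; (ii) establish that the separation power of $\calF_{\textup{MP-GNN}}$ coincides with that of the WL test (Algorithm~\ref{alg:WL}), both at the level of whole graphs and at the level of individual variable nodes --- this is the MP-GNN analogue of Theorem~\ref{thm:equiv_2FWL_2FGNN}, requiring an inductive construction of injective-on-finite-sets layers $p^l,q^l,f^l,g^l$; and (iii) feed these facts, together with hypotheses (a)--(c) on $\Phi$, into the generalized equivariant Stone--Weierstrass theorem of \cite{azizian2020expressive} (Theorem~\ref{thm:equivariant_stone_weierstrass} in this paper). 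None of these steps appears in your proposal, so the claimed statement remains unproved.
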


Now we can present the proof of Theorem~\ref{thm:exist_MPGNN}.

\begin{proof}[Proof of Theorem~\ref{thm:exist_MPGNN}]
	Lemma F.2 and Lemma F.3 in \cite{chen2022representing-lp} prove that the function that maps LP instances to its optimal objective value/optimal solution with the smallest $\ell_2$-norm is Borel measurable. Thus, $\textup{SB}:\calG_{m,n}\supset \textup{SB}^{-1}(\bR^n) \rightarrow \bR^n$ is also Borel measurable, and is hence $\bP$-measurable due to Assumption~\ref{asp:prob}. In addition, $\calG_{m,n}^{\textup{MP}}$ is a Borel subset of $\calG_{m,n}$ since the MP-tractability is defined by finitely many operations of comparison and aggregations. By Theorem~\ref{thm:lusin} and the assumption $\bP[G\in\calG_{m,n}^{\textup{MP}}]=1$, there exists a compact subset $X_1\subset \calG_{m,n}^{\textup{MP}}\cap \textup{SB}^{-1}(\bR^n)$ such that $\bP[\calG_{m,n}\backslash X_1]\leq \epsilon$ and $\textup{SB}|_{X_1}$ is continuous. For any $\sigma_V\in S_m$ and $\sigma_W\in S_n$, $(\sigma_V,\sigma_W)\ast X_1$ is also compact and $\textup{SB}|_{(\sigma_V,\sigma_W)\ast X_1}$ is also continuous by the permutation-equivariance of $\textup{SB}$. Set
		\begin{equation*}
			X_2 = \bigcup_{\sigma_V\in S_m,\sigma_W\in S_n} (\sigma_V,\sigma_W)\ast X_1.
		\end{equation*}
		Then $X_2$ is permutation-invariant and compact with
		\begin{equation*}
			\bP[\calG_{m,n}\backslash X_2] \leq \bP[\calG_{m,n}\backslash X_1] \leq \epsilon.
		\end{equation*}
		In addition, $\textup{SB}|_{X_2}$ is continuous by pasting lemma.
		
		The rest of the proof is to apply Theorem~\ref{thm:universal-approx-MPGNN} for $X=X_2$ and $\Phi = \textup{SB}$, for which we need to verify the four conditions in Theorem~\ref{thm:equivariant_stone_weierstrass}. Condition (a) is true since $\textup{SB}$ is permutation-equivalent by its definition. Conditions (b) and (c) follow directly from Theorem~\ref{thm:sameWL2sameSB}. According to Theorem~\ref{thm:universal-approx-MPGNN}, there exists some $F\in\calF_{\textup{2-FGNN}}$ such that
		\begin{equation*}
			\sup_{G\in X_2} \| F(G) - \textup{SB}(G) \| \leq \delta.
		\end{equation*}
		Therefore, one has
		\begin{equation*}
			\bP[\|F(G) -  \textup{SB}(G)\| > \delta] \leq \bP[\calG_{m,n}\backslash X_2]\leq \epsilon,
		\end{equation*}
		which completes the proof.
	\end{proof}

\section{Proof of Theorem~\ref{thm:counter-example}}
\label{sec:pf-counter-example}
In this section, we verify that the MILP instances \eqref{ex1} and \eqref{ex2} prove Theorem~\ref{thm:counter-example}. We will first show that they have different SB scores while cannot be distinguished by any MP-GNNs.

\paragraph{Different SB scores} Denote the graph representation of \eqref{ex1} and \eqref{ex2} as $G$ and $\Bar{G}$, respectively. For both \eqref{ex1} and \eqref{ex2}, the same optimal objective value is $4$ and the optimal solution with the smallest $\ell_2$-norm is $(1/2,1/2,1/2,1/2,1/2,1/2,1/2,1/2)$. To calculate $\textup{SB}(G)_j$ or $\textup{SB}(\Bar{G})_j$, it is necessary create two LPs for each variable $x_j$. In one LP, the upper bound of $x_j$ is set to $\hat{u}_j = \lfloor 1/2 \rfloor = 0$, actually fixing $x_j$ at its lower bound $l_j = 0$. Similarly, the other LP sets $x_j$ to $1$.

For the problem \eqref{ex1}, even if we fix $x_1=1$, the objective value of the LP relaxation can still achieve $4$ by $x = (1,0,1,0,1,0,1,0)$. A similar observation also holds for fixing $x_1=0$. Therefore, the SB score for $x_1$ (also for any $x_j$ in \eqref{ex1}) is $0$. In other words, 
    \begin{equation*}
        \textup{SB}(G) = (0,0,0,0,0,0,0,0).
    \end{equation*}
    
However, for the problem \eqref{ex2}, if we fix $x_1=1$, then the optimal objective value of the LP relaxation is $9/2$ since
	\begin{equation*}
		\sum_{i=1}^8 x_i = 1 + (x_2+x_3) + \frac{1}{2}(x_4+x_5) + \frac{1}{2}(x_5+x_6) + \frac{1}{2}(x_6+x_4)  + (x_7+x_8) \geq 9/2
	\end{equation*}
 and the above inequality is tight as $x = (1,1/2,1/2,1/2,1/2,1/2,1/2,1/2)$.
	If we fix $x_1=0$, then $x_2,x_3\geq 1$ and the optimal objective value of the LP relaxation is also $9/2$ since
	\begin{equation*}
		\sum_{i=1}^8 x_i \geq 0 + 1 + 1 + \frac{1}{2}(x_4+x_5) + \frac{1}{2}(x_5+x_6) + \frac{1}{2}(x_6+x_4) + (x_7+x_8) \geq 9/2,
	\end{equation*}
    and the equality holds when $x = (0,1,1,1/2,1/2,1/2,1/2,1/2)$.
	Therefore, the the SB score for $x_1$ (also for any $x_i\ (1\leq i\leq 6)$ in \eqref{ex2}) is $(9/2-4)\cdot (9/2-4) = 1/4$. If we fix $x_7 = 1$, the optimal objective value of the LP relaxation is still $4$ since $(1/2,1/2,1/2,1/2,1/2,1/2,1,0)$ is an optimal solution. A similar observation still holds if $x_7$ is fixed to $0$. Thus the SB scores for $x_7$ and $x_8$ are both $0$. Combining these calculations, we obtain that 
    \begin{equation*}
        \textup{SB}(\Bar{G}) = \left(\frac{1}{4},\frac{1}{4},\frac{1}{4},\frac{1}{4},\frac{1}{4},\frac{1}{4},0,0\right).
    \end{equation*}

\paragraph{MP-GNNs' output} Although $G$ and $\Bar{G}$ are non-isomorphic with different SB scores, they still have the same output for every MP-GNN. We prove this by induction. 
Referencing the graph representations in Section \ref{sec:preliminary}, we explicitly write down the features:
\[ v_i = \Bar{v}_i = (1, \geq), ~~~ w_j = \Bar{w}_j = (1,0,1,1),~~~\textup{for all }i \in \{1,\cdots,8\},\ j \in  \{1,\cdots,8\}.\]
Considering the MP-GNN's initial step where $s_i^0 = p^0(v_i)$ and $t_j^0 = q^0(w_j)$, we can conclude that $s_i^0=\Bar{s}_i^0$ is a constant for all $i$ and $t_j^0=\Bar{t}_j^0$ is a constant for all $j$, regardless of the choice of functions $p^0$ and $q^0$. Thus, the initial layer generates uniform outcomes for nodes in $V$ and $W$ across both graphs, which is the induction base.
Suppose that the principle of uniformity applies to  $s_i^l,\Bar{s}_i^l,t_j^l,\Bar{t}_j^l$ for some $0\leq l\leq L-1$. 
Since $s_i^l,\Bar{s}_i^l$ are constant across all $i$, we can denote their common value as $s^l$ and hence $s^l=s_i^l=\Bar{s}_i^l$ for all $i$. Similarly, we can define $t^l$ with $t^l=t_j^l=\Bar{t}_j^l$ for all $j$. Then it holds that
\begin{equation*}
    s_i^{l+1} = \Bar{s}_i^{l+1} = p^l\big(s^{l}, 2 f^l(t^{l},1)\big) ~~\textup{and} ~~ t_j^{l+1} = \Bar{t}_j^{l+1} = q^l\big(t^{l}, 2 g^l(s^{l},1)\big),
\end{equation*}
where we used $\{\{ A_{ij'}:j'\in W\}\} = \{\{ \Bar{A}_{ij'} : j'\in W\}\} = \{\{ A_{i'j} :i'\in W\}\} = \{\{ \Bar{A}_{i'j}:i'\in W\}\} = \{\{1,1,0,0,0,0,0,0\}\}$ for all $i$ and $j$. This proves the uniformity for $l+1$. Therefore, we obtain the existence of $s^L,t^L$ such that $s_i^L=\Bar{s}_i^L=s^L$ and $t_j^L=\Bar{t}_j^L=t^L$ for all $i,j$.
Finally, the output layer yields:
\[ y_j = \Bar{y}_j = r\big( 8 s^L, 8 t^L, t^L \big) ~~~\textup{for all }j \in \{1,\cdots,8\},\]
which finishes the proof.

\section{Proof of Theorem~\ref{thm:exist_2FGNN}}
\label{sec:pf_2FGNN}

This section presents the proof of Theorem~\ref{thm:exist_2FGNN}. The central idea is to establish a separation result in the sense that \textit{two MILPs with distinct SB scores must be distinguished by at least one $F\in\calF_{\textup{2-FGNN}}$}, and then apply a generalized Stone-Weierstrass theorem in \cite{azizian2020expressive}.

\subsection{2-FWL test and its separation power}
    \label{sec:2FWL_equiv}

The 2-FWL test \cite{cai1992optimal}, as an extension to the classic WL test \cite{weisfeiler1968reduction}, is a more powerful algorithm for the graph isomorphism problem. By applying the 2-FWL test algorithm (formally stated in Algorithm~\ref{alg:2-FWL}) to two graphs and comparing the outcomes, one can determine the non-isomorphism of the two graphs if the results vary. However, identical 2-FWL outcomes do not confirm isomorphism. Although this test does not solve the graph isomorphism problem entirely, it can serve as a measure of 2-FGNN's separation power, analogous to how the WL test applies to MP-GNN \cite{xu2019powerful}.

\begin{algorithm}[htb!]
\caption{$2$-FWL test for MILP-Graphs}\label{alg:2-FWL}
\begin{algorithmic}[1]
\State \textbf{Input:} A graph instance $G = (V,W,A,F_V, F_W)$ and iteration limit $L>0$.
\State Initialize with
\begin{align*}
    C_0^{VW}(i,j) & = \textup{HASH}_0^{VW}(v_i,w_j,A_{ij}), \\
    C_{0}^{WW}(j_1,j_2) & = \textup{HASH}_{0}^{WW}(w_{j_1},w_{j_2},\delta_{j_1 j_2}).
\end{align*}

\For{$l=1,2,\dots, L$}
    \State Refine the color
    \begin{align*}
        C_l^{VW} (i,j) & = \textup{HASH}_l^{VW} \left(C_{l-1}^{VW}(i,j), \left\{\left\{(C_{l-1}^{WW}(j_1,j), C_{l-1}^{VW}(i,j_1)) : j_1\in W\right\}\right\}\right),\\
        C_l^{WW} (j_1,j_2) & = \textup{HASH}_l^{WW} \left(C_{l-1}^{WW}(j_1,j_2), \left\{\left\{(C_{l-1}^{VW}(i,j_2), C_{l-1}^{VW}(i,j_1)) : i\in V\right\}\right\}\right).
    \end{align*}
\EndFor
\State \textbf{Output:} Final colors $C_L^{VW} (i,j)$ for all $i \in V, j \in W$ and $C_L^{WW} (j_1,j_2)$ for all $j_1,j_2 \in W$.
\end{algorithmic}
\end{algorithm}

In particular, given the input graph $G$, the 2-FWL test assigns a color for every pair of nodes in the form of $(i,j)$ with $i\in V, j\in W$ or $(j_1,j_2)$ with $j_1,j_2\in W$. The initial colors are assigned based on the input features and the colors are refined to subcolors at each iteration in the way that two node pairs are of the same subcolor if and only if they have the same color and the same neighbors' color information. 
Here, the neighborhood of $(i,j)$ involves $\left\{\left\{((j_1,j), (i,j_1)) : j_1\in W\right\}\right\}$ and the neighborhood of $(j_1,j_2)$ involves $\left\{\left\{((i,j_2), (i,j_1)) : i\in V\right\}\right\}$. 
After sufficient iterations, the final colors are determined. If the final color multisets of two graphs $G$ and $\Bar{G}$ are identical, they are deemed indistinguishable by the 2-FWL test, denoted by $G \sim_2 \Bar{G}$. One can formally define the separation power of 2-FWL test via two equivalence relations on $\calG_{m,n}$ as follows.

	\begin{definition}
		Let $G,\Bar{G}\in\calG_{m,n}$ and let $C_l^{VW} (i,j),C_l^{WW} (j_1,j_2)$ and $\Bar{C}_l^{VW} (i,j),\Bar{C}_l^{WW} (j_1,j_2)$ be the colors generated by 2-FWL test for $G$ and $\Bar{G}$. 
		\begin{enumerate}[(a)]
			\item We define $G\sim_2 \Bar{G}$ if the followings hold for any $L$ and any hash functions:
			\begin{align}
				\left\{\left\{ C_L^{VW}(i,j) : i\in V,j\in W \right\}\right\} & = \left\{\left\{ \Bar{C}_L^{VW}(i,j) : i\in V,j\in W \right\}\right\}, \label{eq:equiv_relation_1}\\
				\left\{\left\{ C_L^{WW}(j_1,j_2) : j_1,j_2\in W \right\}\right\} & = \left\{\left\{ \Bar{C}_L^{WW}(j_1,j_2) : j_1,j_2\in W \right\}\right\}. \label{eq:equiv_relation_2}
			\end{align}
			\item We define $G\stackrel{W}\sim_2 \Bar{G}$ if the followings hold for any $L$ and any hash functions:
			\begin{align}
				\left\{\left\{ C_L^{VW}(i,j) : i\in V \right\}\right\} & = \left\{\left\{ \Bar{C}_L^{VW}(i,j) : i\in V\right\}\right\},\quad\forall~j\in W, \label{eq:equiv_relation_W1}\\
				\left\{\left\{ C_L^{WW}(j_1,j) : j_1\in W \right\}\right\} & = \left\{\left\{ \Bar{C}_L^{WW}(j_1,j) : j_1\in W \right\}\right\},\quad\forall~j\in W. \label{eq:equiv_relation_W2}
			\end{align}
		\end{enumerate}
	\end{definition}

    It can be seen that \eqref{eq:equiv_relation_W1} and \eqref{eq:equiv_relation_W2} are stronger than \eqref{eq:equiv_relation_1} and \eqref{eq:equiv_relation_2}, since the latter requires that the entire color multiset is the same while the former requires that the color multiset associated with every $j\in W$ is the same. However, we can show that they are equivalent up to a permutation.
	
	\begin{theorem}\label{thm:2FWL_equiv}
		For any $G,\Bar{G}\in \calG_{m,n}$, $G\sim_2 \Bar{G}$ if and only if there exists a permutation $\sigma_W\in S_n$ such that $G\stackrel{W}\sim_2 \sigma_W\ast \Bar{G}$, where $\sigma_W\ast \Bar{G}$ is the graph obtained by relabeling vertices in $W$ using $\sigma_W$.
	\end{theorem}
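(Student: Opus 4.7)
My plan begins with the easier ($\Leftarrow$) direction. Supposing $G \stackrel{W}\sim_2 \sigma_W \ast \Bar{G}$ for some $\sigma_W \in S_n$, summing the per-$j$ multiset equalities \eqref{eq:equiv_relation_W1} and \eqref{eq:equiv_relation_W2} over $j \in W$ immediately yields \eqref{eq:equiv_relation_1} and \eqref{eq:equiv_relation_2} for the pair $(G, \sigma_W \ast \Bar{G})$. Since relabeling $W$ by a permutation only renames pair indices and leaves the global multiset of colors invariant, I also have $\sigma_W \ast \Bar{G} \sim_2 \Bar{G}$, and therefore $G \sim_2 \Bar{G}$.

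For the nontrivial ($\Rightarrow$) direction, I fix hash functions with no collision (the standard choice in WL-type analyses) and run Algorithm~\ref{alg:2-FWL} for $L$ large enough that both partitions have stabilized. Because the initial color $C_0^{WW}(j_1,j_2)$ depends on $\delta_{j_1 j_2}$, the diagonal pairs begin with a color disjoint from that of any off-diagonal pair, and this separation is preserved at every subsequent iteration. Restricting the global multiset equality \eqref{eq:equiv_relation_2} to the diagonal then gives $\{\{C_L^{WW}(j,j) : j \in W\}\} = \{\{\Bar{C}_L^{WW}(j,j) : j \in W\}\}$, allowing me to pick a bijection $\sigma_W \in S_n$ with $C_L^{WW}(j,j) = \Bar{C}_L^{WW}(\sigma_W(j), \sigma_W(j))$ for every $j$. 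After relabeling $\Bar{G}$ by $\sigma_W$, it remains to verify \eqref{eq:equiv_relation_W1} and \eqref{eq:equiv_relation_W2} under this pairing.

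The crux is to show that equality of diagonal colors promotes to equality of per-vertex color profiles. Stability means one more refinement preserves the partition, so whenever two pairs share a $C_L$-color, the refinement multisets attached to them must coincide. Applied to $(j,j)$ and $(\sigma_W(j), \sigma_W(j))$, the $WW$-refinement
\begin{equation*}
    C_{l+1}^{WW}(j,j) = \textup{HASH}\bigl(C_l^{WW}(j,j),\, \{\{(C_l^{VW}(i,j), C_l^{VW}(i,j)) : i\in V\}\}\bigr)
\end{equation*}
delivers $\{\{C_L^{VW}(i,j) : i \in V\}\} = \{\{\Bar{C}_L^{VW}(i, \sigma_W(j)) : i \in V\}\}$, which is \eqref{eq:equiv_relation_W1}. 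Choosing any $(i,\Bar{i}) \in V \times V$ realizing this multiset equality and applying stability to the $VW$-refinement
\begin{equation*}
    C_{l+1}^{VW}(i,j) = \textup{HASH}\bigl(C_l^{VW}(i,j),\, \{\{(C_l^{WW}(j_1,j), C_l^{VW}(i,j_1)) : j_1 \in W\}\}\bigr)
\end{equation*}
then forces the $W$-indexed multisets of pairs at $(i,j)$ and $(\Bar{i}, \sigma_W(j))$ to coincide, and projecting onto the first coordinate yields \eqref{eq:equiv_relation_W2}. The main obstacle is precisely this promotion step, where the stability property must be invoked twice and chained carefully to transfer a diagonal equality into the two per-vertex multiset equalities; once that is justified, the conclusion $G \stackrel{W}\sim_2 \sigma_W \ast \Bar{G}$ follows immediately.
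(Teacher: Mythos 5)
Your proof is correct, and in the nontrivial direction it reaches \eqref{eq:equiv_relation_W1} by a genuinely different route than the paper. The paper never touches the diagonal: for each column profile $\mathbf{C}$ it forms the multiset of $C_L^{WW}$-colors over all pairs $(j_1,j_2)$ whose two column profiles both equal $\mathbf{C}$, deduces from \eqref{eq:equiv_relation_2} that these multisets agree between $G$ and $\Bar{G}$, extracts the count of columns with profile $\mathbf{C}$ by comparing cardinalities (which are the squares of those counts), and only then chooses $\sigma_W$ to match column profiles. You instead exploit the fact that $\delta_{j_1 j_2}$ forces diagonal and off-diagonal pairs to carry disjoint colors at every iteration, restrict \eqref{eq:equiv_relation_2} to the diagonal to obtain $\{\{C_L^{WW}(j,j) : j\in W\}\} = \{\{\Bar{C}_L^{WW}(j,j) : j\in W\}\}$, choose $\sigma_W$ to match diagonal colors, and then use stability of the $WW$-update at $(j,j)$ to read the column profile off the diagonal color. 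This eliminates the counting argument entirely and is arguably cleaner; your bijection is a refinement of the paper's, since matching diagonal colors implies matching profiles. The remaining step---selecting $i,\Bar{i}$ realizing \eqref{eq:equiv_relation_W1} and projecting the stabilized $VW$-update onto its first coordinate to obtain \eqref{eq:equiv_relation_W2}---coincides with the paper's. Two small points you share with the paper rather than improve on: the selection of $(i,\Bar{i})$ tacitly assumes $V\neq\emptyset$, and one should say explicitly that verifying the identities for a single stable, collision-free run of Algorithm~\ref{alg:2-FWL} suffices because any other choice of $L$ and hash functions induces a partition that is no finer.
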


    One can understand that both $G\sim_2\Bar{G}$ and $G\stackrel{W}\sim_2 \Bar{G}$ mean that $G$ and $\Bar{G}$ cannot be distinguished by 2-FWL test, with the difference that $G\sim_2\Bar{G}$ allows a permutation on $W$.
	
	\begin{proof}[Proof of Theorem~\ref{thm:2FWL_equiv}]
		It is clear that $G\stackrel{W}\sim_2 \sigma_W\ast \Bar{G}$ implies that $G\sim_2 \Bar{G}$. We then prove the reverse direction, i.e., $G\sim_2 \Bar{G}$ implies $G\stackrel{W}\sim_2 \sigma_W\ast \Bar{G}$ for some $\sigma_W\in S_n$. It suffices to consider $L$ and hash functions such that there are no collisions in Algorithm~\ref{alg:2-FWL} and no strict color refinement in the $L$-th iteration when $G$ and $\Bar{G}$ are the input, which means that two edges are assigned with the same color in the $L$-th iteration if and only if their colors are the same in the $(L-1)$-th iteration. 
		For any $j_1,j_2,j_1',j_2'\in W$, it holds that
		\begin{align*}
			& C_L^{WW}(j_1,j_2) = C_L^{WW}(j_1',j_2') \\
			\implies & \left\{\left\{(C_L^{VW}(i,j_2), C_L^{VW}(i,j_1)) : i\in V\right\}\right\} = \left\{\left\{(C_L^{VW}(i,j_2'), C_L^{VW}(i,j_1')) : i\in V\right\}\right\} \\
			\implies & \left\{\left\{C_L^{VW}(i,j_1) : i\in V\right\}\right\} = \left\{\left\{C_L^{VW}(i,j_1') : i\in V\right\}\right\}\ \textup{and}\\
			& \left\{\left\{C_L^{VW}(i,j_2) : i\in V\right\}\right\} = \left\{\left\{C_L^{VW}(i,j_2') : i\in V\right\}\right\}.
		\end{align*}
		Similarly, one has that
		\begin{align*}
			& C_L^{WW}(j_1,j_2) = \Bar{C}_L^{WW}(j_1',j_2') \\
			\implies & \left\{\left\{C_L^{VW}(i,j_1) : i\in V\right\}\right\} = \left\{\left\{\Bar{C}_L^{VW}(i,j_1') : i\in V\right\}\right\}\ \textup{and}\\
			& \left\{\left\{C_L^{VW}(i,j_2) : i\in V\right\}\right\} = \left\{\left\{\Bar{C}_L^{VW}(i,j_2') : i\in V\right\}\right\},
		\end{align*}
		and that
		\begin{align*}
			& \Bar{C}_L^{WW}(j_1,j_2) = \Bar{C}_L^{WW}(j_1',j_2') \\
			\implies & \left\{\left\{\Bar{C}_L^{VW}(i,j_1) : i\in V\right\}\right\} = \left\{\left\{\Bar{C}_L^{VW}(i,j_1') : i\in V\right\}\right\}\ \textup{and}\\
			& \left\{\left\{\Bar{C}_L^{VW}(i,j_2) : i\in V\right\}\right\} = \left\{\left\{\Bar{C}_L^{VW}(i,j_2') : i\in V\right\}\right\}.
		\end{align*}
		Therefore, for any
		\begin{equation*}
			\mathbf{C}\in \left\{\left\{\left\{ C_L^{VW}(i,j) : i\in V \right\}\right\} : j\in W \right\} \cup \left\{\left\{\left\{ \Bar{C}_L^{VW}(i,j) : i\in V \right\}\right\} : j\in W \right\},
		\end{equation*}
		it follows from \eqref{eq:equiv_relation_2} that
		\begin{equation}\label{eq:Cj1j2}
			\begin{split}
				& \left\{\left\{C_L^{WW}(j_1,j_2) : \left\{\left\{C_L^{VW}(i,j_1) : i\in V\right\}\right\} = \left\{\left\{C_L^{VW}(i,j_2) : i\in V\right\}\right\} = \mathbf{C}\right\}\right\} \\
				= & \left\{\left\{\Bar{C}_L^{WW}(j_1,j_2) : \left\{\left\{\Bar{C}_L^{VW}(i,j_1) : i\in V\right\}\right\} = \left\{\left\{\Bar{C}_L^{VW}(i,j_2) : i\in V\right\}\right\} = \mathbf{C}\right\}\right\}.
			\end{split}
		\end{equation}
		Particularly, the number of elements in the two multisets in \eqref{eq:Cj1j2} should be the same, which implies that
		\begin{equation*}
			\# \left\{ j\in W : \left\{\left\{C_L^{VW}(i,j) : i\in V\right\}\right\} = \mathbf{C}\right\} = \# \left\{ j\in W : \left\{\left\{\Bar{C}_L^{VW}(i,j) : i\in V\right\}\right\} = \mathbf{C}\right\},
		\end{equation*}
		which then leads to
		\begin{equation*}
			\left\{\left\{\left\{\left\{ C_L^{VW}(i,j) : i\in V \right\}\right\} : j\in W \right\}\right\} = \left\{\left\{\left\{\left\{ \Bar{C}_L^{VW}(i,j) : i\in V \right\}\right\} : j\in W \right\}\right\}.
		\end{equation*}
		One can hence apply some permutation on $W$ to obtain \eqref{eq:equiv_relation_W1}. Next we prove \eqref{eq:equiv_relation_W2}. For any $j\in W$, we have
		\begin{align*}
			& \left\{\left\{ C_L^{VW}(i,j) : i\in V \right\}\right\} = \left\{\left\{ \Bar{C}_L^{VW}(i,j) : i\in V\right\}\right\} \\
			\implies & C_L^{VW}(i_1,j) = \Bar{C}_L^{VW}(i_2,j)\quad\textup{for some }i_1,i_2\in V \\
			\implies & \left\{\left\{(C_L^{WW}(j_1,j), C_{l-1}^{VW}(i_1,j_1)) : j_1\in W\right\}\right\} = \left\{\left\{(\Bar{C}_L^{WW}(j_1,j), \Bar{C}_{l-1}^{VW}(i_2,j_1)) : j_1\in W\right\}\right\}\\
			& \qquad\qquad \textup{for some }i_1,i_2\in V \\
			\implies & \left\{\left\{C_L^{WW}(j_1,j) : j_1\in W\right\}\right\} = \left\{\left\{\Bar{C}_L^{WW}(j_1,j) : j_1\in W\right\}\right\},
		\end{align*}
		which completes the proof.
	\end{proof}
	
\subsection{SB scores of MILPs distinguishable by 2-FWL test}
 \label{sec:pf_2FWL_SB}

 The following theorem establishes that the separation power of 2-FWL test is stronger than or equal to that of SB, in the sense that two MILP-graphs, or two vertices in a single graph, that cannot be distinguished by the 2-FWL test must share the same SB score.
	
	\begin{theorem}\label{mainthm:2FWL_SB}
		For any $G,\Bar{G}\in\calG_{m,n}$, the followings are true:
		\begin{enumerate}[(a)]
			\item If $G\stackrel{W}\sim_2 \Bar{G}$, then $\textup{SB}(G) = \textup{SB}(\Bar{G})$.
			\item If $G\sim_2\Bar{G}$, then there exists some permutation $\sigma_W\in S_n$ such that $\textup{SB}(G) = \sigma_W(\textup{SB}(\Bar{G}))$.
			\item If $\left\{\left\{ C_L^{WW}(j,j_1) : j\in W \right\}\right\} = \left\{\left\{ C_L^{WW}(j,j_2) : j\in W \right\}\right\}$ holds for any $L$ and any hash functions, then $\textup{SB}(G)_{j_1} = \textup{SB}(G)_{j_2}$.
		\end{enumerate}
	\end{theorem}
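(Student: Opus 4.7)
\textbf{Proof proposal for Theorem~\ref{mainthm:2FWL_SB}.}

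The plan is to reduce parts (b) and (c) to part (a) and then to prove (a) by isolating a central lemma about LP optima. For (b), Theorem~\ref{thm:2FWL_equiv} already furnishes a permutation $\sigma_W\in S_n$ with $G\stackrel{W}\sim_2 \sigma_W\ast\Bar{G}$; once (a) is available, the $W$-equivariance of the map $G\mapsto\textup{SB}(G)$ built into Definition~\ref{def:SB} gives $\textup{SB}(G)=\textup{SB}(\sigma_W\ast\Bar{G})=\sigma_W(\textup{SB}(\Bar{G}))$. For (c), the hypothesis $\{\{C_L^{WW}(j,j_1):j\in W\}\}=\{\{C_L^{WW}(j,j_2):j\in W\}\}$ combined with the refinement rule of Algorithm~\ref{alg:2-FWL} also forces $\{\{C_L^{VW}(i,j_1):i\in V\}\}=\{\{C_L^{VW}(i,j_2):i\in V\}\}$, because otherwise a discrepancy in the $VW$-colors would propagate into the $WW$-colors at the next iteration. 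Swapping labels $j_1\leftrightarrow j_2$ in $G$ then yields a graph $G'$ with $G\stackrel{W}\sim_2 G'$ as $W$-permutations, whence (a) gives $\textup{SB}(G)_{j_1}=\textup{SB}(G')_{j_1}=\textup{SB}(G)_{j_2}$.

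For part (a), the SB coordinate at $j$ is the product of two differences of three LP optima $f^*_{\textup{LP}}(G)$, $f^*_{\textup{LP}}(G,j,l_j,\hat u_j)$, $f^*_{\textup{LP}}(G,j,\hat l_j,u_j)$ with thresholds read off from $x^*_{\textup{LP}}(G)_j$. It therefore suffices to establish a central lemma: if $G\stackrel{W}\sim_2 \Bar{G}$, then $f^*_{\textup{LP}}(G)=f^*_{\textup{LP}}(\Bar{G})$ and $x^*_{\textup{LP}}(G)_j=x^*_{\textup{LP}}(\Bar{G})_j$ for every $j\in W$. Given this, $\hat u_j$ and $\hat l_j$ coincide on both graphs, and the bound-modification $G\mapsto G(j,l_j,\hat u_j)$ edits only the node feature $w_j$; since this edit is applied symmetrically on $G$ and $\Bar{G}$ and affects the initial colors $C_0^{VW}(\cdot,j)$ and $C_0^{WW}(\cdot,j)$ in the same way, the modified graphs remain $\stackrel{W}\sim_2$-equivalent. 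A second application of the central lemma to the modified pairs then yields equality of the remaining two LP optima, and the products defining $\textup{SB}(G)_j$ and $\textup{SB}(\Bar{G})_j$ coincide.

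To prove the central lemma I would adapt the LP-representation technique of \cite{chen2022representing-lp} to the 2-FWL setting. The crucial structural advantage over the plain WL case is that $C_0^{VW}(i,j)=\textup{HASH}_0^{VW}(v_i,w_j,A_{ij})$ already records the exact coefficient $A_{ij}$ inside a pair color, so coefficient data never gets blurred across a block of constraints and one does not need the MP-tractability hypothesis. Concretely, I would use the per-column identities \eqref{eq:equiv_relation_W1}--\eqref{eq:equiv_relation_W2} at a stabilized iteration to transfer LP certificates: given a primal-dual optimal pair $(x^*,y^*)$ for $G$, I would construct a dual certificate $\Bar y^*$ for $\Bar{G}$ supported on matched 2-FWL color classes of constraints while keeping $\Bar x^*=x^*$ (no $W$-reindexing is performed by $\stackrel{W}\sim_2$), verify feasibility and complementary slackness using the 2-FWL-preserved multisets of weighted incidences at each $j$, and conclude equality of LP values. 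The smallest-$\ell_2$-norm tiebreaker is then preserved automatically since the feasible region and objective $c^\top x$ (with $c_j$ depending only on $w_j=\Bar w_j$) coincide on the $W$-coordinates at each $j$.

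The main obstacle is the column-stitching step inside the central lemma: the hypothesis $\{\{C_L^{VW}(i,j):i\}\}=\{\{\Bar{C}_L^{VW}(i,j):i\}\}$ produces, for each fixed $j$, a local bijection between the rows of $A$ and $\Bar{A}$, but these local bijections need not be realized by a single global $V$-permutation. Overcoming this requires carefully iterating the 2-FWL refinement and using a Hall-type marriage or stable-matching argument on the $V$-side 2-FWL partition to stitch the per-column certificates into a globally consistent dual transfer. A secondary subtlety is ensuring the bound-modified graphs stay $\stackrel{W}\sim_2$-equivalent after the feature change at $w_j$; this is a symmetric-perturbation check that reduces to tracking how identical edits at $j$ propagate through a few rounds of the 2-FWL refinement.
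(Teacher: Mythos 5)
Your reductions of (b) and (c) to (a) coincide with the paper's: (b) is Theorem~\ref{thm:2FWL_equiv} plus the permutation-equivariance of $\textup{SB}$, and (c) is (a) applied to $G$ and the graph with $j_1,j_2$ swapped. Your decomposition of (a) into ``2-FWL equivalence forces equality of the three LP optima entering Definition~\ref{def:SB}'' is also the paper's plan (Corollary~\ref{cor:2FWL_LP_relaxation} and Theorem~\ref{thm:2FWL_SB}). The problem is that the one step carrying all the technical weight --- your ``central lemma'' --- is not proved, and the route you sketch for it contains a genuine gap that you yourself flag as the main obstacle. You propose transferring a primal--dual certificate from $G$ to $\Bar{G}$ and then repairing the fact that the per-column bijections between rows of $A$ and $\Bar{A}$ need not cohere into a global $V$-permutation by ``a Hall-type marriage or stable-matching argument.'' No such global matching is available in general (the two graphs need not be isomorphic, and a system of distinct representatives would not preserve the coefficients entrywise), so this plan does not close. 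The paper avoids the issue entirely: it fixes $j$, builds the partitions $\calI,\calJ$ from the stabilized colors $C_L^{VW}(\cdot,j)$ and $C_L^{WW}(\cdot,j)$, and verifies only the \emph{aggregated} conditions of Theorem~\ref{thm:same_LP_properties} --- that the class-wise sums $\sum_{j_1\in J_q}A_{ij_1}$ and $\sum_{i\in I_p}A_{ij_1}$ agree between $G$ and $\Bar{G}$ and are constant over classes. These sum conditions follow directly from the multiset equalities of pair colors and require no row matching whatsoever; the cited theorem (proved by averaging/symmetrization over color classes, not by dual transfer) then gives equality of optimal values and of the minimal-$\ell_2$-norm solutions. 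Relatedly, your assertion that one can keep $\Bar{x}^*=x^*$ because ``the feasible region and objective coincide'' is unjustified: the feasible regions of $G$ and $\Bar{G}$ need not coincide as subsets of $\bR^n$; one only gets $x^*_{\textup{LP}}(G)_j=x^*_{\textup{LP}}(\Bar{G})_j$ one coordinate at a time, via a separate application of the LP theorem for each $j$.

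Your handling of the bound modification also takes a detour the paper does not need. You propose to show that $\textup{LP}(G,j,l_j,\hat{u}_j)$ and $\textup{LP}(\Bar{G},j,l_j,\hat{u}_j)$ arise from graphs that are still $\stackrel{W}\sim_2$-equivalent (a ``symmetric-perturbation check'' you leave open). The paper instead proves Theorem~\ref{thm:2FWL_SB} for \emph{arbitrary} $\hat{l}_j,\hat{u}_j$ in one pass: the color class of the diagonal pair $(j,j)$ is a singleton $J_1=\{j\}$ (because $\delta_{jj}=1$ distinguishes it at initialization), so condition (b) of Theorem~\ref{thm:same_LP_properties} --- constancy of $(c_{j'},l_{j'},u_{j'})$ over each $J_q$ --- survives the replacement of $(l_j,u_j)$ by $(\hat{l}_j,\hat{u}_j)$ on both sides, and no re-run of the 2-FWL test on modified graphs is required. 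In summary: the skeleton of your argument is the paper's, but the load-bearing lemma is left with an acknowledged and unresolved obstruction, and the obstruction dissolves once one works with class-wise sums rather than exact row correspondences.
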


    We briefly describe the intuition behind the proof here. The color updating rule of 2-FWL test is based on monitoring triangles while that of the classic WL test is based on tracking edges. More specifically, in 2-FWL test colors are defined on node pairs and neighbors share the same triangle, while in WL test colors are equipped with nodes with neighbors being connected by edges. When computing the $j$-th entry of $\textup{SB}(G)$, we change the upper/lower bound of $x_j$ and solve two LP problems. We can regard $j\in W$ as a special node and if we fixed it in 2-FWL test, a triangle containing $j\in W$ will be determined by the other two nodes, one in $V$ and one in $W$, and their edge. This ``reduces'' to the setting of WL test. It is proved in \cite{chen2022representing-lp} that the separation power of WL test is stronger than or equal to the properties of LPs. This is to say that even when fixing a special node, the 2-FWL test still has enough separation power to distinguish different LP properties and hence 2-FWL test could separate different SB scores. We present the detailed proof of Theorem~\ref{mainthm:2FWL_SB} in the rest of this subsection.

	\begin{theorem}\label{thm:2FWL_SB}
		For any $G,\Bar{G}\in\calG_{m,n}$, if $G\stackrel{W}\sim_2 \Bar{G}$, then for any $j\in \{1,2,\dots,n\}$, $\hat{l}_j\in \{-\infty\}\cup\bR$, and $\hat{u}_j\in\bR\cup\{+\infty\}$, the two LP problems $\textup{LP}(G,j,\hat{l}_j,\hat{u}_j)$ and $\textup{LP}(\Bar{G},j,\hat{l}_j,\hat{u}_j)$ have the same optimal objective value.
	\end{theorem}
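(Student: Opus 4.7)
The plan is to reduce this LP-equivalence statement to the LP-representation theorem in \cite{chen2022representing-lp}, which guarantees that two LP-graphs with matching WL color multisets share the same optimal objective value.

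First I would construct, for the fixed $j$, the LP-graphs $G^j$ and $\Bar{G}^j$ corresponding to $\textup{LP}(G,j,\hat{l}_j,\hat{u}_j)$ and $\textup{LP}(\Bar{G},j,\hat{l}_j,\hat{u}_j)$: drop the integrality constraints on all variables and replace the bounds of vertex $j\in W$ by $\hat{l}_j$ and $\hat{u}_j$. The only local modification relative to $G$ and $\Bar{G}$ is concentrated at the distinguished vertex $j$, which we regard as \emph{marked}.

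Next I would introduce a $j$-anchored variant of the WL test on $G^j$, whose initial colors are augmented by the pair-information $(v_i,w_j',A_{ij})$ for $i\in V$ and $(w_{j_1},w_j',\delta_{j_1 j})$ for $j_1\in W$, where $w_j'$ is the modified feature of $j$ in $G^j$. The key inductive lemma is that the 2-FWL pair-colors anchored at $j$, namely $C_l^{VW}(\cdot,j)$ on $V$ and $C_l^{WW}(\cdot,j)$ on $W$, refine the $j$-anchored WL colors on $G^j$ for every $l$. The base case is immediate from the initialization rules in Algorithm~\ref{alg:2-FWL}. The inductive step relies on the fact that the 2-FWL aggregation ranges over all of $W$ (respectively all of $V$), which strictly contains the neighborhoods $\calN_W(i)$ (respectively $\calN_V(j_1)$) used by WL, while non-neighbors are identified by the zero edge weight recorded inside the 2-FWL pair-colors; thus the 2-FWL multiset of neighbor information determines the WL multiset.

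Combining this inductive lemma with the hypothesis $G\stackrel{W}\sim_2\Bar{G}$, which asserts that for every $j$ the multisets $\{\{C_L^{VW}(i,j):i\in V\}\}$ and $\{\{C_L^{WW}(j_1,j):j_1\in W\}\}$ coincide between $G$ and $\Bar{G}$, I would conclude that the multisets of $j$-anchored WL colors on $G^j$ and on $\Bar{G}^j$ also coincide at every iteration. The LP-representation theorem of \cite{chen2022representing-lp} applied to $G^j$ and $\Bar{G}^j$ then implies that their optimal objective values are equal, completing the proof. The hard part will be the inductive refinement step: making the anchored WL test on $G^j$ precise enough that its update rule can be explicitly simulated by a function of the 2-FWL pair-colors restricted to the second coordinate $j$, and reconciling the mismatch between 2-FWL's full-range aggregation and WL's neighborhood aggregation. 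A secondary subtlety is ensuring that the LP-representation theorem from \cite{chen2022representing-lp} applies verbatim to the marked LP-graphs $G^j$, which may require absorbing the mark on $j$ into its vertex features so that the standard result applies unchanged.
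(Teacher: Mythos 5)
Your proposal follows essentially the same route as the paper's proof: fix $j$, use the $j$-anchored 2-FWL pair colors $C_L^{VW}(\cdot,j)$ and $C_L^{WW}(\cdot,j)$ (in which $j$ is isolated into a singleton class by $\delta_{jj}$, so the bound modification is applied consistently) to induce a coloring/partition of the modified LP-graphs, and then invoke the LP-indistinguishability result of \cite{chen2022representing-lp}. What you call the anchored WL refinement lemma is packaged in the paper as a direct verification of the four partition conditions of Theorem~\ref{thm:same_LP_properties}, where the multisets $\{\{(C_L^{WW}(j_1,j),A_{ij_1}) : j_1\in W\}\}$ are extracted from the 2-FWL update exactly as you describe, using the edge weights recorded in the pair colors to reconcile full-range versus neighborhood aggregation; the two presentations are equivalent.
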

	
	\begin{theorem}[\cite{chen2022representing-lp}]\label{thm:same_LP_properties}
		Consider two LP problems with $n$ variables and $m$ constraints
		\begin{equation}\label{eq:LP}
			\min_{x\in\bR^n} ~~ c^\top x,\quad \textup{s.t.} ~~ Ax\circ b,~~ l\leq x\leq u,
		\end{equation}
		and
		\begin{equation}\label{eq:LP_bar}
			\min_{x\in\bR^n} ~~ \Bar{c}^\top x,\quad \textup{s.t.} ~~ \Bar{A}x\ \Bar{\circ}\ \Bar{b},~~ \Bar{l}\leq x\leq \Bar{u}.
		\end{equation}
		Suppose that there exist $\mathcal{I} = \{I_1,I_2,\dots,I_s\}$ and $\mathcal{J} = \{J_1,J_2,\dots,J_t\}$ that are partitions of $V = \{1,2,\dots,m\}$ and $W = \{1,2,\dots,n\}$ respectively, such that the followings hold:
		\begin{enumerate}[(a)]
		    \item For any $p\in\{1,2,\dots,s\}$, $(b_i,\circ_i)=(\Bar{b}_i,\Bar{\circ}_i)$ is constant over all $i\in I_p$;
			\item For any $q \in \{1,2,\dots,t\}$, $(c_j,l_j,u_j) = (\Bar{c}_j,\Bar{l}_j,\Bar{u}_j)$ is constant over all $j\in J_q$;
			\item For any $p\in\{1,2,\dots,s\}$ and $q \in \{1,2,\dots,t\}$, $\sum_{j\in J_q} A_{ij}= \sum_{j\in J_q} \Bar{A}_{ij}$ is constant over all $i\in I_p$.
			\item For any $p\in\{1,2,\dots,s\}$ and $q \in \{1,2,\dots,t\}$, $\sum_{i\in I_p} A_{ij} = \sum_{i\in I_p} \Bar{A}_{ij}$ is constant over all $j\in J_q$.
		\end{enumerate}
		Then the two problems \eqref{eq:LP} and \eqref{eq:LP_bar} have the same feasibility, the same optimal objective value, and the same optimal solution with the smallest $\ell_2$-norm (if feasible and bounded).
	\end{theorem}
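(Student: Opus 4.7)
The plan is to reduce both LPs \eqref{eq:LP} and \eqref{eq:LP_bar} to a common \emph{quotient LP} determined only by the shared partition data, and then transfer feasibility, optimal value, and smallest-norm optimum between them. Write $S_{pq} := \sum_{j\in J_q} A_{ij}$, which by condition (c) is independent of $i\in I_p$ and equal to the analogous quantity for $\Bar{A}$, and let $b_p,\circ_p,c_q,l_q,u_q$ denote the common values coming from (a) and (b). I would define the quotient LP on variables $\hat{x}_1,\dots,\hat{x}_t$ as
\begin{equation*}
\min_{\hat{x}\in\bR^t}\ \sum_{q=1}^t |J_q|\,c_q\,\hat{x}_q,\quad \textup{s.t.}\ \sum_{q=1}^t S_{pq}\,\hat{x}_q \circ_p b_p\ \ \forall p,\ \ l_q\leq \hat{x}_q\leq u_q\ \ \forall q.
\end{equation*}
Since this quotient LP depends only on the $S_{pq}$'s and the class-level data, the instance obtained from \eqref{eq:LP} and the one from \eqref{eq:LP_bar} are literally the same problem.

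The main step is a two-way correspondence between each original LP and its quotient. For the easy direction, given a feasible $\hat{x}$ of the quotient LP, the broadcast $\bar{x}_j := \hat{x}_{q(j)}$ satisfies the bounds by (b) and, for any constraint $i\in I_p$, one has $\sum_j A_{ij}\bar{x}_j = \sum_q S_{pq}\hat{x}_q \circ_p b_p = b_i$, with matching objective value. For the harder direction, given a feasible $x$ of \eqref{eq:LP}, set $\hat{x}_q := \tfrac{1}{|J_q|}\sum_{j\in J_q} x_j$. Bounds and objective pass through immediately by (b). For the constraints I would average the inequality $\sum_j A_{ij}x_j \circ_p b_p$ over $i\in I_p$, which is legal because (a) forces $b_i$ and $\circ_i$ to be constant on $I_p$; invoking (d) to group the averaged sum by $J_q$ and then using the block-sum identity $|I_p|\,S_{pq} = |J_q|\,T_{pq}$, where $T_{pq}:=\sum_{i\in I_p}A_{ij}$, converts it precisely into $\sum_q S_{pq}\hat{x}_q \circ_p b_p$. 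This shows that \eqref{eq:LP} and the quotient LP share the same feasibility status and the same optimal objective value; the same argument applies to \eqref{eq:LP_bar}, and chaining the two equalities finishes the first two claims.

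For the smallest-norm optimum, I would show it must be class-constant. Let $x^*$ be the smallest-norm optimum of \eqref{eq:LP} and let $\bar{x}^*_j := \hat{x}^*_{q(j)}$ be the broadcast of its class average. By the correspondence, $\bar{x}^*$ is also optimal, and Jensen's inequality gives $\sum_{j\in J_q}(x^*_j)^2 \geq |J_q|(\hat{x}^*_q)^2 = \sum_{j\in J_q}(\bar{x}^*_j)^2$, with equality iff $x^*$ is already constant on each $J_q$. Uniqueness of the smallest-norm minimizer then forces $x^*=\bar{x}^*$. Restricted to class-constant vectors, the norm $\|x\|^2 = \sum_q |J_q|\hat{x}_q^2$ becomes a functional of $\hat{x}$ alone, so the smallest-norm optimum of each original LP is the broadcast of the \emph{same} minimizer of the common quotient LP; in particular it is identical for \eqref{eq:LP} and \eqref{eq:LP_bar}.

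The main obstacle is the constraint-preservation step in the averaging direction: because $A$ need not be invariant under intra-class column permutations, one cannot symmetrize $x$ directly via a group action on the variables. The trick that unlocks everything is to instead average the constraint over the \emph{row} indices $i\in I_p$ using (a), and then to exploit the hidden compatibility between (c) and (d) expressed by $|I_p|\,S_{pq}=|J_q|\,T_{pq}$; this identity is what guarantees that row-aggregation and column-aggregation produce the same quotient coefficients, and is the reason the reduction works symmetrically for $A$ and $\Bar{A}$.
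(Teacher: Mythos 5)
The paper itself does not prove this statement; it is imported verbatim from \cite{chen2022representing-lp}, so there is no in-paper proof to compare against. Your argument is correct and is essentially the standard folding/averaging proof used in that reference: the two-way correspondence (broadcast of a quotient-feasible point, and class-averaging of an original-feasible point, with row-averaging over $I_p$ legitimized by condition (a) and the double-counting identity $|I_p|\,S_{pq}=|J_q|\,T_{pq}$ linking conditions (c) and (d)) correctly transfers feasibility and optimal value, and the Jensen equality case combined with uniqueness of the minimum-norm element correctly forces the smallest-$\ell_2$-norm optimum to be class-constant and hence determined by the common quotient LP. I see no gaps.
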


	\begin{proof}[Proof of Theorem~\ref{thm:2FWL_SB}]
		Choose $L$ and hash functions such that there are no collisions in Algorithm~\ref{alg:2-FWL} and no strict color refinement in the $L$-th iteration when $G$ and $\Bar{G}$ are the input. Fix any $j\in W$ and construct the partions $\mathcal{I} = \{I_1,I_2,\dots,I_s\}$ and $\mathcal{J} = \{J_1,J_2,\dots,J_t\}$ as follows:
		\begin{itemize}
			\item $i_1,i_2\in I_p$ for some $p\in\{1,2,\dots,s\}$ if and only if $C_L^{VW}(i_1,j) = C_L^{VW}(i_2,j)$.
			\item $j_1,j_2\in J_q$ for some $q\in\{1,2,\dots,t\}$ if and only if $C_L^{WW}(j_1,j) = C_L^{WW}(j_2,j)$.
		\end{itemize}
		Without loss of generality, we can assume that $j\in J_1$. One observation is that $J_1 = \{j\}$. This is because $j_1\in J_1$ implies that $C_L^{WW}(j_1,j) = C_L^{WW}(j,j)$, which then leads to $C_0^{WW}(j_1,j) = C_0^{WW}(j,j)$ and $\delta_{j_1 j} = \delta_{jj} = 1$ since there is no collisions. We thus have $j_1=j$. 
		
		Note that we have \eqref{eq:equiv_relation_W1} and \eqref{eq:equiv_relation_W2} from the assumption $G\stackrel{W}\sim_2 \Bar{G}$. So after permuting $\Bar{G}$ on $V$ and $W\backslash\{j\}$, one can obtain $C_L^{VW}(i,j) = \Bar{C}_L^{VW}(i,j)$ for all $i\in V$ and $C_L^{WW}(j_1,j) = \Bar{C}_L^{WW}(j_1,j)$ for all $j_1\in W$. Another observation is that such permutation does not change the optimal objective value of $\textup{LP}(\Bar{G},j,\hat{l}_j,\hat{u}_j)$ as $j$ is fixed. 
		
		Next, we verify the four conditions in Theorem~\ref{thm:same_LP_properties} for two LP problems $\textup{LP}(G,j,\hat{l}_j,\hat{u}_j)$ and $\textup{LP}(\Bar{G},j,\hat{l}_j,\hat{u}_j)$ with respect to the partitions $\mathcal{I} = \{I_1,I_2,\dots,I_s\}$ and $\mathcal{J} = \{J_1,J_2,\dots,J_t\}$.
		
		\paragraph{\textbf{Verification of Condition (a) in Theorem~\ref{thm:same_LP_properties}}} Since there is no collision in the 2-FWL test Algorithm~\ref{alg:2-FWL}, $C_L^{VW}(i,j) = \Bar{C}_L^{VW}(i,j)$ implies that $C_0^{VW}(i,j) = \Bar{C}_0^{VW}(i,j)$ and hence that $v_i = \Bar{v}_i$, which is also constant over all $i\in I_p$ since $C_L^{VW}(i,j)$ is contant over all $i\in I_p$ by definition. 
		
		\paragraph{\textbf{Verification of Condition (b) in Theorem~\ref{thm:same_LP_properties}}} It follows from $C_L^{WW}(j_1,j) = \Bar{C}_L^{WW}(j_1,j)$ that $C_0^{WW}(j_1,j) = \Bar{C}_0^{WW}(j_1,j)$ and hence that $w_{j_1} = \Bar{w}_{j_1}$, which is also constant over all $j_1\in I_q$ since $C_L^{WW}(j_1,j)$ is contant over all $j_1\in I_q$ by definition. 
		
		\paragraph{\textbf{Verification of Condition (c) in Theorem~\ref{thm:same_LP_properties}}} Consider any $p\in\{1,2,\dots,s\}$ and any $i\in I_p$. It follows from $C_L^{VW}(i,j) = \Bar{C}_L^{VW}(i,j)$ that
		\begin{equation*}
			\left\{\left\{(C_{L-1}^{WW}(j_1,j), C_{L-1}^{VW}(i,j_1)) : j_1\in W\right\}\right\} = \left\{\left\{(\Bar{C}_{L-1}^{WW}(j_1,j), \Bar{C}_{L-1}^{VW}(i,j_1)) : j_1\in W\right\}\right\},
		\end{equation*}
		and hence that 
		\begin{equation*}
			\left\{\left\{(C_L^{WW}(j_1,j), A_{i j_1}) : j_1\in W\right\}\right\} = \left\{\left\{(\Bar{C}_L^{WW}(j_1,j), \Bar{A}_{i j_1}) : j_1\in W\right\}\right\},
		\end{equation*}
		where we used the fact that there is no strict color refinement in the $L$-th iteration and there is no collision in Algorithm~\ref{alg:2-FWL}. We can thus conclude for any $q\in\{1,2,\dots,t\}$ that
		\begin{equation*}
			\{\{A_{i j_1}: j_1\in J_q\}\} = \{\{\Bar{A}_{i j_1}: j_1\in J_q\}\}, 
		\end{equation*}
		which implies that $\sum_{j_1\in J_q}A_{i j_1} = \sum_{j_1\in J_q}\Bar{A}_{i j_1}$ that is constant over $i\in I_p$ since $C_L^{VW}(i,j) = \Bar{C}_L^{VW}(i,j)$ is constant over $i\in I_p$.
		
		\paragraph{\textbf{Verification of Condition (d) in Theorem~\ref{thm:same_LP_properties}}} Consider any $q\in\{1,2,\dots,t\}$ and any $j_1\in J_q$. It follows from $C_L^{WW}(j_1,j) = \Bar{C}_L^{WW}(j_1,j)$ that
		\begin{equation*}
			\left\{\left\{(C_{L-1}^{VW}(i,j), C_{L-1}^{VW}(i,j_1)) : i\in V\right\}\right\} = \left\{\left\{(\Bar{C}_{L-1}^{VW}(i,j), \Bar{C}_{L-1}^{VW}(i,j_1)) : i\in V\right\}\right\},
		\end{equation*}
		and hence that
		\begin{equation*}
			\left\{\left\{(C_L^{VW}(i,j), A_{i j_1}) : i\in V\right\}\right\} = \left\{\left\{(\Bar{C}_L^{VW}(i,j), \Bar{A}_{i j_1}) : i\in V\right\}\right\},
		\end{equation*}
		where we used the fact that there is no strict color refinement at the $L$-th iteration and there is no collision in Algorithm~\ref{alg:2-FWL}. We can thus conclude for any $p\in\{1,2,\dots,s\}$ that
		\begin{equation*}
			\{\{A_{i j_1}: i\in I_p\}\} = \{\{\Bar{A}_{i j_1}: i\in I_p\}\}, 
		\end{equation*}
		which implies that $\sum_{i\in I_p}A_{i j_1} = \sum_{i\in I_p}\Bar{A}_{i j_1}$ that is constant over $j_1\in J_q$ since $C_L^{WW}(j_1,j) = \Bar{C}_L^{WW}(j_1,j)$ is constant over $j_1\in J_q$.
		
		Combining all discussion above and noticing that $J_1 = \{j\}$, one can apply Theorem~\ref{thm:same_LP_properties} and conclude that the two LP problems $\textup{LP}(G,j,\hat{l}_j,\hat{u}_j)$ and $\textup{LP}(\Bar{G},j,\hat{l}_j,\hat{u}_j)$ have the same optimal objective value, which completes the proof.
	\end{proof}
	
	\begin{corollary}\label{cor:2FWL_LP_relaxation}
		For any $G,\Bar{G}\in\calG_{m,n}$, if $G\stackrel{W}\sim_2 \Bar{G}$, then the LP relaxations of $G$ and $\Bar{G}$ have the same optimal objective value and the same optimal solution with the smallest $\ell_2$-norm (if feasible and bounded).
	\end{corollary}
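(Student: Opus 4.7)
The plan is to leverage Theorem~\ref{thm:2FWL_SB} and, more precisely, the full strength of Theorem~\ref{thm:same_LP_properties} which that proof implicitly invokes. The key observation is that, for any $j \in \{1, 2, \dots, n\}$, the choice $\hat{l}_j = l_j$ and $\hat{u}_j = u_j$ leaves all bounds untouched, so $\textup{LP}(G, j, l_j, u_j)$ is exactly the LP relaxation of $G$, and similarly for $\Bar{G}$. Applying Theorem~\ref{thm:2FWL_SB} therefore immediately yields equality of optimal objective values for the LP relaxations of $G$ and $\Bar{G}$; an analogous inspection of the underlying argument (which also passes through Theorem~\ref{thm:same_LP_properties}) gives matching feasibility/boundedness status.

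For the optimal solution with smallest $\ell_2$-norm, I would revisit the proof of Theorem~\ref{thm:2FWL_SB}. Fix any $j \in W$. That proof constructs a permutation of $V$ and $W \setminus \{j\}$ applied to $\Bar{G}$ (call the resulting graph $\Bar{G}'$) so that the pair $(G, \Bar{G}')$ satisfies all four hypotheses of Theorem~\ref{thm:same_LP_properties} with respect to partitions $\mathcal{I}, \mathcal{J}$ in which $\{j\}$ is a singleton class. Crucially, Theorem~\ref{thm:same_LP_properties} guarantees not only equal optimal objective values but also the \emph{same optimal solution with smallest $\ell_2$-norm}. Specializing to $\hat{l}_j = l_j$ and $\hat{u}_j = u_j$ (and assuming feasibility and boundedness), this yields the vector equality $x^*_{\textup{LP}}(G) = x^*_{\textup{LP}}(\Bar{G}')$ in $\bR^n$. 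Since the permutation carrying $\Bar{G}$ to $\Bar{G}'$ fixes vertex $j$, the $j$-th coordinate is preserved, so $x^*_{\textup{LP}}(\Bar{G}')_j = x^*_{\textup{LP}}(\Bar{G})_j$, and hence $x^*_{\textup{LP}}(G)_j = x^*_{\textup{LP}}(\Bar{G})_j$. Iterating over all $j \in \{1, \dots, n\}$ (each with its own permutation) delivers the full coordinate-wise equality.

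The main subtlety, and thus the main obstacle, is that the permutation aligning $\Bar{G}$ with $G$ depends on the chosen coordinate $j$, so one cannot extract vector equality directly from a single application of Theorem~\ref{thm:same_LP_properties}. The remedy is the coordinate-by-coordinate bookkeeping above, which is sound because each permutation fixes its corresponding $j$, and because the $\ell_2$-norm is invariant under variable permutation so that the ``smallest $\ell_2$-norm'' minimizer remains canonically defined across relabellings. Beyond this bookkeeping, no machinery beyond the tools already developed in the excerpt is required.
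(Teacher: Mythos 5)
Your proposal is correct and follows essentially the same route as the paper's proof: equality of optimal objective values via Theorem~\ref{thm:2FWL_SB} with $\hat{l}_j = l_j$, $\hat{u}_j = u_j$, and coordinate-by-coordinate recovery of the minimal-$\ell_2$-norm solution through Theorem~\ref{thm:same_LP_properties}, using the $j$-dependent permutation (which fixes $j$) exactly as the paper does. The only detail worth making explicit is that $C_L^{WW}(j,j)=\Bar{C}_L^{WW}(j,j)$ forces $l_j=\Bar{l}_j$ and $u_j=\Bar{u}_j$, so that the single-bound-change LPs for $G$ and $\Bar{G}$ with the common pair $(l_j,u_j)$ really are the two LP relaxations, as required to invoke Theorem~\ref{thm:2FWL_SB} with a shared $(\hat{l}_j,\hat{u}_j)$.
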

	
	\begin{proof}
		If no collision, it follows from \eqref{eq:equiv_relation_W2} that $C_L^{WW}(j,j) = \Bar{C}_L^{WW}(j,j)$ which implies $l_j = \Bar{l}_j$ and $u_j = \Bar{u}_j$ for any $j\in W$. 
		Then we can apply Theorem~\ref{thm:2FWL_SB} to conclude that two LP problems $\textup{LP}(G,j,l_j,u_j)$ and $\textup{LP}(\Bar{G},j,\Bar{l}_j,\Bar{u}_j)$ that are LP relaxations of $G$ and $\Bar{G}$ have the same optimal objective value.
		
		In the case that the LP relaxations of $G$ and $\Bar{G}$ are both feasible and bounded, we use $x$ and $\Bar{x}$ to denote their optimal solutions with the smallest $\ell_2$-norm. For any $j\in W$, $x$ and $\Bar{x}$ are also the optimal solutions with the smallest $\ell_2$-norm for $\textup{LP}(G,j,l_j,u_j)$ and $\textup{LP}(\Bar{G},j,\Bar{l}_j,\Bar{u}_j)$ respectively. By Theorem~\ref{thm:same_LP_properties} and the same arguments as in the proof of Theorem~\ref{thm:2FWL_SB}, we have the $x_j = \Bar{x}_j$. Note that we cannot infer $x=\Bar{x}$ by considering a single $j\in W$ because we apply permutation on $V$ and $W\backslash\{j\}$ in the proof of Theorem~\ref{thm:2FWL_SB}. But we have $x_j = \Bar{x}_j$ for any $j\in W$ which leads to $x= \Bar{x}$.
	\end{proof}
	
	\begin{proof}[Proof of Theorem~\ref{mainthm:2FWL_SB}]
		(a) By Corollary~\ref{cor:2FWL_LP_relaxation} and Theorem~\ref{thm:2FWL_SB}.
		
		(b) By Theorem~\ref{thm:2FWL_equiv} and (a).
		
		(c) Apply (a) on $G$ and the graph obtained from $G$ by switching $j_1$ and $j_2$.
	\end{proof}

	\subsection{Equivalence between the separation powers of the 2-FWL test and 2-FGNNs}
    \label{sec:pf_equiv_2FWL_2FGNN}

    The section establishes the equivalence between the separation powers of the 2-FWL test and 2-FGNNs.
	
	\begin{theorem}\label{thm:equiv_2FWL_2FGNN}
		For any $G,\Bar{G}\in \calG_{m,n}$, the followings are true:
		\begin{enumerate}[(a)]
			\item $G\stackrel{W}\sim_2 \Bar{G}$ if and only if $F(G) = F(\Bar{G})$ for all $F\in\calF_{\textup{2-FGNN}}$.
			\item $\left\{\left\{ C_L^{WW}(j,j_1) : j\in W \right\}\right\} = \left\{\left\{ C_L^{WW}(j,j_2) : j\in W \right\}\right\}$ holds for any $L$ and any hash functions if and only if $F(G)_{j_1} = F(G)_{j_2},\ \forall~F\in \calF_{\textup{2-FGNN}}$.
			\item $G\sim_2 \Bar{G}$ if and only if $f(G) = f(\Bar{G})$ for all scalar function $f$ with $f\mathbf{1}\in\calF_{\textup{2-FGNN}}$.
		\end{enumerate}
	\end{theorem}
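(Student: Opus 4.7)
The plan is to prove each of the three equivalences by splitting into a soundness direction (if the 2-FWL colors agree, then every 2-FGNN agrees) and a completeness direction (if every 2-FGNN agrees, then the 2-FWL colors agree). Part (a) will be the main workhorse; parts (b) and (c) will be reduced to it with minor modifications.

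For the soundness direction of (a), I would induct on the layer index $l$ to show that the 2-FWL colors refine the 2-FGNN features in the following sense: whenever $C_l^{VW}(i,j)=\bar{C}_l^{VW}(i',j')$ we have $s_{ij}^l=\bar{s}_{i'j'}^l$, and similarly for $C_l^{WW}$ and $t^l$. The base case is immediate because $s_{ij}^0=p^0(v_i,w_j,A_{ij})$ is determined by exactly the triple that $\mathrm{HASH}_0^{VW}$ sees, and likewise for $t_{j_1j_2}^0$. The inductive step uses the fact that each 2-FGNN aggregation is a symmetric function of the multiset $\{\{(t_{j_1j}^{l-1},s_{ij_1}^{l-1}):j_1\in W\}\}$ (respectively, $\{\{(s_{ij_2}^{l-1},s_{ij_1}^{l-1}):i\in V\}\}$), which by the inductive hypothesis is determined by the corresponding multiset of 2-FWL colors used in the 2-FWL update. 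Then \eqref{eq:equiv_relation_W1} and \eqref{eq:equiv_relation_W2} give equality of the final-layer multisets needed for the sums in $y_j=r(\sum_i s_{ij}^L,\sum_{j_1}t_{j_1j}^L)$, finishing the soundness direction.

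For the completeness direction of (a), I need to construct, for any $G,\bar{G}$ with $G\not\stackrel{W}\sim_2\bar{G}$, a 2-FGNN $F$ with $F(G)\ne F(\bar{G})$. Since only finitely many distinct 2-FWL colors appear during any run on the pair $(G,\bar{G})$, I would embed each color into $\bR$ and choose continuous $p^l,q^l,f^l,g^l$ (parametrized by MLPs, possibly relying on sum-of-powers or Deep-Set style embeddings to make sum aggregation injective on the relevant finite multisets of real vectors, analogous to the GIN construction of Xu et al.~\cite{xu2019powerful}). A straightforward induction then shows that the 2-FGNN coloring is at least as fine as the 2-FWL coloring at every layer, so for $L$ large enough (and with a suitable readout $r$) the resulting $F$ separates $G$ from $\bar{G}$ whenever some choice of $L$ and hash functions in Algorithm~\ref{alg:2-FWL} does.

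Parts (b) and (c) will follow by reduction. Part (b) is the within-graph analogue of (a) for a single input $G$: the soundness direction is exactly the inductive argument above applied to the pair $(j_1,j_2)$ inside $G$, and for completeness one can either rerun the injective-MLP construction with the readout localized at $j_1$ and $j_2$, or apply (a) to $G$ and the graph obtained from $G$ by swapping $j_1$ and $j_2$. Part (c) combines Theorem~\ref{thm:2FWL_equiv} with part (a): $G\sim_2\bar{G}$ iff $G\stackrel{W}\sim_2\sigma_W\ast\bar{G}$ for some permutation $\sigma_W\in S_n$, and scalar functions $f$ with $f\mathbf{1}\in\calF_{\textup{2-FGNN}}$ (in particular, those that take the form of a permutation-invariant sum over $W$ at the readout) are exactly insensitive to such relabelings. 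The main obstacle throughout is the completeness direction: constructing \emph{continuous} MLP-parametrized maps that act injectively on the relevant (finite, but a priori unbounded in magnitude) multisets of features while remaining continuous on all of $\calG_{m,n}$. The standard remedy is to restrict the injectivity requirement to a compact set containing the finite set of arising features and then invoke MLP universal approximation, matching the approach already used in the proof of Theorem~\ref{thm:universal-approx-MPGNN}.
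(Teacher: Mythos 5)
Your treatment of parts (a) and (b) matches the paper's proof in both structure and substance: the soundness direction is the same layerwise induction showing that equal 2-FWL colors force equal 2-FGNN features, the completeness direction is the same injective-encoding construction (the paper uses one-hot tensor embeddings $e_{k'}^{u'}\otimes e_{k}^{u}$ for the aggregation where you invoke Deep-Sets/sum-of-powers encodings, but these are interchangeable since only finitely many feature values arise on a fixed pair of graphs), and part (b) is obtained exactly as you suggest, by applying (a) to $G$ and the graph obtained by swapping $j_1$ and $j_2$.

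The one place your plan falls short is the backward direction of part (c). If $G\not\sim_2\Bar{G}$, then (a) together with Theorem~\ref{thm:2FWL_equiv} gives, for each $\sigma_W\in S_n$, some vector-valued $F\in\calF_{\textup{2-FGNN}}$ with $F(G)\neq F(\sigma_W\ast\Bar{G})$; but this does not produce a single \emph{scalar} $f$ with $f\mathbf{1}\in\calF_{\textup{2-FGNN}}$ and $f(G)\neq f(\Bar{G})$, which is what (c) demands. Your remark that permutation-invariant readouts are ``insensitive to relabelings'' only establishes the forward direction. The paper closes this by rerunning the injective construction and then distinguishing two cases: if the global multisets $\{\{ s_{ij}^L : i\in V, j\in W\}\}$ differ between the two graphs, a readout that sums $\varphi(s_{ij}^L)$ over all of $V\times W$ (hence is constant in $j$, i.e., of the form $f\mathbf{1}$) separates them; if instead only the multisets $\{\{ t_{j_1 j_2}^L : j_1,j_2\in W\}\}$ differ, one must first append an extra layer that pushes $\sum_{j_1\in W}\psi(t_{j_1 j}^L)$ into the $s$-features before the same global-sum readout applies. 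You should add this case analysis (or an equivalent argument) to make (c) complete.
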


    The intuition behind Theorem~\ref{thm:equiv_2FWL_2FGNN} is the color updating rule in 2-FWL test is of the same format as the feature updating rule in 2-FGNN, and that the local update mappings $p^l,q^l,f^l,g^l,r$ can be chosen as injective on current features. Results of similar spirit also exist in previous literature; see e.g., \cite{xu2019powerful,azizian2020expressive,geerts2022expressiveness,chen2022representing-lp}. We present the detailed proof of Theorem~\ref{thm:equiv_2FWL_2FGNN} in the rest of this subsection.

	\begin{lemma}\label{lem:same2FWL_same2FGNN}
		For any $G,\Bar{G}\in \calG_{m,n}$, if $G\stackrel{W}\sim_2 \Bar{G}$, then $F(G) = F(\Bar{G})$ for all $F\in\calF_{\textup{2-FGNN}}$.
	\end{lemma}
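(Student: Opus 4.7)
The plan is to show that the 2-FWL color refinement (Algorithm~\ref{alg:2-FWL}) can be instantiated with a particular choice of hash functions so that its colors coincide exactly with the node-pair features computed by any fixed 2-FGNN, and then invoke the hypothesis $G\stackrel{W}\sim_2 \bar{G}$ under this particular choice.

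Given an arbitrary $F\in\calF_{\textup{2-FGNN}}$ with depth $L$ and continuous functions $p^0,q^0,\{p^l,q^l,f^l,g^l\}_{l=1}^{L},r$, I would first define hash functions for Algorithm~\ref{alg:2-FWL} by
\begin{align*}
    \textup{HASH}_0^{VW}(v_i,w_j,A_{ij}) &:= p^0(v_i,w_j,A_{ij}), \\
    \textup{HASH}_0^{WW}(w_{j_1},w_{j_2},\delta_{j_1 j_2}) &:= q^0(w_{j_1},w_{j_2},\delta_{j_1 j_2}), \\
    \textup{HASH}_l^{VW}(c,M) &:= p^l\Bigl(c,\ \textstyle\sum_{(a,b)\in M} f^l(a,b)\Bigr),\quad l\geq 1, \\
    \textup{HASH}_l^{WW}(c,M) &:= q^l\Bigl(c,\ \textstyle\sum_{(a,b)\in M} g^l(a,b)\Bigr),\quad l\geq 1,
\end{align*}
where $M$ denotes a multiset input. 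Each is a well-defined, permutation-invariant map from its arguments to some output value, hence a valid hash in the sense required by Algorithm~\ref{alg:2-FWL} (the test permits arbitrary, possibly non-injective, hash functions).

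Second, a straightforward induction on $l\in\{0,1,\dots,L\}$ would show that under this particular choice, $C_l^{VW}(i,j) = s_{ij}^l$ and $C_l^{WW}(j_1,j_2) = t_{j_1 j_2}^l$ for every $i\in V$, $j,j_1,j_2\in W$, and analogously $\bar{C}_l^{VW}(i,j) = \bar{s}_{ij}^l$, $\bar{C}_l^{WW}(j_1,j_2) = \bar{t}_{j_1 j_2}^l$. The base case is immediate from the definitions of $\textup{HASH}_0^{VW},\textup{HASH}_0^{WW}$; the inductive step amounts to observing that the multiset that Algorithm~\ref{alg:2-FWL} feeds into $\textup{HASH}_l^{VW}$, namely $\{\{(C_{l-1}^{WW}(j_1,j),C_{l-1}^{VW}(i,j_1)):j_1\in W\}\}$, matches by the inductive hypothesis the multiset $\{\{(t_{j_1 j}^{l-1}, s_{ij_1}^{l-1}):j_1\in W\}\}$ that the 2-FGNN sums over via $f^l$, and similarly for the $WW$-updates.

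Third, I would apply the hypothesis $G\stackrel{W}\sim_2 \bar{G}$, which by definition forces \eqref{eq:equiv_relation_W1} and \eqref{eq:equiv_relation_W2} to hold for every $L$ and every hash choice, in particular the one constructed above. This yields, for every $j\in W$,
\begin{equation*}
    \{\{s_{ij}^L : i\in V\}\} = \{\{\bar{s}_{ij}^L : i\in V\}\} \quad \text{and}\quad \{\{t_{j_1 j}^L : j_1\in W\}\} = \{\{\bar{t}_{j_1 j}^L : j_1\in W\}\}.
\end{equation*}
Since $\sum_{i\in V}s_{ij}^L$ and $\sum_{j_1\in W}t_{j_1 j}^L$ depend only on the corresponding multisets, they agree with their $\bar{G}$-counterparts, and the readout $r$ then produces $F(G)_j = F(\bar{G})_j$ for every $j\in W$.

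The main obstacle I anticipate is purely notational: carefully verifying in the inductive step that the pairings inside the multisets aggregated by Algorithm~\ref{alg:2-FWL} and by the 2-FGNN layer match index-by-index (both are indexed by the summation variable $j_1\in W$ or $i\in V$), so that the equality $C_l^{VW}(i,j)=s_{ij}^l$ propagates exactly. No deep argument is required beyond this bookkeeping, together with the fact that the 2-FGNN constraints $f^l(\cdot,0)\equiv 0$ and $g^l(\cdot,0)\equiv 0$, while crucial for continuity of $F$, play no role here because Algorithm~\ref{alg:2-FWL} already aggregates over all of $W$ (resp.\ $V$) rather than only neighbors.
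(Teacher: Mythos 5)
Your proof is correct, but it realizes the key step differently from the paper. The paper runs Algorithm~\ref{alg:2-FWL} with collision-free hash functions and proves by induction six implications of the form ``same color $\Rightarrow$ same feature'' (for node pairs within $G$, within $\Bar{G}$, and across the two graphs), so that the canonical WL colors \emph{refine} the features of an arbitrary $F$; the multiset equalities \eqref{eq:equiv_relation_W1}--\eqref{eq:equiv_relation_W2} then transfer from colors to features. You instead instantiate the hash functions as the update maps of $F$ itself (with the multiset argument consumed by sum-aggregation, which is indeed a well-defined function of the multiset), so that the colors \emph{equal} the features layer by layer, and then invoke the universal quantifier ``for any $L$ and any hash functions'' in the definition of $\stackrel{W}\sim_2$ at this particular, possibly non-injective, choice --- a legitimate instantiation, since the paper's definition does not restrict to injective hashes. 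Both routes are valid; yours is shorter and avoids the collision-free bookkeeping, while the paper's formulation isolates a statement about the canonical injective coloring that it reuses for the converse direction in Lemma~\ref{lem:same2FGNN_same2FWL}. One cosmetic nit: the conditions $f^l(\cdot,0)\equiv 0$ and $g^l(\cdot,0)\equiv 0$ are imposed only on $\calF_{\textup{MP-GNN}}$, not on $\calF_{\textup{2-FGNN}}$, so there is nothing to discharge on that point.
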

	
	\begin{proof}
		Consider any $F\in\calF_{\textup{2-FGNN}}$ with $L$ layers and let $s_{ij}^l,t_{j_1 j_2}^l$ and $\Bar{s}_{ij}^l,\Bar{t}_{j_1 j_2}^l$ be the features in the $l$-th layer of $F$. Choose $L$ and hash functions such that there are no collisions in Algorithm~\ref{alg:2-FWL} when $G$ and $\Bar{G}$ are the input. We will prove the followings by induction for $0\leq l\leq L$:
		\begin{enumerate}[(a)]
			\item $C_l^{VW}(i,j) = C_l^{VW}(i',j')$ implies $s_{ij}^l = s_{i' j'}^l$.
			\item $C_l^{VW}(i,j) = \Bar{C}_l^{VW}(i',j')$ implies $s_{ij}^l = \Bar{s}_{i' j'}^l$.
			\item $\Bar{C}_l^{VW}(i,j) = \Bar{C}_l^{VW}(i',j')$ implies $\Bar{s}_{ij}^l = \Bar{s}_{i' j'}^l$.
			\item $C_l^{WW}(j_1,j_2) = C_l^{WW}(j_1',j_2')$ implies $t_{j_1 j_2}^l = t_{j_1' j_2'}^l$.
			\item $C_l^{WW}(j_1,j_2) = \Bar{C}_l^{WW}(j_1',j_2')$ implies $t_{j_1 j_2}^l = \Bar{t}_{j_1' j_2'}^l$.
			\item $\Bar{C}_l^{WW}(j_1,j_2) = \Bar{C}_l^{WW}(j_1',j_2')$ implies $\Bar{t}_{j_1 j_2}^l = \Bar{t}_{j_1' j_2'}^l$.
		\end{enumerate}
		As the induction base, the claims (a)-(f) are true for $l=0$ since $\textup{HASH}_0^{VW}$ and $\textup{HASH}_0^{WW}$ do not have collisions. Now we assume that the claims (a)-(f) are all true for $l-1$ where $l\in\{1,2,\dots,L\}$ and prove them for $l$. In fact, one can prove the claim (a) for $l$ as follow:
		\begin{align*}
			& C_l^{VW}(i,j) = C_l^{VW}(i',j') \\
			\implies & C_{l-1}^{VW}(i,j) = C_{l-1}^{VW}(i',j')\quad\textup{and}\\
			& \left\{\left\{(C_{l-1}^{WW}(j_1,j), C_{l-1}^{VW}(i,j_1)) : j_1\in W\right\}\right\} = \left\{\left\{(C_{l-1}^{WW}(j_1,j'), C_{l-1}^{VW}(i',j_1)) : j_1\in W\right\}\right\} \\
			\implies & s_{ij}^{l-1} = s_{i'j'}^{l-1}\quad\textup{and}\quad \left\{\left\{(t_{j_1 j}^{l-1}, s_{i j_1}^{l-1}): j_1\in W\right\}\right\} = \left\{\left\{(t_{j_1 j'}^{l-1}, s_{i' j_1}^{l-1}): j_1\in W\right\}\right\} \\
			\implies & s_{ij}^l = s_{i'j'}^l.
		\end{align*}
		The proof of claims (b)-(f) for $l$ is very similar and hence omitted.
		
		Using the claims (a)-(f) for $L$, we can conclude that
		\begin{align*}
			& G\stackrel{W}\sim_2 \Bar{G} \\
			\implies & \left\{\left\{ C_L^{VW}(i,j) : i\in V \right\}\right\} = \left\{\left\{ \Bar{C}_L^{VW}(i,j) : i\in V\right\}\right\},\ \forall~j\in W,\ \textup{and} \\
			& \left\{\left\{ C_L^{WW}(j_1,j) : j_1\in W \right\}\right\} = \left\{\left\{ \Bar{C}_L^{WW}(j_1,j) : j_1\in W \right\}\right\},\ \forall~j\in W \\
			\implies & \left\{\left\{ s_{ij}^L : i\in V \right\}\right\} = \left\{\left\{ \Bar{s}_{ij}^L : i\in V\right\}\right\},\ \forall~j\in W,\ \textup{and} \\
			& \left\{\left\{ t_{j_1 j}^L : j_1\in W \right\}\right\} = \left\{\left\{ \Bar{t}_{j_1 j}^L : j_1\in W \right\}\right\},\ \forall~j\in W \\
			\implies & r\left(\sum_{i\in V}s_{ij}^L, \sum_{j_1\in W} t_{j_1 j}^L\right) = r\left(\sum_{i\in V}\Bar{s}_{ij}^L, \sum_{j_1\in W} \Bar{t}_{j_1 j}^L\right),\ \forall~j\in W \\
			\implies & F(G) = F(\Bar{G}),
		\end{align*}
		which completes the proof.
	\end{proof}
	
	\begin{lemma}\label{lem:same2FGNN_same2FWL}
		For any $G,\Bar{G}\in \calG_{m,n}$, if $F(G) = F(\Bar{G})$ for all $F\in\calF_{\textup{2-FGNN}}$, then $G\stackrel{W}\sim_2 \Bar{G}$.
	\end{lemma}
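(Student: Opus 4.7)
The plan is to prove the contrapositive: assume $G \not\stackrel{W}\sim_2 \bar{G}$, and construct an $F \in \calF_{\textup{2-FGNN}}$ that separates them. By the negation of the definition of $\stackrel{W}\sim_2$, there exist an iteration count $L$, hash functions, and some $j^* \in W$ such that at least one of
\[\left\{\left\{C_L^{VW}(i,j^*) : i \in V\right\}\right\} = \left\{\left\{\bar{C}_L^{VW}(i,j^*) : i \in V\right\}\right\}\]
or the analogous equality for $C_L^{WW}$ fails. The goal is to mirror the 2-FWL coloring inside a 2-FGNN with $L$ layers by choosing $p^l, q^l, f^l, g^l$ injective on the (finite) set of feature values that actually appear during the forward pass on $G$ and $\bar{G}$.

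First I would prove, by induction on $l$, a converse of the claims used in Lemma~\ref{lem:same2FWL_same2FGNN}: there exist continuous $p^0, q^0$ and continuous $p^l, q^l, f^l, g^l$ ($l = 1, \dots, L$) such that, for every pair of pair-indices from $G \sqcup \bar{G}$, the 2-FGNN features agree if and only if the 2-FWL colors agree. The base case is immediate because the hash maps only depend on $(v_i, w_j, A_{ij})$ and $(w_{j_1}, w_{j_2}, \delta_{j_1 j_2})$, which are inputs to $p^0, q^0$ as continuous functions that can be chosen injective on the finite set of realized triples. For the inductive step, the 2-FWL and 2-FGNN updating rules have the same format: each new color/feature depends on the previous one plus an aggregation over $j_1 \in W$ or $i \in V$ of a pair of previous colors/features. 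By induction hypothesis the previous features are in bijection with the previous colors, so the aggregated multisets carry exactly the same information; a standard multiset-to-vector construction (e.g.\ via power sums or a sufficiently rich continuous injection on the finite set of realized multisets, as in Deep Sets / Stone--Weierstrass arguments used in \cite{xu2019powerful,azizian2020expressive,chen2022representing-lp}) gives continuous $f^l, g^l$ such that the sums $\sum f^l$ and $\sum g^l$ are injective on these multisets, and then continuous injective $p^l, q^l$ finish the step.

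Applying this at level $L$ yields $\{\{s_{ij^*}^L : i \in V\}\} \neq \{\{\bar{s}_{ij^*}^L : i \in V\}\}$ or $\{\{t_{j_1 j^*}^L : j_1 \in W\}\} \neq \{\{\bar{t}_{j_1 j^*}^L : j_1 \in W\}\}$. It remains to pick the read-out $r$ continuously so that
\[r\left(\sum_{i \in V} s_{ij^*}^L,\ \sum_{j_1 \in W} t_{j_1 j^*}^L\right) \neq r\left(\sum_{i \in V} \bar{s}_{ij^*}^L,\ \sum_{j_1 \in W} \bar{t}_{j_1 j^*}^L\right).\]
Using the same multiset-separation trick inside $f^L$ or $g^L$ (or, equivalently, extending the feature dimension by one coordinate that records a multiset moment), we can arrange that these two sums over $i$ (resp.\ $j_1$) of the chosen representation of the features are already different whenever the underlying multisets differ. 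A continuous $r$ that is injective in its relevant coordinate then produces distinct outputs at $j^*$, so $F(G)_{j^*} \neq F(\bar{G})_{j^*}$, contradicting the assumption.

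The main obstacle is the multiset-separation step: the aggregation inside the 2-FGNN is fixed to be the unweighted sum, so we must manufacture continuous $f^l, g^l$ whose sums over $j_1 \in W$ (resp.\ $i \in V$) are injective on the finite collection of multisets realized by the 2-FWL test on $\{G, \bar{G}\}$. This is standard but requires care: one approach is to replace the scalar features by vector-valued features and use the ``moments'' map $x \mapsto (x, x^2, \dots, x^{m+n})$, whose componentwise sum determines a multiset of real numbers of cardinality $\le m+n$, and then compose with continuous injections to extend to vector features; this is essentially the argument already invoked in the cited works on WL/GNN equivalence, and transfers to the 2-FWL/2-FGNN setting since the updating rule has the same sum-of-pairs shape.
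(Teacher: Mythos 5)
Your proposal is correct and follows essentially the same route as the paper's proof: argue the contrapositive, inductively build layers whose features refine the 2-FWL colors by choosing $p^l,q^l,f^l,g^l$ injective on the finitely many realized values (the paper uses one-hot encodings $e_{k'}^{u'}\otimes e_k^u$ where you suggest moment maps, an immaterial difference on finite sets), and then append one more layer so that the plain sums fed to the read-out $r$ already differ whenever the level-$L$ multisets at $j^*$ differ. No gaps; this matches the paper's argument.
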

	
	\begin{proof}
		We claim that for any $L$ there exists $2$-FGNN layers for $l=0,1,2,\dots,L$, such that the followings hold true for any $0\leq l\leq L$ and any hash functions:
		\begin{enumerate}[(a)]
			\item $s_{ij}^l = s_{i' j'}^l$ implies $C_l^{VW}(i,j) = C_l^{VW}(i',j')$.
			\item $s_{ij}^l = \Bar{s}_{i' j'}^l$ implies $C_l^{VW}(i,j) = \Bar{C}_l^{VW}(i',j')$.
			\item $\Bar{s}_{ij}^l = \Bar{s}_{i' j'}^l$ implies $\Bar{C}_l^{VW}(i,j) = \Bar{C}_l^{VW}(i',j')$.
			\item $t_{j_1 j_2}^l = t_{j_1' j_2'}^l$ implies $C_l^{WW}(j_1,j_2) = C_l^{WW}(j_1',j_2')$.
			\item $t_{j_1 j_2}^l = \Bar{t}_{j_1' j_2'}^l$ implies $C_l^{WW}(j_1,j_2) = \Bar{C}_l^{WW}(j_1',j_2')$.
			\item $\Bar{t}_{j_1 j_2}^l = \Bar{t}_{j_1' j_2'}^l$ implies $\Bar{C}_l^{WW}(j_1,j_2) = \Bar{C}_l^{WW}(j_1',j_2')$.
		\end{enumerate}
		Such layers can be constructed inductively. First, for $l=0$, we can simply choose $p^0$ that is injective on $\{(v_i, w_j, A_{ij}): i\in V,j\in W\} \cup \{(\Bar{v}_i, \Bar{w}_j, \Bar{A}_{ij}): i\in V,j\in W\}$ and $q^0$ that is injective on $\{(w_{j_1},w_{j_2},\delta_{j_1 j_2}):j_1,j_2\in W\} \cup \{(\Bar{w}_{j_1},\Bar{w}_{j_2},\delta_{j_1 j_2}):j_1,j_2\in W\}$. 
		
		Assume that the conditions (a)-(f) are true for $l-1$ where $1\leq l\leq L$, we aim to construct the $l$-th layer such that (a)-(f) are also true for $l$. Let
		$\alpha_1,\alpha_2,\dots,\alpha_u$ collect all different elements in $\{s_{ij}^{l-1}:i\in V,j\in W\}\cup \{\Bar{s}_{ij}^{l-1}:i\in V,j\in W\}$ and let $\beta_1,\beta_2,\dots,\beta_{u'}$ collect all different elements in $\{t_{j_1 j_2}^{l-1} : j_1, j_2\in W\}\cup \{\Bar{t}_{j_1 j_2}^{l-1} : j_1, j_2\in W\}$. Choose some constinuous $f^l$ such that $f^l(\beta_{k'},\alpha_{k}) = e_{k'}^{u'}\otimes e_{k}^u \in\bR^{u'\times u}$, where $e_{k'}^{u'}$ is a vector in $\bR^{u'}$ with the $k'$-th entry being $1$ and other entries being $0$, and $e_{k}^u$ is a vector in $\bR^u$ with the $k$-th entry being $1$ and other entries being $0$. Choose some continuous $p^l$ that is injective on the set $\left\{s_{ij}^{l-1},\sum_{j_1\in W} f^l(t_{j_1j}^{l-1}, s_{i j_1}^{l-1}) : i\in V, j\in W\right\} \cup \left\{\Bar{s}_{ij}^{l-1},\sum_{j_1\in W} f^l(\Bar{t}_{j_1j}^{l-1}, \Bar{s}_{i j_1}^{l-1}) : i\in V, j\in W\right\}$. By the injectivity of $p^l$ and the linear independence of $\{e_{k'}^{u'}\otimes e_{k}^u : 1\leq k\leq u, 1\leq k'\leq u'\}$, we have that
		\begin{align*}
			& s_{ij}^l = s_{i'j'}^l \\
			\implies & s_{ij}^{l-1} = s_{i'j'}^{l-1}\quad \textup{and}\quad \sum_{j_1\in W} f^l(t_{j_1j}^{l-1}, s_{i j_1}^{l-1}) = \sum_{j_1\in W} f^l(t_{j_1 j'}^{l-1}, s_{i' j_1}^{l-1}) \\
			\implies & s_{ij}^{l-1} = s_{i'j'}^{l-1}\quad \textup{and for any } 1\leq k\leq u,\ 1\leq k'\leq u'\\
			& \#\big\{j_1\in W : t_{j_1j}^{l-1} = \beta_{k'}, s_{i j_1}^{l-1} = \alpha_{k}\big\} = \#\big\{j_1\in W : t_{j_1j'}^{l-1} = \beta_{k'}, s_{i' j_1}^{l-1} = \alpha_{k}\big\} \\
			\implies & s_{ij}^{l-1} = s_{i'j'}^{l-1}\quad \textup{and}\quad \big\{\big\{ (t_{j_1 j}^{l-1}, s_{i j_1}^{l-1}) :j_1\in W\big\}\big\} = \big\{\big\{(t_{j_1 j'}^{l-1}, s_{i' j_1}^{l-1}) : j_1\in W\big\}\big\} \\
			\implies & C_{l-1}^{VW}(i,j) = C_{l-1}^{VW}(i',j')\quad \textup{and} \\
			& \Big\{\Big\{ (C_{l-1}^{WW}(j_1, j), C_{l-1}^{VW} (i, j_1)) :j_1\in W\Big\}\Big\} = \Big\{\Big\{ (C_{l-1}^{WW}(j_1, j'), C_{l-1}^{VW} (i', j_1)) :j_1\in W\Big\}\Big\} \\
			\implies & C_l^{VW}(i,j) = C_l^{VW}(i',j'),
		\end{align*}
		which is to say that the condition (a) is satisfied. One can also verify that the conditions (b) and (c) by using the same argument. Similarly, we can also construct $g^l$ and $q^l$ such that the conditions (d)-(f) are satisfied.

		Suppose that $G\stackrel{W}\sim_2 \Bar{G}$ is not true. Then there exists $L$ and hash functions such that 
		\begin{equation*}
			\left\{\left\{ C_L^{VW}(i,j) : i\in V \right\}\right\} \neq \left\{\left\{ \Bar{C}_L^{VW}(i,j) : i\in V\right\}\right\},
		\end{equation*}
		or
		\begin{equation*}
			\left\{\left\{ C_L^{WW}(j_1,j) : j_1\in W \right\}\right\} \neq \left\{\left\{ \Bar{C}_L^{WW}(j_1,j) : j_1\in W \right\}\right\},
		\end{equation*}
		holds for some $j\in W$. We have shown above that the conditions (a)-(f) are true for $L$ and some carefully constructed $2$-FGNN layers. Then it holds for some $j\in W$ that 
		\begin{equation}\label{eq:multiset_sj}
			\left\{\left\{ s_{ij}^L : i\in V \right\}\right\} \neq \left\{\left\{ \Bar{s}_{ij}^L : i\in V\right\}\right\},
		\end{equation}
		or
		\begin{equation}\label{eq:multiset_tj}
			\left\{\left\{ t_{j_1 j}^L : j_1\in W \right\}\right\} \neq \left\{\left\{ \Bar{t}_{j_1 j}^L : j_1\in W \right\}\right\}.
		\end{equation}
		In the rest of the proof we work with \eqref{eq:multiset_sj}, and the argument can be easily modified in the case that \eqref{eq:multiset_tj} is true. It follows from \eqref{eq:multiset_sj} that there exists some continuous function $\varphi$ such that
		\begin{equation*}
			\sum_{i\in V}\varphi(s_{ij}^L) \neq \sum_{i\in V}\varphi(\Bar{s}_{ij}^L).
		\end{equation*}
		Then let us construct the $(L+1)$-th layer yielding
		\begin{equation*}
			s_{ij}^{L+1} = \varphi(s_{ij}^{L})\quad\textup{and}\quad \Bar{s}_{ij}^{L+1} = \varphi(\Bar{s}_{ij}^{L}),
		\end{equation*}
		and the output layer with
		\begin{equation*}
			r\left(\sum_{i\in V}s_{ij}^{L+1}, \sum_{j_1\in W} t_{j_1 j}^{L+1}\right) = \sum_{i\in V}\varphi(s_{ij}^L) \neq \sum_{i\in V}\varphi(\Bar{s}_{ij}^L) = r\left(\sum_{i\in V}\Bar{s}_{ij}^{L+1}, \sum_{j_1\in W} \Bar{t}_{j_1 j}^{L+1}\right).
		\end{equation*}
		This is to say $F(G)_j \neq F(\Bar{G})_j$ for some $F\in\calF_{\textup{2-FGNN}}$, which contradicts the assumtion that $F$ has the same output on $G$ and $\Bar{G}$. Thus we can conclude that $G\stackrel{W}\sim_2 \Bar{G}$.
	\end{proof}
	
	\begin{proof}[Proof of Theorem~\ref{thm:equiv_2FWL_2FGNN} (a)]
		By Lemma~\ref{lem:same2FWL_same2FGNN} and Lemma~\ref{lem:same2FGNN_same2FWL}.
	\end{proof}

	\begin{proof}[Proof of Theorem~\ref{thm:equiv_2FWL_2FGNN} (b)]
		Apply Theorem~\ref{thm:equiv_2FWL_2FGNN} on $G$ and the graph obtained from $G$ by switching $j_1$ and $j_2$.
	\end{proof}

	\begin{proof}[Proof of Theorem~\ref{thm:equiv_2FWL_2FGNN} (c)]
		Suppose that $G\sim_2 \Bar{G}$. By Theorem~\ref{thm:2FWL_equiv}, there exists some permutation $\sigma_W\in S_n$ such that $G\stackrel{W}\sim_2 \sigma_W\ast \Bar{G}$. For any scalar function $f$ with $f\mathbf{1}\in\calF_{\textup{2-FGNN}}$, by Theorem~\ref{thm:equiv_2FWL_2FGNN}, it holds that $(f\mathbf{1})(G) = (f\mathbf{1})(\sigma_W\ast \Bar{G}) = (f\mathbf{1})(\Bar{G})$, where we used the fact that $f\mathbf{1}$ is permutation-equivariant. We can thus conclude that $f(G) = f(\Bar{G})$.
		
		Now suppose that $G\sim_2 \Bar{G}$ is not true. Then there exist some $L$ and hash functions such that
		\begin{equation*}
			\left\{\left\{ C_L^{VW}(i,j) : i\in V,j\in W \right\}\right\} \neq \left\{\left\{ \Bar{C}_L^{VW}(i,j) : i\in V,j\in W \right\}\right\},
		\end{equation*}
		or
		\begin{equation*}
			\left\{\left\{ C_L^{WW}(j_1,j_2) : j_1,j_2\in W \right\}\right\} \neq \left\{\left\{ \Bar{C}_L^{WW}(j_1,j_2) : j_1,j_2\in W \right\}\right\}.
		\end{equation*}
		By the proof of Lemma~\ref{lem:same2FGNN_same2FWL}, one can construct the $l$-th $2$-FGNN layers inductively for $0\leq l\leq L$, such that the condition (a)-(f) in the proof of Lemma~\ref{lem:same2FGNN_same2FWL} are true. Then we have
		\begin{equation}\label{eq:multiset_sj2}
			\left\{\left\{ s_{ij}^L : i\in V,j\in W \right\}\right\} \neq \left\{\left\{ \Bar{s}_{ij}^L : i\in V,j\in W \right\}\right\},
		\end{equation}
		or
		\begin{equation}\label{eq:multiset_tj2}
			\left\{\left\{ t_{j_1 j_2}^L : j_1,j_2\in W \right\}\right\} \neq \left\{\left\{ \Bar{t}_{j_1 j_2}^L : j_1,j_2\in W \right\}\right\}.
		\end{equation}
		
		We first assume that \eqref{eq:multiset_sj2} is true. Then there exists a continuous function $\varphi$ with
		\begin{equation*}
			\sum_{i\in V,j\in W}\varphi(s_{ij}^L) \neq \sum_{i\in V,j\in W}\varphi(\Bar{s}_{ij}^L).
		\end{equation*}
		Let us construct the $(L+1)$-th layer such that 
		\begin{align*}
			& s_{ij}^{L+1} = p^{L+1}\left(s_{ij}^L,\sum_{j_1\in W} f^{L+1}(t_{j_1j}^L, s_{i j_1}^L)\right) = \sum_{j_1\in W} \varphi(s_{i j_1}^L), \\
			& \Bar{s}_{ij}^{L+1} = p^{L+1}\left(\Bar{s}_{ij}^L,\sum_{j_1\in W} f^{L+1}(\Bar{t}_{j_1j}^L, \Bar{s}_{i j_1}^L)\right) = \sum_{j_1\in W} \varphi(\Bar{s}_{i j_1}^L),
		\end{align*}
		and the output layer with
		\begin{equation*}
			r\left(\sum_{i\in V}s_{ij}^{L+1}, \sum_{j_1\in W} t_{j_1 j}^{L+1}\right) = \sum_{i\in V} \sum_{j_1\in W} \varphi(s_{i j_1}^L) \neq \sum_{i\in V} \sum_{j_1\in W} \varphi(\Bar{s}_{i j_1}^L) = r\left(\sum_{i\in V}\Bar{s}_{ij}^{L+1}, \sum_{j_1\in W} \Bar{t}_{j_1 j}^{L+1}\right),
		\end{equation*}
		which is independent of $j\in W$. This constructs $F\in \calF_{\textup{2-FGNN}}$ of the form $F = f\mathbf{1}$ with $f(G) \neq f(\Bar{G})$. 
		
		Next, we consider the case that \eqref{eq:multiset_tj2} is true. Then 
		\begin{equation}\label{eq:multiset_tj3}
			\left\{\left\{\left\{\left\{ t_{j_1 j_2}^L : j_1\in W \right\}\right\} : j_2\in W\right\}\right\} \neq \left\{\left\{\left\{\left\{ \Bar{t}_{j_1 j_2}^L : j_1\in W \right\}\right\} : j_2\in W \right\}\right\},
		\end{equation}
		and hence there exists some continuous $\psi$ such that
		\begin{equation*}
			\left\{\left\{ \sum_{j_1\in W} \psi(t_{j_1 j_2}^L): j_2\in W\right\}\right\} \neq \left\{\left\{ \sum_{j_1\in W} \psi(\Bar{t}_{j_1 j_2}^L): j_2\in W\right\}\right\}.
		\end{equation*}
		Let us construct the $(L+1)$-th layer such that 
		\begin{align*}
			& s_{ij}^{L+1} = p^{L+1}\left(s_{ij}^L,\sum_{j_1\in W} f^{L+1}(t_{j_1j}^L, s_{i j_1}^L)\right) = \sum_{j_1\in W} \psi(t_{j_1 j}^L), \\
			& \Bar{s}_{ij}^{L+1} = p^{L+1}\left(\Bar{s}_{ij}^L,\sum_{j_1\in W} f^{L+1}(\Bar{t}_{j_1j}^L, \Bar{s}_{i j_1}^L)\right) = \sum_{j_1\in W} \psi(\Bar{t}_{j_1 j}^L),
		\end{align*}
		and we have from \eqref{eq:multiset_tj3} that
		\begin{equation*}
			\left\{\left\{ s_{ij}^{L+1} : i\in V,j\in W \right\}\right\} \neq \left\{\left\{ \Bar{s}_{ij}^{L+1} : i\in V,j\in W \right\}\right\}.
		\end{equation*}
		We can therefore repeat the argument for \eqref{eq:multiset_sj2} and show the existence of $f$ with $f\mathbf{1}\in \calF_{\textup{2-FGNN}}$ and $f(G)\neq f(\Bar{G})$. The proof is hence completed.
	\end{proof}
	
	\subsection{Proof of Theorem~\ref{thm:exist_2FGNN}}
	\label{sec:pf_exist_2FGNN}

We finalize the proof of Theorem~\ref{thm:exist_2FGNN} in this subsection. Combining Theorem~\ref{mainthm:2FWL_SB} and Theorem~\ref{thm:equiv_2FWL_2FGNN}, one can conclude that the separation power of $\calF_{\textup{2-FGNN}}$ is stronger than or equal to that of SB scores. Hence, we can apply the Stone-Weierstrass-type theorem to prove Theorem~\ref{thm:exist_2FGNN}

	\begin{theorem}[Generalized Stone-Weierstrass theorem {\cite{azizian2020expressive}}]\label{thm:equivariant_stone_weierstrass}
		Let $X$ be a compact topology space and let $\mathbf{G}$ be a finite group that acts continuously on $X$ and $\bR^n$. Define the collection of all equivariant continuous functions from $X$ to $\bR^n$ as follows:
		\begin{equation*}
			\calC_E(X,\bR^n) = \{F\in \calC(X,\bR^n) : F(g\ast x) = g\ast F(x),\ \forall~x\in X,g\in\mathbf{G}\}.
		\end{equation*}
		Consider any $\calF\subset \calC_E(X,\bR^n)$ and any $\Phi\in \calC_E(X,\bR^n)$. Suppose the following conditions hold:
		\begin{enumerate}[(a)]
			\item $\calF$ is a subalgebra of $\calC(X,\bR^n)$ and $\mathbf{1}\in \calF$, where $\mathbf{1}$ is the constant function whose ouput is always $(1,1,\dots,1)\in\bR^n$.
			\item For any $x,x'\in X$, if $f(x) = f(x')$ holds for any $f\in\calC(X,\bR)$ with $f\mathbf{1}\in \calF$, then for any $F\in \calF$, there exists $g\in \mathbf{G}$ such that $F(x) = g\ast F(x')$.
			\item For any $x,x'\in X$, if $F(x) = F(x')$ holds for any $F\in \calF$, then $\Phi(x) = \Phi(x')$.
			\item For any $x\in X$, it holds that $\Phi(x)_{j_1} = \Phi(x)_{j_2},\ \forall~(j_1,j_2)\in J(x)$, where $$J(x) = \left\{(j_1,j_2)\in\{1,2,\dots,n\}^2: F(x)_{j_1} = F(x)_{j_2},\ \forall~F\in\calF\right\}.$$
		\end{enumerate}
		Then for any $\epsilon>0$, there exists $F\in\calF$ such that
		\begin{equation*}
			\sup_{x\in X}\| F(x) - \Phi(x) \| \leq \epsilon.
		\end{equation*}
	\end{theorem}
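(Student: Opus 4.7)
The plan is to treat this as an equivariant, vector-valued Stone--Weierstrass statement and reduce it to the classical scalar version. First I would extract from $\calF$ the scalar subalgebra $\mathcal{A} = \{f \in \calC(X,\bR) : f\mathbf{1} \in \calF\}$. Hypothesis (a) guarantees that $\mathcal{A}$ is a subalgebra of $\calC(X,\bR)$ containing the constant $1$: products decompose as $(f\mathbf{1})(g\mathbf{1}) = (fg)\mathbf{1}$, sums similarly, and $\mathbf{1} \in \calF$ gives $1 \in \mathcal{A}$.

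Next I would introduce the equivalence relation $x \sim x'$ on $X$ defined by $f(x) = f(x')$ for all $f \in \mathcal{A}$, and form the compact Hausdorff quotient $Y = X/\!\sim$. By construction, $\mathcal{A}$ descends to a point-separating unital subalgebra of $\calC(Y,\bR)$, and the classical Stone--Weierstrass theorem yields density: every continuous function on $X$ that is $\sim$-invariant can be uniformly approximated by elements of $\mathcal{A}$. The task is then to promote this scalar approximation to a vector-valued, equivariant one.

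To pass from scalar invariants to vector approximations I would use hypotheses (b), (c), and (d) to control the shape of $\Phi$ on each $\sim$-class. Hypothesis (b) forces $F(x)$ and $F(x')$ to lie in a common $\mathbf{G}$-orbit for every $F \in \calF$ whenever $x \sim x'$; combined with the finiteness of $\mathbf{G}$ and a standard orbit-refinement argument (via symmetric polynomials in the coordinates of suitable $F \in \calF$, which belong to $\mathcal{A}$ by the subalgebra property), one can produce finitely many scalar functions in $\mathcal{A}$ that together with $\sim$ determine each $F(x)$ up to the stabilizer of $F$. Hypothesis (d) guarantees that $\Phi$ respects the same coordinate-identification pattern as $\calF$, so on each stratum of $Y$ the vector $\Phi(x)$ is determined by finitely many such scalar invariants, each of which can in turn be approximated by elements of $\mathcal{A}$ via the scalar step above.

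The main obstacle, and the most delicate piece, is assembling these local scalar approximations into a single $F \in \calF$ that is uniformly close to $\Phi$. The approach is to combine polynomial expressions in elements of $\calF$ (using the subalgebra property) with a partition-of-unity argument on $Y$ to glue local equivariant approximations into a global one. Condition (c) ensures that wherever all of $\calF$ agrees on two points $\Phi$ also agrees, so the local approximations are automatically compatible under $\sim$; condition (d) guarantees coordinate-consistency of the patched approximation. The technical heart is verifying that the partition-of-unity and polynomial combinations---which preserve continuity and $\mathbf{G}$-equivariance---can be executed \emph{within} $\calF$ rather than in the larger space $\calC_E(X,\bR^n)$, and this is precisely where the hypothesis that $\calF$ is a subalgebra (not merely a subspace) of equivariant continuous functions becomes essential.
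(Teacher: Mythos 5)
First, note that the paper does not prove this statement at all: Theorem~\ref{thm:equivariant_stone_weierstrass} is imported verbatim from \cite{azizian2020expressive} and used as a black box, so there is no internal proof to compare yours against. Your proposal must therefore be judged on its own merits as a proof of the cited result.

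Your outline has the right general shape (quotient by the scalar invariants, classical Stone--Weierstrass on the quotient, then upgrade to the vector-valued equivariant setting), and some of your observations are sound --- for instance, since $\calF$ is a subalgebra of $\calC(X,\bR^n)$ containing $\mathbf{1}$, the value set $\{F(x_0):F\in\calF\}$ is a subalgebra of $\bR^n$ determined by the coordinate partition $J(x_0)$, and condition (d) then guarantees $\Phi(x_0)$ can be matched pointwise. But there is a genuine gap at the step you yourself flag as ``the most delicate piece.'' A partition-of-unity gluing requires, for each $x_0$, an element $F_{x_0}\in\calF$ (or its closure) with $F_{x_0}\approx\Phi$ \emph{uniformly on a neighborhood of the entire $\sim$-class and $\mathbf{G}$-orbit of $x_0$}, not merely at the point $x_0$. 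Conditions (b)--(d) are purely pointwise compatibility statements between the values of $\calF$ and $\Phi$ at pairs of points; converting them into the existence of such local approximants is the entire analytic content of the theorem, and your sketch does not supply a mechanism for it. In particular, the scalar reduction cannot carry the load: the coordinates $\Phi_j$ are generally \emph{not} $\sim$-invariant (for $x\sim x'$ one only has $\Phi(x)=g\ast\Phi(x')$ for some nontrivial $g$), so density of $\mathcal{A}$ in $\calC(Y,\bR)$ says nothing directly about approximating $\Phi$, and your orbit-refinement step concedes that the symmetric-polynomial invariants determine $F(x)$ only ``up to the stabilizer'' --- which is exactly the ambiguity that must be resolved to build the local approximant. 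The argument in \cite{azizian2020expressive} resolves this with a dedicated separation-and-lattice machinery (two-point interpolation within the algebra closure, followed by the compactness/max-min argument of the classical proof), which is not reducible to the quotient-plus-partition-of-unity scheme you describe. As written, your proposal is an accurate map of where the difficulty lies rather than a proof that crosses it.
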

	
	\begin{proof}[Proof of Theorem~\ref{thm:exist_2FGNN}]
		Lemma F.2 and Lemma F.3 in \cite{chen2022representing-lp} prove that the function that maps LP instances to its optimal objective value/optimal solution with the smallest $\ell_2$-norm is Borel measurable. Thus, $\textup{SB}:\calG_{m,n}\supset \textup{SB}^{-1}(\bR^n) \rightarrow \bR^n$ is also Borel measurable, and is hence $\bP$-measurable due to Assumption~\ref{asp:prob}. By Theorem~\ref{thm:lusin} and Assumption~\ref{asp:prob}, there exists a compact subset $X_1\subset \textup{SB}^{-1}(\bR^n)$ such that $\bP[\calG_{m,n}\backslash X_1]\leq \epsilon$ and $\textup{SB}|_{X_1}$ is continuous. For any $\sigma_V\in S_m$ and $\sigma_W\in S_n$, $(\sigma_V,\sigma_W)\ast X_1$ is also compact and $\textup{SB}|_{(\sigma_V,\sigma_W)\ast X_1}$ is also continuous by the permutation-equivariance of $\textup{SB}$. Set
		\begin{equation*}
			X_2 = \bigcup_{\sigma_V\in S_m,\sigma_W\in S_n} (\sigma_V,\sigma_W)\ast X_1.
		\end{equation*}
		Then $X_2$ is permutation-invariant and compact with
		\begin{equation*}
			\bP[\calG_{m,n}\backslash X_2] \leq \bP[\calG_{m,n}\backslash X_1] \leq \epsilon.
		\end{equation*}
		In addition, $\textup{SB}|_{X_2}$ is continuous by pasting lemma.
		
		The rest of the proof is to apply Theorem~\ref{thm:equivariant_stone_weierstrass} for $X=X_2$, $\mathbf{G} = S_m\times S_n$, $\Phi = \textup{SB}$, and $\calF = \calF_{\textup{2-FGNN}}$. We need to verify the four conditions in Theorem~\ref{thm:equivariant_stone_weierstrass}. Condition (a) can be proved by similar arguments as in the proof of Lemma D.2 in \cite{chen2022representing-lp}. Condition (b) follows directly from Theorem~\ref{thm:equiv_2FWL_2FGNN} (a) and (c) and Theorem~\ref{thm:2FWL_equiv}. Condition (c) follows directly from Theorem~\ref{thm:equiv_2FWL_2FGNN}~(a) and Theorem~\ref{mainthm:2FWL_SB} (a). Condition (d) follows directly from Theorem~\ref{thm:equiv_2FWL_2FGNN} (b) and Theorem~\ref{mainthm:2FWL_SB}~(c). According to Theorem~\ref{thm:equivariant_stone_weierstrass}, there exists some $F\in\calF_{\textup{2-FGNN}}$ such that
		\begin{equation*}
			\sup_{G\in X_2} \| F(G) - \textup{SB}(G) \| \leq \delta.
		\end{equation*}
		Therefore, one has
		\begin{equation*}
			\bP[\|F(G) -  \textup{SB}(G)\| > \delta] \leq \bP[\calG_{m,n}\backslash X_2]\leq \epsilon,
		\end{equation*}
		which completes the proof.
	\end{proof}

\section{Extensions of the theoretical results}

This section will explore some extensions of our theoretical results.

\subsection{Extension to other types of SB scores}
\label{sec:other-sb}

The same analysis for Theorem \ref{thm:exist_MPGNN} and Theorem \ref{thm:exist_2FGNN} still works as long as the SB score is a function of $f_{\textup{LP}}^*(G,j,l_j,\hat{u}_j)$, $f_{\textup{LP}}^*(G,j,\hat{l}_j,u_j)$, and $f_{\textup{LP}}^*(G)$:
    \begin{itemize}
        \item We prove in Theorem \ref{thm:sameWL2sameSB} that if two MILP-graphs are indistinguishable by the WL test, then they must be isomorphic and hence have identical SB scores (no matter how we define the SB scores). So Theorem \ref{thm:exist_MPGNN} is still true.
        \item We prove in Theorem \ref{thm:2FWL_SB} that if two MILP-graphs are indistinguishable by 2-FWL test, then they have the same value of $f_{\textup{LP}}^*(G,j,l_j,\hat{u}_j)$ (and $f_{\textup{LP}}^*(G,j,\hat{l}_j,u_j)$). Therefore, Theorem \ref{mainthm:2FWL_SB} still holds if the SB score is a function of $f_{\textup{LP}}^*(G,j,l_j,\hat{u}_j)$, $f_{\textup{LP}}^*(G,j,\hat{l}_j,u_j)$, and $f_{\textup{LP}}^*(G)$, which implies that Theorem \ref{thm:exist_2FGNN} is still true.
    \end{itemize}
    Therefore, Theorems \ref{thm:exist_MPGNN} and \ref{thm:exist_2FGNN} work for both linear product score functions in \cite{dey2024theoretical}. 

\subsection{Extension to varying MILP sizes}
\label{sec:var-m-n}

While Theorems \ref{thm:exist_MPGNN} and \ref{thm:exist_2FGNN} assume MILP sizes $m$ and $n$ are fixed, we now discuss extending these results to data distributions with variable $m$ and $n$.
    
First, our theoretical results can be directly extended to MILP datasets or distributions where $m$ and $n$ vary but remain bounded. Following Lemma 36 in \cite{azizian2020expressive}, if a universal-approximation theorem applies to $\mathcal{G}_{m,n}$ for any fixed $m$ and $n$ (as shown in our work) and at least one GNN can distinguish graphs of different sizes, then the result holds across a disjoint union of finitely many $\mathcal{G}_{m,n}$.
    
If the distribution has unbounded $m$ or $n$, for any $\epsilon>0$, one can always remove a portion of the tail to ensure boundedness in $m$ and $n$. In particular, there always exist large enough $m_0$ and $n_0$ such that $\mathbb{P}[m(G)\leq m_0]\geq 1-\epsilon$ and $\mathbb{P}[n(G)\leq n_0]\geq 1-\epsilon$. The key point is that for any $\epsilon>0$, such $m_0$ and $n_0$ can always be found. Although these values may be large and dependent on $\epsilon$, they are still finite. This allows us to apply the results for the bounded-support case.

Note that the ``tail removal'' technique mentioned above comes from the fact that a probability distribution has a total mass of 1:
\[ 1 = \sum_{n=0}^{\infty}\mathbb{P}[n(G) = n] = \lim_{n_0 \to \infty} \sum_{n=0}^{n_0}\mathbb{P}[n(G) = n] = \lim_{n_0 \to \infty} \mathbb{P}[n(G)\leq n_0]. \]
By the definition of a limit, this clearly implies that for any $\epsilon> 0$, there exists a sufficiently large $n_0$ such that $\mathbb{P}[n(G)\leq n_0]\geq 1-\epsilon$. A similar argument applies to $m$.

\section{Details about numerical experiments}
\label{sec:exp-details}

\paragraph{Random MILP instances generation}
We generate 100 random MILP instances for the experiments in Section~\ref{sec:numerics}. We set $m=6$ and $n=20$, which means each MILP instance contains 6 constraints and 20 variables. The sampling schemes of problem parameters are described below.
\begin{itemize}
    \item The bounds of linear constraints: $b_i\sim\mathcal{N}(0,1)$.
    \item The coefficients of the objective function: $c_j\sim\mathcal{N}(0,1)$.
    \item The non-zero elements in the coefficient matrix: $A_{ij}\sim\mathcal{N}(0,1)$. The coefficient matrix $A$ contains 60 non-zero elements. The positions are sampled randomly.
    \item The lower and upper bounds of variables: $l_j, u_j \sim \mathcal{N}(0,10^2)$. We swap their values if $l_j > u_j$ after sampling.
    \item The constraint types $\circ$ are randomly sampled. Each type ($\leq$, $=$ or $\geq$) occurs with equal probability.
    \item The variable types are randomly sampled. Each type (\textit{continuous} or \textit{integer}) occurs with equal probability.
\end{itemize}

\paragraph{Implementation and training details}
We implement MP-GNN and 2-FGNN with Python~3.6 and TensorFlow~1.15.0~\cite{abadi2016tensorflow}. Our implementation is built by extending the MP-GNN implementation of~\cite{gasse2019exact} in \url{https://github.com/ds4dm/learn2branch}. The SB scores of randomly generated MILP instances are collected using SCIP~\cite{scip}.

For both GNNs, $p^0, q^0$ are parameterized as linear transformations followed by a non-linear activation function; $\{p^l,q^l,f^l,g^l\}_{l=1}^L$ are parameterized as 3-layer multi-layer perceptrons (MLPs) with respective learnable parameters; and the output mapping $r$ is parameterized as a 2-layer MLP. All layers map their input to a 1024-dimensional vector and use the ReLU activation function. Under these settings, MP-GNN contains 43.0 millions of learnable parameters and 2-FGNN contains 35.7 millions of parameters.

We adopt Adam~\cite{kingma2014adam} to optimize the learnable parameters during training with a learning rate of $10^{-5}$ for all networks. We decay the learning rate to $10^{-6}$ and $10^{-7}$ when the training error reaches $10^{-6}$ and $10^{-12}$ respectively to help with stabilizing the training process.
\end{document}